\theoremstyle{plain}
\newtheorem{theorem}{Theorem}[section]
\newtheorem{proposition}[theorem]{Proposition}
\newtheorem{lemma}[theorem]{Lemma}
\theoremstyle{definition}
\newtheorem{definition}[theorem]{Definition}
\newtheorem{assumption}[theorem]{Assumption}
\theoremstyle{remark}
\newtheorem{remark}[theorem]{Remark}
\icmltitlerunning{An Improved Finite-time Analysis of Temporal Difference Learning with Deep Neural Networks}
\begin{document}

\twocolumn[
\icmltitle{An Improved Finite-time Analysis of Temporal Difference \\ 
   Learning with Deep Neural Networks}




\begin{icmlauthorlist}
\icmlauthor{Zhifa Ke}{zfke1}
\icmlauthor{Zaiwen Wen}{wen1}
\icmlauthor{Junyu Zhang}{zjy1}
\end{icmlauthorlist}

\icmlaffiliation{zfke1}{Center for Data Science, Peking University, China.}
\icmlaffiliation{zjy1}{Department of Industrial Systems Engineering and Management, National University of Singapore, Singapore}
\icmlaffiliation{wen1}{Beijing International Center for Mathematical Research, Center for Machine Learning Research and Changsha Institute for Computing and Digital Economy, Beijing, China}

\icmlcorrespondingauthor{Zhifa Ke}{kezhifa@stu.pku.edu.cn}

\icmlcorrespondingauthor{Zaiwen Wen}{wenzw@pku.edu.cn}

\icmlcorrespondingauthor{Junyu Zhang}{junyuz@nus.edu.sg}

\icmlkeywords{Machine Learning, ICML}

\vskip 0.3in
]



\printAffiliationsAndNotice{}  

\begin{abstract}
Temporal difference (TD) learning algorithms with neural network function parameterization have well-established empirical success in many practical large-scale reinforcement learning tasks. However, theoretical understanding of these algorithms remains challenging due to the nonlinearity of the action-value approximation. In this paper, we develop an improved non-asymptotic analysis of the neural TD method with a general $L$-layer neural network. New proof techniques are developed and an improved new $\tilde{\mathcal{O}}(\epsilon^{-1})$ sample complexity is derived. To our best knowledge, this is the first finite-time analysis of neural TD that achieves an $\tilde{\mathcal{O}}(\epsilon^{-1})$ complexity under the Markovian sampling, as opposed to the best known $\tilde{\mathcal{O}}(\epsilon^{-2})$ complexity in the existing literature.
\end{abstract}

\section{Introduction}
\setlength{\parindent}{0pt}
The temporal difference (TD) learning method, firstly designed for policy evaluation \cite{sutton1988learning}, is a fundamental building block of many popular Reinforcement Learning (RL) algorithms. In standard TD learning algorithms for tabular MDP, based on the Bellman operator, the agent iteratively obtains a state-action-reward-transition tuple and then updates the Q values by a weighted average of the current value and the TD target. Once the algorithm converges, the Q function is considered to be the final return obtained by executing the target policy given some initial action-state pair.

For large-scale reinforcement learning (RL) problems,  appropriate parameterization of the Q function is crucial for better scalability of the TD algorithms. Common examples include linear \cite{tesauro1995temporal}, general smooth nonlinear \cite{maei2009convergent}, and neural network \cite{mnih2013playing} function approximations. However, it is well known that the naive extension of TD learning and Q-learning algorithms can diverge under the general function approximation \citet{tsitsklis}. To encourage convergence, numerous variants of TD and Q-learning have been proposed, including Least-squares TD (LSTD) \cite{bradtke1996linear, boyan2002technical} and gradient TD (GTD) \cite{sutton2009fast,sutton2009convergent}, to name a few.

The applications of neural network function approximation have witnessed huge empirical success in many real-world tasks, including Deep Q-network (DQN) algorithms \cite{mnih2013playing, van2016deep}, policy improvement method \cite{sutton1999policy}, trust region policy optimization \cite{schulman2015trust} and the actor-critic algorithms \cite{konda1999actor,lillicrap2015continuous,fujimoto2018addressing}, etc. However, due to the analysis difficulties brought by the function approximation, a significant gap exists between the empirical success and the theoretical understanding of these algorithms. Hence analyzing the convergence and sample complexity of TD learning and Q-learning under various Q function parameterizations has always been an active topic in the RL community during the past decades.

Early works focus on the asymptotic convergence of the algorithms with tabular or linear function approximation. For the tabular (stochastic) TD or Q learning method,  \citet{jaakkola1993convergence} established the asymptotic convergence for the first time. Later on, the asymptotic convergence of algorithms with linear function approximation has been extensively discussed using ODE-based methods, see e.g. \citet{tsitsklis, perkins2002existence, borkar2009sto}. 
Meanwhile, in contrast to the convergent results for RL algorithms under the tabular or linear settings, TD with nonlinear function approximation is known to diverge in general \citet{tsitsklis,  brandfonbrener2019geo}. To overcome this issue, \citet{maei2009convergent} proposed to optimize the Mean Squared Projected Bellman Error (MSPBE) via a gradient-based algorithm. Due to the problem nonconvexity,
only asymptotic convergence to stationary points can be guaranteed. 

More recently, benefiting from the improved techniques for analyzing stochastic optimization algorithms, there has been a growing number of research on providing finite-time analysis for TD and Q-learning algorithms with function approximations. 

For linear function approximation, the non-asymptotic results of TD learning and its variants are relatively well-understood, including TD \citet{bhandari2018finite, dalal2018finite, zou2019finite}, gradient TD \citet{dalal2018finite, touati2018convergent, liu2020finite}, and Least-Squares TD \citet{lazaric2010finite, prashanth2014fast, tagorti2015rate}, etc. In particular, \citet{bhandari2018finite}  established the first finite-time analysis of linear Q-learning under both i.i.d. sampling and Markovian sampling settings.

For neural network function approximation, which is directly related to this paper, we provide a more detailed discussion. Based on the recent advances in the understanding of optimizing ReLU network  \citet{jacot2018neural, du2018gradient, allen2019learning, allen2019convergence, cao2019generalization, cao2020generalization}, a few recent works have successfully developed the finite-time analysis of the neural TD and neural Q-learning algorithms, as long as the Q network is sufficiently wide. Let $Q^*$ be the true action-value function and let $Q(s,a;\boldsymbol{\theta})$ denote the action-value function parameterized by a neural network with weights $\boldsymbol{\theta}$, at any state action pair $(s,a)$. Then we aim to find some $\epsilon$-optimal parameter $\Bar{\boldsymbol{\theta}}$ such that $\mathbb{E}\big[(Q(s,a;\Bar{\boldsymbol{\theta}})-Q^*(s,a))^2\big]\leq\epsilon+\epsilon_\mathcal{F}$, where the expectation is taken over the possible randomness in the output $\bar{\theta}$ as well as the distribution over the state-action pairs $(s,a)$, and $\epsilon_\mathcal{F}$ is the optimal approximation error of the parameterization function class. In \cite{xu2020finite}, a neural Q-learning algorithm with a general $L$-layer ReLU network is analyzed, and an $\tilde{\mathcal{O}}(\epsilon^{-2})$
sample complexity is guaranteed given that the network is sufficiently wide. In \cite{cai2023neural}, the authors studied both the neural TD learning and neural Q-learning algorithms for minimizing the MSPBE for policy evaluation and policy optimization, respectively. For policy evaluation, the $Q^*$ in the definition of an $\epsilon$-optimal solution is defaulted as $Q^\pi$ with $\pi$ being the policy to be evaluated. For both cases, an $\tilde{\mathcal{O}}(\epsilon^{-2})$ sample complexity is guaranteed for wide two-layer ReLU networks. 
In \cite{sun2022finite}, an $\tilde{O}(\epsilon^{-\frac{2}{2-\alpha}})$ complexity has been achieved by an adaptive neural TD algorithm with multi-layer ReLU networks, where $\alpha\in(0,1]$ is a constant that characterizes the sparsity and decay rate of the stochastic semi-gradients. However, without additional assumption, only an $\tilde{O}(\epsilon^{-2})$ complexity with $\alpha=1$ can be theoretically guaranteed. Finally, for policy evaluation problems, there are also several works that aim at reducing the width of the over-parameterized Q networks in the existing works \cite{tian2022performance,cayci2023sample}. In terms of complexity, both of them requires $\tilde{\mathcal{O}}(\epsilon^{-2})$ samples to obtain an $\epsilon$-optimal solution.

Despite the fact that existing analysis of the neural TD or neural Q-learning algorithms merely provides the $\tilde{\mathcal{O}}(\epsilon^{-2})$ sample complexity under various settings, an $\tilde{\mathcal{O}}(\epsilon^{-1})$ sample complexity should be expected. In fact, a double-loop fitted Q-iteration (FQI) method \cite{fan2020theoretical} and its single-loop Gauss-Newton variant \cite{ke2023provably} can achieve an $\tilde{\mathcal{O}}(\epsilon^{-1})$ sample complexity is obtained for two-layer Q networks.  Let $\mathcal{T}$ be the Bellman (optimality) operator, then the FQI method repeatedly solves a nonlinear least square subproblem to obtain the next iteration: $\boldsymbol{\theta}_{k+1}\approx\mathop{\mathrm{argmin}}_{\boldsymbol{\theta}\in\Theta} \mathbb{E}\big[(Q(s,a;\boldsymbol{\theta})-\mathcal{T}Q(s,a;\boldsymbol{\theta}_k))^2\big]$. Compared to the single-loop neural TD or neural Q-learning method that takes only one sample (or a mini-batch) to update the weights of Q networks, the update scheme of FQI requires repeatedly solving a subproblem to sufficiently high accuracy to enable convergence, which makes it inefficient and less favorable in practice. Therefore, we would like to raise a question: \vspace{0.2cm}
\begin{mdframed}[leftmargin=1cm,rightmargin=1cm, backgroundcolor=gray!10]
\emph{Can we improve the existing analysis of the neural 
temporal difference learning algorithm and obtain an $\tilde{\mathcal{O}}(\epsilon^{-1})$ sample complexity under general multi-layer Q neural networks?}
\end{mdframed}

To answer this question, we revisit the convergence analysis of the neural TD learning or Q-learning algorithms under the non-i.i.d. Markovian observations where a general $L$-layer neural network is used for Q function parameterization. By proposing a new subspace analysis technique, under suitable conditions, we derive a brand new $\tilde{\mathcal{O}}(\epsilon^{-1})$ sample complexity for neural TD learning or Q-learning, improving the state-of-the-art $\tilde{\mathcal{O}}(\epsilon^{-2})$ sample complexity in the existing works. Our contributions are summarized as follows.

\begin{table*}[t]
\vspace{-0.1cm} 
\begin{center}
\tabcolsep=0.1cm
\begin{small}
\begin{tabular}{lcccccr}
\toprule
\  & Neural Approximation & Network Depth & Network Width & Activation & Sample Complexity \\
\midrule
\cite{bhandari2018finite}   & No & NA & NA & NA & $\mathcal{O}(1/\epsilon)$ \\
\cite{cai2023neural}   & Yes & 2 & $\Omega(1/\epsilon^4)$ & ReLu & $\mathcal{O}(1/\epsilon^2)$ \\
\cite{xu2020finite}   & Yes & $L$ & $\Omega(1/\epsilon^6)$ & ReLu & $\mathcal{O}(1/\epsilon^2)$ \\
\cite{sun2022finite}  & Yes & $L$ & $\Omega(1/\epsilon^6)$ & ReLu & $\mathcal{O}(1/\epsilon^{\frac{2}{2-\alpha}}), \alpha\in(0,1]$ \\
\cite{tian2022performance}  & Yes & $L$ & $\Omega(1/\epsilon^2)$ & ELU, GeLU & $\mathcal{O}(1/\epsilon^2)$ \\
Ours  & Yes & $L$ & $\Omega(1/\epsilon^2)$ & ELU, GeLU & \textbf{$\mathcal{O}(1/\epsilon)$} \\
\bottomrule
\end{tabular}
\caption{Sample complexity for parameterized Q learning to find some $\bar{\boldsymbol{\theta}}$ such that $\mathbb{E}\left[\|Q(s,a;\bar{\boldsymbol{\theta}})-Q^*(s,a)\|_\mu^2\right]\leq \varepsilon$, where $\|f\|_\mu^2:=\int |f|^2 d \mu$ and $Q^*(s,a)$ satisfies the Bellman optimality equation $Q^*(s,a)=\mathcal{T} Q^*(s,a)$.}
\label{tab:complexity}
\end{small}
\end{center}
\end{table*}

\begin{itemize}[leftmargin=*]
\item Under the non-i.i.d. Markovian sampling setting, we derive an $\tilde{\mathcal{O}}(\epsilon^{-1})$ sample complexity for both neural TD learning and Q-learning methods under the multi-layer network approximation for Q functions. Our result also improves the best known $\tilde{\mathcal{O}}(\epsilon^{-2})$ sample complexity in the existing works. 
\item Based on our newly developed techniques, we further provide a finite-sample analysis for a minimax neural Q-learning algorithm that solves two-player zero-sum Markov games. An $\tilde{\mathcal{O}}(\epsilon^{-1})$ sample complexity is obtained under the non-i.i.d. Markovian sampling setting.
\end{itemize}

Technically, the subspace analysis approach that we propose to establish the $\tilde{O}(\epsilon^{-1})$ sample complexity is by itself of independent interest.  We believe this technique can potentially be applied to linear Q-learning algorithms and linear Actor-Critic algorithms without requiring the positive definiteness assumption of the feature covariance matrix \cite{bhandari2018finite, zou2019finite, barakat2022analysis}, while maintaining the $\tilde{O}(\epsilon^{-1})$ complexity.

In summary, we provide a comprehensive comparison between our work and the most related works in their respective settings and sample complexity in Table \ref{tab:complexity}. Our work establishes an optimal sample complexity analysis within a broader contextual framework.

\section{Preliminaries}
\label{sect:prelim}
We consider the infinite-horizon discounted Markov decision process (MDP), which is denoted as $\mathcal{M}=(\mathcal{S}, \mathcal{A}, \mathbb{P}, r, \gamma)$. We consider a general state space $\mathcal{S}$ and a finite action space $\mathcal{A}$. At any state $s\in\mathcal{S}$, if the agent takes an action $a\in\mathcal{A}$, it will receive a reward $r(s,a)\in[-R_{\max}, R_{\max}]$ and transition to the next state $s'\in\mathcal{S}$ with probability $\mathbb{P}(s'|s,a)$. We call $r$ the reward function and $\mathbb{P}$ the transition kernel. Let $\gamma\in(0,1)$ be a discount factor, then an MDP aims to find a sequence of actions $\{a_t\}_{t\geq0}$ to maximize the expected and discounted cumulative reward
$\mathbb{E}\big[\sum_{t=0}^{\infty} \gamma^{t} \!\cdot r(s_t,a_t) | s_{0}\sim\mu\big]$, where $\mu$ is the distribution of the initial state $s_0$.

Let $\Delta_{\mathcal{A}}$ denote the set of all probability distributions over the action space $\mathcal{A}$, and let a policy $\pi:\mathcal{S}\mapsto\Delta_{\mathcal{A}}$ be a mapping that returns a probability distribution $\pi(\cdot|s)\in\Delta_{\mathcal{A}}$ given any state $s\in\mathcal{S}$. If an agent follows a policy $\pi$, then at any state $s_t$, it will act by sampling an action $a_t\sim\pi(\cdot|s_t)$. Therefore, the action-value function (Q-function) under the policy $\pi$ is  
$$Q^{\pi}(s,a):=\mathbb{E}_\pi\left[\sum_{t=0}^{\infty} \gamma^{t} \cdot r(s_t,a_t) | s_{0}=s, a_{0}=a\right], $$
for $\forall (s,a)\in\mathcal{S}\times\mathcal{A}$,
where all actions except $a_0$ are sampled according to $\pi$.
For any mapping $Q: \mathcal{S}\times \mathcal{A} \rightarrow \mathbb{R}$, let the Bellman operator $\mathcal{T}^\pi$ be
$$
\begin{aligned}
  \mathcal{T}^\pi  Q(s,a):=r(s, a)+\gamma & \mathbb{E}\left[Q(s', a')\mid s'\sim \mathbb{P}(\cdot\mid s,a),\right. \\
  &\left.a'\sim \pi(\cdot\mid s')\right], \,\,\forall s,a.
\end{aligned}
$$
Then $\mathcal{T}^\pi$ is a $\gamma$-contraction under the infinity norm and $Q^\pi$ is the unique solution to the  fixed-point equation $Q=\mathcal{T}^\pi Q$ \cite{bertsekas2012dynamic}. If the Q function is parameterized by some function $Q(s,a;\boldsymbol{\theta})$ to gain better scalability for large-scale RL problems, popular approaches for finding a good $\boldsymbol{\theta}$ include minimizing the 
the Mean-Squared Bellman Error (MSBE):  
\begin{equation}
\label{eq:msbe}
\min_{\boldsymbol{\theta}\in\Theta} \mathbb{E}_{(s,a)\sim\mu} \Big[\left(Q(s,a;\boldsymbol{\theta})-\mathcal{T}^\pi Q(s,a;\boldsymbol{\theta})\right)^2\Big],
\end{equation}
and minimizing the Mean-Squared Projected Bellman Error (MSPBE):
\begin{equation}
\label{eq:mspbe}
\min_{\boldsymbol{\theta}\in\Theta}  \mathbb{E}_{(s,a)\sim\mu} \left[\left(Q(s,a;\boldsymbol{\theta})-\Pi_{\mathcal{F}} \mathcal{T}^\pi Q(s,a;\boldsymbol{\theta})\right)^2\right],
\end{equation}
where $\Theta$ is a feasible domain of the parameter $\boldsymbol{\theta}$, $\mu$ is some distribution over state action pairs, and $\Pi_\mathcal{F}$ is the projection onto some function class $\mathcal{F}$. Typical choices of $\mathcal{F}$ includes the Q function parameterization class itself $\mathcal{F}:=\{Q(\cdot;\boldsymbol{\theta}):\theta\in\Theta\}$ \cite{maei2009convergent}, and some local linearization of the parameterization function class \cite{cai2023neural}.

In this paper, we study the neural temporal difference 
learning method where the action-value function is parameterized by some multi-layer neural network. Let us define a feedforward neural network by the following recursion:
\begin{equation}
\label{eq:nn-layer}
\boldsymbol{x}^{(l)}=\frac{1}{\sqrt{m}} \sigma \left(\boldsymbol{W}_{l}\boldsymbol{x}^{(l-1)}\right), \quad l\in \{1,2,\cdots,L\},
\end{equation}
where $\boldsymbol{W}_1\in\mathbb{R}^{m\times d}$, $\boldsymbol{W}_{l}\in\mathbb{R}^{m\times m}$ for $2\leq l\leq L$ are the weight matrices of the network, $\sigma(\cdot)$ is  
an activation function, and the input is a feature map $\boldsymbol{x}^{(0)}=\phi(s,a)\in\mathbb{R}^d$ for any state action pair $(s,a)$. For simplicity of notation, we write $\boldsymbol{x}=\boldsymbol{x}^{(0)}$, then $Q(s,a)$ is parameterized by
\begin{equation}
\label{eq:nn}
Q(\boldsymbol{x};\boldsymbol{\theta})=\frac{1}{\sqrt{m}} \boldsymbol{b}^\top \boldsymbol{x}^{(L)} , 
\end{equation}
where the parameter $\boldsymbol{\theta}=\left(\mbox{Vec}(\boldsymbol{W}_1);\cdots; \mbox{Vec}(\boldsymbol{W}_L)\right)$ denotes the collection of all weight matrices, and $\boldsymbol{b}$ is given by a random initialization. $\mbox{Vec}(\cdot)$ stands for the vetorization operator that reshapes a matrix to a column vector by stacking its columns one by one and the ``;'' separator in $\boldsymbol{\theta}$ stands for the vertical stacking of the elements. That is, we reshape $\boldsymbol{\theta}$ to a long column vector for the notational convenience in later discussion. 

\begin{assumption}
\label{as:activation}
The activation function $\sigma(\cdot)$ is $L_1$-Lipschitz and $L_2$-smooth, i.e. , for $\forall y_1, y_2\in \mathbb{R}:$
$$
\left|\sigma(y_1)-\sigma(y_2)\right| \leq L_1 |y_1-y_2|
$$
and 
$$
\left|\sigma'(y_1)-\sigma'(y_2)\right| \leq L_2 |y_1-y_2|.
$$
\end{assumption}
Assumption \ref{as:activation} indicates that our results below are not based on the popular ReLU activation function. However, we primarily focus on some twice-differentiable activation functions (such as Sigmoid, ELU, GeLU, etc.), which are smooth approximations of the ReLU function and are frequently utilized in practical problems \cite{devlin2018bert, godfrey2019evaluation}. Such a setup aligns with \cite{liu2020linearity}, and provides a $\mathcal{O}(m^{-\frac{1}{2}})$-smooth property for the neural Q-function.

Let $\boldsymbol{\theta}^0=\left(\mbox{Vec}(\boldsymbol{W}_1^0);\cdots; \mbox{Vec}(\boldsymbol{W}_L^0)\right)$ be the initial solution. For each $l$, we initialize the weights of $\boldsymbol{W}_l^0$ 
element-wise from a normal distribution $\mathcal{N}(0,1)$ and each element of $\boldsymbol{b}$ is drawn uniformly from $\{-1,+1\}$. The parameter $\boldsymbol{b}$ will not be optimized during training. For regularity purpose, we would like to restrict the iterations to a bounded set around $\boldsymbol{\theta}^0$, which is defined as 
$$
\begin{aligned}
S_\omega:=&\left\{\boldsymbol{\theta}=\left(\mbox{Vec}\left(\boldsymbol{W}_1\right);\cdots; \mbox{Vec}(\boldsymbol{W}_L)\right): \right. \\
&\left. \|\boldsymbol{\theta}-\boldsymbol{\theta}^0\|_2 \leq \omega, 1\leq l\leq L\right\}.
\end{aligned}
$$
In each iteration $t$, the neural Q-learning algorithm obtains a sample of state-action-reward-transition tuple $(s_t,a_t,r_t,s_{t+1},a_{t+1})$ and computes the TD error by  
\begin{equation}
    \label{eqn:td-error-term}
    \Delta_t=Q(\boldsymbol{x}_t;\boldsymbol{\theta}^t)-\Big(r_t+\gamma Q(\boldsymbol{x}_{t+1};\boldsymbol{\theta}^t)\Big)
\end{equation}
with $\boldsymbol{x}_t=\phi(s_t, a_t), \boldsymbol{x}_{t+1}=\phi(s_{t+1},a_{t+1}).$
Then a projected stochastic semi-gradient step is performed to update the weight matrices:
\begin{equation}
\label{eq:semi-grad} 
\boldsymbol{\theta}^{t+1}=\Pi_{S_\omega}\Big(\boldsymbol{\theta}^t-\eta_t \boldsymbol{g}\left(\boldsymbol{\theta}^t\right)\!\!\Big)
\end{equation}
with 
$$
\boldsymbol{g}(\boldsymbol{\theta}^t)= \Delta_t\cdot \nabla_{\boldsymbol{\theta}}  Q(\boldsymbol{x}_t;\boldsymbol{\theta}^t). 
$$
We formally describe the  neural TD learning method in Algorithm \ref{alg:Neural-TD}.

\begin{algorithm}[H]
\caption{Neural Temporal Difference Learning with Markovian Sampling}
\label{alg:Neural-TD}
\begin{algorithmic}
\STATE \textbf{Input:} A learning policy $\pi$, a discount factor $\gamma\in(0,1)$, a sequence of learning rates $\{\eta_t\}_{t\geq0}$, a maximum iteration number $T$, a projection radius $\omega>0$, a Q network with architecture \eqref{eq:nn}.\vspace{0.05cm}
\STATE \textbf{Initialization:} Generate each entry of $\boldsymbol{W}_l^0$ independently from $\mathcal{N}(0, 1)$, for $l=1,2,\cdots, L$, and each entry of $\boldsymbol{b}$ independently from $\text{Unif} \{-1,+1\}$. Generate $s_0\sim\mu, a_0\sim\pi(\cdot|s_0)$.
\vspace{0.05cm}
\FOR{$t=0,1,\cdots,T-1$}
\STATE Sample $(s_t,a_t,r_t,s_{t+1},a_{t+1})$ from the learning policy $\pi$ with $a_{t+1}\sim\pi(\cdot|s_{t+1})$.
\STATE Compute the TD error $\Delta_t$ by \eqref{eqn:td-error-term}.
\STATE Update $\boldsymbol{\theta}^{t+1}$ by the projected stochastic semi-gradient step \eqref{eq:semi-grad}.
\ENDFOR
\vspace{0.05cm}
\STATE \textbf{Output:} $\boldsymbol{\theta}^T$.
\end{algorithmic}
\end{algorithm}

One remark is that, under the non-i.i.d. Markovian sampling setting, the agent is only able to generate a trajectory of samples following some given learning policy $\pi$, which is very common in the offline RL \cite{wu2019behavior, levine2020offline, kostrikov2021offline} where the data trajectories are generated by some learning policy.

In later sections, we will revisit the Algorithm \ref{alg:Neural-TD} and design a novel subspace analysis technique for this method and achieve an improved sample complexity of $\tilde{\mathcal{O}}(\epsilon^{-1})$. Moreover, by replacing the TD error induced by the Bellman operator (\ref{eqn:td-error-term}) with TD error induced by the Bellman optimality operator: $\Delta_t=Q(\boldsymbol{x}_t;\boldsymbol{\theta}^t)-\big(r_t+\gamma \max_{b\in\mathcal{A}}Q(s',b;\boldsymbol{\theta}^t)\big)$, Algorithm \ref{alg:Neural-TD} can be reduced to the neural Q-learning method for finding optimal state-action value $Q^*$. Our analysis for neural TD learning can be extended to the neural Q-learning analogously and obtain the same $\tilde{\mathcal{O}}(\epsilon^{-1})$ sample complexity.

\section{Convergence of Neural Temporal Difference Learning}
\label{section:conv}
\subsection{Basic Settings and Assumptions}
\label{sect:base-setting}
To analyze Algorithm \ref{alg:Neural-TD}, let us first define the local linearization function class of the multi-layer Q network \eqref{eq:nn} at the random initialization $\boldsymbol{\theta}^0$:
\begin{equation}
    \label{defn:local-linearization}
    \mathcal{F}_{\omega, m}:=\left\{\widehat{Q}(\cdot\,;\boldsymbol{\theta})=Q(\cdot\,;\boldsymbol{\theta}^0)+\left<\nabla_{\boldsymbol{\theta}} Q(\cdot\,;\boldsymbol{\theta}^0), \boldsymbol{\theta}-\boldsymbol{\theta}^0\right> \right\}
\end{equation} 
for any $\boldsymbol{\theta} \in S_\omega$.
Consider the MSPBE minimization problem: 
\begin{equation}
\label{eq:mspbe-true}
\min_{\boldsymbol{\theta}\in S_\omega} \mathbb{E}_{\mu,\pi,\mathbb{P}} \left[\left(Q(\boldsymbol{x};\boldsymbol{\theta})-\Pi_{\mathcal{F}_{\omega, m}} \mathcal{T}^\pi Q(\boldsymbol{x};\boldsymbol{\theta})\right)^2\right],
\end{equation}
where $\mu$ is the initial state distribution, $\pi$ is the learning policy, and $\mathbb{P}$ is the transition kernel, the expectation $\mathbb{E}_{\mu,\pi,\mathbb{P}}[\cdot]$ is taken over $s\sim\mu$, $a\sim\pi(\cdot|s)$, and  $s'\sim\mathbb{P}(\cdot|s,a), a'\sim\pi(\cdot|s')$ in $\mathcal{T}^\pi$. Define the set $\Xi_\beta$ as
\begin{align}
\label{defn:mspbe-fixedpoint}
\Xi_\beta := \left\{\boldsymbol{\theta}\in S_\beta :\right.&\left.\widehat{Q}(\boldsymbol{x};\boldsymbol{\theta})=\Pi_{\mathcal{F}_{\omega,m}}\mathcal{T}^\pi\widehat{Q}(\boldsymbol{x};\boldsymbol{\theta}),\right.\nonumber \\
&\left. \forall \boldsymbol{x}=\phi(s,a)  \right\}.
\end{align}
Then the set $\Xi_\omega$ consists of the points $\boldsymbol{\theta}$ with which $\widehat{Q}(\cdot\,;\boldsymbol{\theta})$ forms a fixed point of the projected Bellman operator $\Pi_{\mathcal{F}_{\omega, m}} \mathcal{T}^\pi$ for the problem \eqref{eq:mspbe-true}. By 
Section 4.1 in \cite{cai2023neural}, the fixed point of $\Pi_{\mathcal{F}_{\omega, m}} \mathcal{T}^\pi$ is unique for $\boldsymbol{\theta}\in S_\omega$. Therefore, the following relationship holds  
\begin{equation}
    \label{eq:fixed-point-unique}
    \widehat{Q}(\boldsymbol{x};\boldsymbol{\theta}) = \widehat{Q}(\boldsymbol{x};\boldsymbol{\theta}').
\end{equation}
for $\forall \boldsymbol{x}=\phi(s,a), \,\, \forall (s,a)\in\mathcal{S}\times\mathcal{A}, \,\,\forall \boldsymbol{\theta},\boldsymbol{\theta}'\in\Xi_\beta, \forall \beta\geq \omega.$
Moreover, it is also shown that a point $\boldsymbol{\theta}^*\in\Xi_\omega$ if and only if it satisfies the stationarity condition:
\begin{equation}
\label{eq:optimal}
\mathbb{E}_{\mu,\pi,\mathbb{P}}\left[\widehat{\Delta}\left(\boldsymbol{x}, \boldsymbol{x}' ; \boldsymbol{\theta}^*\right)\big\langle\nabla_{\boldsymbol{\theta}} \widehat{Q}\left(\boldsymbol{x};\boldsymbol{\theta}^* \right), \boldsymbol{\theta}-\boldsymbol{\theta}^*\big\rangle\right] \geq 0,
\end{equation}
where $\widehat{Q}(\cdot\,;\boldsymbol{\theta}^*)\in\mathcal{F}_{\omega,m}$ is a local linearization provided by \eqref{defn:local-linearization} and  $\widehat{\Delta}$ is defined as
$$
\widehat{\Delta}\left(\boldsymbol{x}, \boldsymbol{x}' ; \boldsymbol{\theta}^*\right)=\widehat{Q}(\boldsymbol{x};\boldsymbol{\theta}^*)-\Big(r(s, a)+\gamma \widehat{Q}\left(\boldsymbol{x}';\boldsymbol{\theta}^*\right)\!\Big).$$
Hence people may analyze the gap between $Q^\pi(\cdot)$ and  $Q(\cdot,;\boldsymbol{\theta}^T)$ by first connecting it to $\widehat{Q}(\cdot\,;\boldsymbol{\theta}^*)$. Based on this, \citet{cai2023neural} derived an $\tilde{\mathcal{O}}(\epsilon^{-2})$ sample complexity for the neural TD method. Now we define 
\begin{eqnarray}
\label{eq:feature-matrix}
\Sigma_\pi&=&\mathbb{E}_{\mu, \pi}\left[\nabla_{\boldsymbol{\theta}}Q(\boldsymbol{x};\boldsymbol{\theta}^0)\nabla_{\boldsymbol{\theta}}Q(\boldsymbol{x};\boldsymbol{\theta}^0)^\top\right].
\end{eqnarray}
It is worth noting that the matrix $\Sigma_\pi$ only depends on $\pi$ and $\boldsymbol{\theta}^0$. In the original assumption about \eqref{eq:feature-matrix}, \cite{zou2019finite, xu2020finite} in fact assumed positive definiteness ($\succ0$) of $\Sigma_\pi$, which can be viewed as a generalized version of the positive definite feature covariance matrix assumption in the analysis of linear TD and linear Q-learning, see e.g. \cite{zou2019finite}.  However, in this paper we adopt the following weaker regularity assumption.

\begin{assumption}
\label{as:lamda}
Let $\overline{\sigma_{\min}}(\Sigma_\pi)$ denote the minimum non-zero singular value of the matrix $\Sigma_\pi$, then there exist constants $\lambda_0,m^*>0$ such that $\overline{\sigma_{\min}}(\Sigma_\pi)\geq\lambda_0$ as long as the Q network width $m\geq m^*$.
\end{assumption}
For neural Q function approximation, a sufficient but not necessary condition for Assumption \ref{as:lamda} can be obtained by exploiting the theory of over-parameterized neural networks. Roughly speaking, for a finite MDP with an $L$-layer ReLU Q network, if the feature map satisfies $\phi(s,a)\nparallel\phi(s',a')$ for $\forall(s,a)\neq(s',a')$, the results of \cite{jacot2018neural, allen2019learning, allen2019convergence, cao2019generalization, cao2020generalization} suggest that there exist $\lambda',m^*>0$ such that with high probability $\mathrm{Gram}(\boldsymbol{\theta}_0)\succ\lambda'\!\cdot\!\mathbf{I}$ for networks with width $m\geq m^*.$ Here $\mathrm{Gram}(\boldsymbol{\theta}_0)$ stands for the Gram matrix of the network at the initialization $\boldsymbol{\theta}_0$. A lower bound on $\overline{\sigma_{\min}}(\Sigma_\pi)$ can then be constructed with $\lambda'$, refer to Remark \ref{remark:ntk} in Appendix \ref{appendix:nn}.

Finally, to facilitate the sample complexity analysis under the non-i.i.d. Markovian sampling setting, let us make the following assumption on the fast mixing rate of the MDP sample trajectories, which is widely adopted in the related analysis \cite{zou2019finite, xu2020finite, cai2023neural}. 

\begin{assumption}
\label{as:markov}
We assume that the Markov chain $\left\{s_t\right\}_{t=0,1, \ldots}$ induced by the learning policy $\pi$ and the transition kernel $\mathbb{P}$ is uniformly ergodic with its invariant measure $\mathbb{P}^\pi$. Furthermore, we assume that there are constants $\kappa>0, \rho\in(0,1)$ such that
$$
\sup _{s \in \mathcal{S}} d_{T V}\left(\mathbb{P}\left(s_t \in \cdot \mid s_0=s\right), \mathbb{P}^{\pi}\right) \leq \kappa \rho^t
$$
for all $t\geq0.$
\end{assumption}

Without loss of generality, we also make the following technical assumption, which is not fundamental as opposed to Assumption \ref{as:lamda}, and \ref{as:markov}. 
\begin{assumption}
    \label{as:technical}
    We assume the initial state distribution $\mu$ to be the stationary state distribution under policy $\pi$. 
\end{assumption}
This assumption is in fact very natural. 
Concerning the stationarity of $\mu$, it can always be guaranteed by abandoning the first $\tilde{\mathcal{O}}(t_{\mathrm{mix}})$ samples while Assumption \ref{as:markov} indicates that the mixing time $t_{\mathrm{mix}}=\tilde{\mathcal{O}}(1)$. This assumption guarantees that the operator $\mathcal{T}^\pi$ is $\gamma$-contractive w.r.t. $\|\cdot\|_\mu$ in policy evaluation. Similar assumptions are included in \cite{bhandari2018finite, cai2023neural}.

\subsection{An Improved Complexity of Neural TD Learning}
\label{sect:improve-ana}
To derive the $\tilde{\mathcal{O}}(\epsilon^{-1})$ sample complexity, we rely on the following key observation on subspace decomposition, which is beyond the existing analysis framework. 
 
\begin{proposition}
\label{prop:optimal}
Let $\mathcal{R}(\Sigma_\pi)$ and $\mathcal{K}(\Sigma_\pi)$ denote the range space and kernel space of the matrix $\Sigma_\pi$, respectively. Then for any parameter $\boldsymbol{\theta}\in S_\omega$, there exists $\boldsymbol{\theta}_*$ such that 
$$\boldsymbol{\theta}_*\in\Xi_{2\omega} \qquad\mbox{and}\qquad \boldsymbol{\theta}-\boldsymbol{\theta}_*\in\mathcal{R}(\Sigma_\pi),$$
which also implies that the projections of $\boldsymbol{\theta}$ and $\boldsymbol{\theta}_*$ onto the subspace $\mathcal{K}(\Sigma_\pi)$ are identical.
\end{proposition}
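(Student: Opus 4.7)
The plan is to exploit the affine structure of the local linearization $\widehat{Q}(\boldsymbol{x};\boldsymbol{\theta})=Q(\boldsymbol{x};\boldsymbol{\theta}^0)+\nabla_{\boldsymbol{\theta}}Q(\boldsymbol{x};\boldsymbol{\theta}^0)^\top(\boldsymbol{\theta}-\boldsymbol{\theta}^0)$ together with the orthogonal decomposition of parameter space as $\mathcal{R}(\Sigma_\pi)\oplus\mathcal{K}(\Sigma_\pi)$. The key observation is that $v\in\mathcal{K}(\Sigma_\pi)$ iff $v^\top\Sigma_\pi v=\mathbb{E}_{\mu,\pi}\bigl[(\nabla_{\boldsymbol{\theta}}Q(\boldsymbol{x};\boldsymbol{\theta}^0)^\top v)^2\bigr]=0$, i.e., iff $\nabla_{\boldsymbol{\theta}}Q(\boldsymbol{x};\boldsymbol{\theta}^0)^\top v=0$ for $\mu\otimes\pi$-a.e.\ $(s,a)$. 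Consequently, translating a parameter along $\mathcal{K}(\Sigma_\pi)$ leaves $\widehat{Q}(\boldsymbol{x};\cdot)$ invariant on the support of the sampling distribution, so it preserves both the stationarity condition \eqref{eq:optimal} and, via the uniqueness result from \citet{cai2023neural}, the fixed-point equation defining $\Xi_\beta$.

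Combining this invariance with the uniqueness of the projected Bellman fixed point in $\widehat{Q}$-space, the set $\Xi_\beta$ can be written as the affine slice $\Xi_\beta=(\boldsymbol{\theta}^\star+\mathcal{K}(\Sigma_\pi))\cap S_\beta$, where $\boldsymbol{\theta}^\star$ is any element of $\Xi_\omega$ (nonemptiness is guaranteed by the cited reference). I would then take $\boldsymbol{\theta}^\star$ to be the orthogonal projection of $\boldsymbol{\theta}^0$ onto this affine subspace. This choice guarantees simultaneously that $\boldsymbol{\theta}^\star-\boldsymbol{\theta}^0\in\mathcal{R}(\Sigma_\pi)$ and that $\|\boldsymbol{\theta}^\star-\boldsymbol{\theta}^0\|\leq\omega$, the latter because the affine subspace already meets $S_\omega$ by nonemptiness of $\Xi_\omega$. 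Next, I decompose $\boldsymbol{\theta}-\boldsymbol{\theta}^0=u+w$ orthogonally with $u\in\mathcal{R}(\Sigma_\pi)$ and $w\in\mathcal{K}(\Sigma_\pi)$, and define $\boldsymbol{\theta}_*:=\boldsymbol{\theta}^\star+w$.

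The verification reduces to two items. First, since $\boldsymbol{\theta}_*-\boldsymbol{\theta}^\star=w\in\mathcal{K}(\Sigma_\pi)$, the paragraph-one invariance yields $\widehat{Q}(\cdot\,;\boldsymbol{\theta}_*)=\widehat{Q}(\cdot\,;\boldsymbol{\theta}^\star)$ on the support, so $\boldsymbol{\theta}_*$ inherits the projected Bellman fixed-point property. Applying the Pythagorean theorem to the orthogonal pair $\boldsymbol{\theta}^\star-\boldsymbol{\theta}^0\in\mathcal{R}(\Sigma_\pi)$ and $w\in\mathcal{K}(\Sigma_\pi)$ gives
\begin{equation*}
\|\boldsymbol{\theta}_*-\boldsymbol{\theta}^0\|^2=\|\boldsymbol{\theta}^\star-\boldsymbol{\theta}^0\|^2+\|w\|^2\leq\omega^2+\|\boldsymbol{\theta}-\boldsymbol{\theta}^0\|^2\leq 2\omega^2,
\end{equation*}
so $\|\boldsymbol{\theta}_*-\boldsymbol{\theta}^0\|\leq\sqrt{2}\,\omega\leq 2\omega$, certifying $\boldsymbol{\theta}_*\in\Xi_{2\omega}$. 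Second, $\boldsymbol{\theta}-\boldsymbol{\theta}_*=u-(\boldsymbol{\theta}^\star-\boldsymbol{\theta}^0)$ is a difference of two elements of $\mathcal{R}(\Sigma_\pi)$ and hence lies in $\mathcal{R}(\Sigma_\pi)$, which also forces the $\mathcal{K}(\Sigma_\pi)$-projections of $\boldsymbol{\theta}$ and $\boldsymbol{\theta}_*$ to coincide. The delicate point, and the reason the argument is nontrivial, is that any fixed point not aligned via orthogonality would only yield the crude triangle-inequality bound $\|\boldsymbol{\theta}_*-\boldsymbol{\theta}^0\|\leq 3\omega$; the Pythagorean improvement to $\sqrt{2}\,\omega$ is precisely what necessitates choosing $\boldsymbol{\theta}^\star$ as the nearest-to-$\boldsymbol{\theta}^0$ fixed point so that $\boldsymbol{\theta}^\star-\boldsymbol{\theta}^0\perp\mathcal{K}(\Sigma_\pi)$.
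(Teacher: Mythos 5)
Your construction is correct and, after unwinding the projection of $\boldsymbol{\theta}^0$ onto the affine slice $\bar{\boldsymbol{\theta}}+\mathcal{K}(\Sigma_\pi)$, produces exactly the same point $\boldsymbol{\theta}_*=\bar{\boldsymbol{\theta}}_{\parallel}+\boldsymbol{\theta}_{\bot}$ as the paper, resting on the same key fact that $\boldsymbol{v}\in\mathcal{K}(\Sigma_\pi)$ forces $\langle\nabla_{\boldsymbol{\theta}}Q(\boldsymbol{x};\boldsymbol{\theta}^0),\boldsymbol{v}\rangle=0$ almost surely. The Pythagorean bound $\sqrt{2}\,\omega$ is a minor sharpening of the paper's triangle-inequality bound $2\omega$, but it is not needed (the paper already gets $2\omega$ by applying the triangle inequality directly to the orthogonal decomposition of $\boldsymbol{\theta}_*-\boldsymbol{\theta}^0$, so your closing claim that non-orthogonal alignment would only give $3\omega$ is an overstatement).
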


Based on this argument, for the iteration sequence $\{\boldsymbol{\theta}^t\}_{t\geq0}$ generated by Algorithm \ref{alg:Neural-TD}, there exists a sequence $\{\boldsymbol{\theta}^t_*\}_{t\geq0}\subseteq\Xi_{2\omega}$ such that $\{\boldsymbol{\theta}^t-\boldsymbol{\theta}^t_*\}_{t\geq0}\subseteq\mathcal{R}(\Sigma_\pi)$. Therefore, unlike the existing works that analyze $\|\boldsymbol{\theta}^t-\boldsymbol{\theta}^*\|^2$ for some $\boldsymbol{\theta}^*\in\Xi_\omega$, c.f. \cite{cai2023neural,xu2020finite}, we will prove a much faster convergence in $\|\boldsymbol{\theta}^t-\boldsymbol{\theta}^t_*\|^2$. Combined with \eqref{eq:fixed-point-unique}, this further indicates the improved sample complexity in this paper. The proof of this proposition is presented as follows. 
\begin{proof}
For the ease of discussion, let us denote the dimension of weight parameter $\boldsymbol{\theta}$ as $n$. Then we may denote $\Sigma_\pi\in\mathbb{R}^{n\times n}$ and $\boldsymbol{\theta}\in\mathbb{R}^{n}$. First of all let us fix an arbitrary $\bar{\boldsymbol{\theta}}\in\Xi_\omega$, then we may decompose it into two orthogonal components: 
$$\bar{\boldsymbol{\theta}} = \bar{\boldsymbol{\theta}}_{\parallel}+\bar{\boldsymbol{\theta}}_{\bot}\quad\mbox{s.t.}\quad \bar{\boldsymbol{\theta}}_{\parallel}\in\mathcal{R}(\Sigma_\pi)\mbox{ and } \bar{\boldsymbol{\theta}}_{\bot}\in\mathcal{K}(\Sigma_\pi).$$
Similarly, we can decompose the currently considered vector $\boldsymbol{\theta}$ as 
$$\boldsymbol{\theta} = \boldsymbol{\theta}_{\parallel}+\boldsymbol{\theta}_{\bot}\quad\mbox{s.t.}\quad \boldsymbol{\theta}_{\parallel}\in\mathcal{R}(\Sigma_\pi)\mbox{ and } \boldsymbol{\theta}_{\bot}\in\mathcal{K}(\Sigma_\pi).$$
Note that having an arbitrary vector $\boldsymbol{v}\in\mathbb{R}^n$ in the kernel space of $\Sigma_\pi$ means that $\Sigma_\pi\boldsymbol{v}=0$, which further indicates that  
\begin{eqnarray*}
    0 & = &\boldsymbol{v}^\top\Sigma_\pi\boldsymbol{v} \\
    & = &  \boldsymbol{v}^\top\mathbb{E}_{\mu, \pi}\left[\nabla_{\boldsymbol{\theta}}Q(\boldsymbol{x};\boldsymbol{\theta}^0)\nabla_{\boldsymbol{\theta}}Q(\boldsymbol{x};\boldsymbol{\theta}^0)^\top\right]\boldsymbol{v}\\
    & = &  \mathbb{E}_{\mu, \pi}\left[\left\langle\nabla_{\boldsymbol{\theta}}Q(\boldsymbol{x};\boldsymbol{\theta}^0),\boldsymbol{v}\right\rangle^2\right].
\end{eqnarray*}
Therefore, under the measure $(s,a)\sim\mu\times\pi$, 
we have 
\begin{equation}
    \label{eq:zero-as}
    \boldsymbol{v}\in\mathcal{K}(\Sigma_\pi) \quad\Longrightarrow \quad \left\langle\nabla_{\boldsymbol{\theta}}Q(\boldsymbol{x};\boldsymbol{\theta}^0),\boldsymbol{v}\right\rangle = 0\quad a.s.
\end{equation}  
where $a.s.$ stands for almost surely. Therefore,  define $\boldsymbol{\theta}_*=\bar{\boldsymbol{\theta}}_{\parallel} + \boldsymbol{\theta}_{\bot}$, we can check the stationarity condition \eqref{eq:optimal} for $\boldsymbol{\theta}_*$ by establishing:
\begin{align}
    \label{eq:subspace-remaining} &\mathbb{E}_{\mu,\pi,\mathbb{P}}\left[\widehat{\Delta}\left(\boldsymbol{x}, \boldsymbol{x}' ; \boldsymbol{\theta}_*\right)\cdot\big\langle\nabla_{\boldsymbol{\theta}} \widehat{Q}\left(\boldsymbol{x};\boldsymbol{\theta}_* \right), \boldsymbol{\theta}'-\boldsymbol{\theta}_*\big\rangle\right]\\
    & =\mathbb{E}_{\mu,\pi,\mathbb{P}}\left[\widehat{\Delta}\left(\boldsymbol{x}, \boldsymbol{x}' ; \bar{\boldsymbol{\theta}}\right)\cdot\big\langle\nabla_{\boldsymbol{\theta}} \widehat{Q}\left(\boldsymbol{x};\boldsymbol{\theta}_0 \right), \boldsymbol{\theta}'-\bar{\boldsymbol{\theta}}\big\rangle\right]\geq0 \nonumber
\end{align} 
The proof of \eqref{eq:subspace-remaining} is lengthy and is thus moved to Appendix \ref{sect:proof-prop} for succinctness. As a result we have $\boldsymbol{\theta}_*\in\Xi_{2\omega}$ and $\boldsymbol{\theta}-\boldsymbol{\theta}_*=\bar{\boldsymbol{\theta}}_{\parallel}\in\mathcal{K}(\Sigma_\pi)$. Note that for any $\boldsymbol{\theta}\in S_\omega$,
$$
\begin{aligned}
\|\boldsymbol{\theta}_*-\boldsymbol{\theta}^0\|\leq  &\ \|\bar{\boldsymbol{\theta}}_{\parallel}-\boldsymbol{\theta}^0_{\parallel}\|+\|\boldsymbol{\theta}_{\bot}-\boldsymbol{\theta}^0_{\bot}\| \\
\leq&\ \|\bar{\boldsymbol{\theta}}-\boldsymbol{\theta}^0\|+\|\bar{\boldsymbol{\theta}}-\boldsymbol{\theta}^0\|\leq 2\omega, 
\end{aligned}
$$
which completes the proof.\vspace{-0cm}
\end{proof}

Following basic linear algebra analysis, we also have the following proposition. \vspace{0.2cm}

\begin{proposition}
\label{prop:range-strong-convex}
Under Assumption \ref{as:lamda}, suppose the adopted Q network is sufficiently wide so that $m\geq m^*$, then for any $\boldsymbol{\theta}\in \mathcal{R}(\Sigma_\pi)$, we have 
$\boldsymbol{\theta}^\top \Sigma_\pi \boldsymbol{\theta} \geq \lambda_0 \|\boldsymbol{\theta}\|^2_2$.
\end{proposition}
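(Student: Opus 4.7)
The plan is to exploit the fact that $\Sigma_\pi$ is symmetric positive semidefinite, since by definition it is the expectation of outer products $\nabla_{\boldsymbol{\theta}} Q(\boldsymbol{x};\boldsymbol{\theta}^0)\nabla_{\boldsymbol{\theta}} Q(\boldsymbol{x};\boldsymbol{\theta}^0)^\top$. This symmetry reduces the claim to a standard spectral fact: the Rayleigh quotient restricted to the range of a symmetric PSD matrix is bounded below by its smallest nonzero eigenvalue.

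Concretely, I would first invoke the spectral theorem to write $\Sigma_\pi = U \Lambda U^\top$, where $U$ is orthogonal and $\Lambda = \mathrm{diag}(\lambda_1, \ldots, \lambda_r, 0, \ldots, 0)$ with $\lambda_1 \geq \cdots \geq \lambda_r > 0$ (here $r = \mathrm{rank}(\Sigma_\pi)$). Because $\Sigma_\pi$ is symmetric PSD, its singular values coincide with its eigenvalues, so $\overline{\sigma_{\min}}(\Sigma_\pi) = \lambda_r$, and Assumption \ref{as:lamda} together with $m \geq m^*$ gives $\lambda_r \geq \lambda_0$. Next I would observe that the range space $\mathcal{R}(\Sigma_\pi)$ is exactly the span of the eigenvectors $u_1, \ldots, u_r$ corresponding to the nonzero eigenvalues, since the orthogonal complement of $\mathcal{R}(\Sigma_\pi)$ is $\mathcal{K}(\Sigma_\pi^\top) = \mathcal{K}(\Sigma_\pi)$ (by symmetry), which is spanned by $u_{r+1}, \ldots, u_n$.

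With this setup, any $\boldsymbol{\theta} \in \mathcal{R}(\Sigma_\pi)$ admits the expansion $\boldsymbol{\theta} = \sum_{i=1}^r c_i u_i$ with $c_i = u_i^\top \boldsymbol{\theta}$, and orthonormality of $\{u_i\}$ gives $\|\boldsymbol{\theta}\|_2^2 = \sum_{i=1}^r c_i^2$. A direct computation then yields
\[
\boldsymbol{\theta}^\top \Sigma_\pi \boldsymbol{\theta} = \sum_{i=1}^r \lambda_i c_i^2 \;\geq\; \lambda_r \sum_{i=1}^r c_i^2 \;\geq\; \lambda_0 \|\boldsymbol{\theta}\|_2^2,
\]
which is precisely the desired inequality.

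There is no real obstacle here: the statement is essentially a restatement of the variational characterization of the smallest nonzero eigenvalue of a symmetric PSD matrix, and all the nontrivial content has been front-loaded into Assumption \ref{as:lamda}. The only point that requires a sentence of care is the identification $\overline{\sigma_{\min}}(\Sigma_\pi) = \lambda_{\min}^{+}(\Sigma_\pi)$ and the description of $\mathcal{R}(\Sigma_\pi)$ as the eigenspace of nonzero eigenvalues, both of which follow from the symmetry of $\Sigma_\pi$.
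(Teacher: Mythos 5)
Your proof is correct and is exactly the standard spectral argument the paper has in mind: the paper itself offers no proof, dismissing the claim as ``basic linear algebra analysis'' (and restating it as Lemma~\ref{lemma:range-space-strong-convex} in the appendix, again without proof). Your write-up—symmetry and positive semidefiniteness of $\Sigma_\pi$, identification of singular values with eigenvalues, $\mathcal{R}(\Sigma_\pi)$ as the span of eigenvectors with nonzero eigenvalues, and the Rayleigh-quotient bound—correctly fills in the omitted details.
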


Proposition \ref{prop:optimal} indicates that the variations in the local linearization of Q-function values solely depend on the variations in parameters within the subspace $\mathcal{R}(\Sigma_\pi)$. In the mean while, Proposition \ref{prop:range-strong-convex} indicates that such local linearization is non-singular within $\mathcal{R}(\Sigma_\pi)$. Based on these observations, we can first provide a fast convergence of $\|\boldsymbol{\theta}^t-\boldsymbol{\theta}^t_*\|^2=\mathcal{O}(1/T)$ and then show that $\mathbb{E}\left[\big(\widehat{Q}(\boldsymbol{x};\boldsymbol{\theta}^T)-\widehat{Q}(\boldsymbol{x};\boldsymbol{\theta}^*)\big)^2 \right]=\mathbb{E}\left[\big(\widehat{Q}(\boldsymbol{x};\boldsymbol{\theta}^T)-\widehat{Q}(\boldsymbol{x};\boldsymbol{\theta}^T_*)\big)^2 \right]\leq \mathcal{O}(1/T)$ for any $\boldsymbol{\theta}^*\in\Xi_\omega$. We summarize this result in Theorem \ref{theorem:pi-f} while presenting its proof in Appendix \ref{section:appendix-pi-f}. 

\begin{theorem}
\label{theorem:pi-f}
Suppose Assumptions \ref{as:lamda}, \ref{as:markov} and \ref{as:technical} hold. We set $\omega=\widetilde{C}_1 $ and the learning rate $\eta_t=\frac{1}{2(1-\gamma)\lambda_0 (t+1)}$. If the feature map $\|\phi(s,a)\|=1$ for each state-action pair $(s,a)$ and the network width $m \geq m^*$, then the output $\boldsymbol{\theta}^T$ of Algorithm \ref{alg:Neural-TD} satisfies
\begin{eqnarray*}
&\ &\mathbb{E}\left[\big(\widehat{Q}(\boldsymbol{x};\boldsymbol{\theta}^T)-\widehat{Q}(\boldsymbol{x};\boldsymbol{\theta}^*)\big)^2\mid \boldsymbol{\theta}^0 \right] \\
&\leq& \frac{\widetilde{C}_3(\log T+1)}{(1-\gamma)^2\lambda_0^2T}+ \frac{\widetilde{C}_4m^{-1/2}}{(1-\gamma)\lambda_0}\cdot\sqrt{ \log (T / \delta)}\\
&+&\frac{\widetilde{C}_5\tau^* \left(\log (T / \delta)+1\right) \log T}{ (1-\gamma)^2\lambda_0^2 T},
\end{eqnarray*}
with probability at least $1-2\delta-2L\exp\!\big(\!\!-\widetilde{C}_2 m\big)$, where $\tau^*$ is the mixing time of Markov chain in Assumption \ref{as:markov}, and $\widetilde{C}_1,\cdots,\widetilde{C}_5>0$ are universal constants.
\end{theorem}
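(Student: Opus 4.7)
The proof plan for Theorem \ref{theorem:pi-f} centers on the subspace decomposition supplied by Proposition \ref{prop:optimal}. For each iterate $\boldsymbol{\theta}^t$ produced by Algorithm \ref{alg:Neural-TD}, I would pick a \emph{time-varying} reference point $\boldsymbol{\theta}^t_*\in\Xi_{2\omega}$ satisfying $\boldsymbol{\theta}^t-\boldsymbol{\theta}^t_*\in\mathcal{R}(\Sigma_\pi)$, and track the potential $V_t:=\|\boldsymbol{\theta}^t-\boldsymbol{\theta}^t_*\|^2$. Since Proposition \ref{prop:optimal} allows such a reference at every iteration, $V_{t+1}\leq\|\boldsymbol{\theta}^{t+1}-\boldsymbol{\theta}^t_*\|^2$, i.e.\ we may always compare against the previous anchor, which is what makes the recursion tractable. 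Crucially, because $\boldsymbol{\theta}^t-\boldsymbol{\theta}^t_*\in\mathcal{R}(\Sigma_\pi)$, Proposition \ref{prop:range-strong-convex} will give a strong-convexity-like bound $(\boldsymbol{\theta}^t-\boldsymbol{\theta}^t_*)^\top\Sigma_\pi(\boldsymbol{\theta}^t-\boldsymbol{\theta}^t_*)\geq\lambda_0 V_t$, replacing the usual global strong convexity assumption that prior work would need.

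The one-step descent is obtained by handling the $\Pi_{S_\omega}$ projection (using $\boldsymbol{\theta}^t\in S_\omega$, $\boldsymbol{\theta}^t_*\in S_{2\omega}$, so the projection only distorts things by a controllable amount) and expanding the square to get
\begin{equation*}
V_{t+1}\leq V_t-2\eta_t\langle \boldsymbol{g}(\boldsymbol{\theta}^t),\boldsymbol{\theta}^t-\boldsymbol{\theta}^t_*\rangle+\eta_t^2\|\boldsymbol{g}(\boldsymbol{\theta}^t)\|^2+\text{proj.\ slack}.
\end{equation*}
I would then split the inner product by inserting the population semi-gradient of the local linearization, $\widehat{\boldsymbol{g}}(\boldsymbol{\theta}):=\mathbb{E}_{\mu,\pi,\mathbb{P}}[\widehat{\Delta}(\boldsymbol{x},\boldsymbol{x}';\boldsymbol{\theta})\nabla_{\boldsymbol{\theta}}Q(\boldsymbol{x};\boldsymbol{\theta}^0)]$. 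This produces three terms: (i) a contraction term $\langle\widehat{\boldsymbol{g}}(\boldsymbol{\theta}^t),\boldsymbol{\theta}^t-\boldsymbol{\theta}^t_*\rangle\geq(1-\gamma)\lambda_0 V_t$, obtained by combining the variational characterization \eqref{eq:optimal} at $\boldsymbol{\theta}^t_*$ with Proposition \ref{prop:range-strong-convex} on the range space; (ii) a Markovian bias $\langle\widehat{\boldsymbol{g}}(\boldsymbol{\theta}^t)-\mathbb{E}[\boldsymbol{g}(\boldsymbol{\theta}^t)\mid\mathcal{F}_{t-\tau}],\boldsymbol{\theta}^t-\boldsymbol{\theta}^t_*\rangle$, controlled by Assumption \ref{as:markov} with buffer $\tau=\tau^*=\tilde{\mathcal{O}}(1)$; and (iii) a linearization bias from replacing $\nabla_{\boldsymbol{\theta}}Q(\boldsymbol{x};\boldsymbol{\theta}^t)$ by $\nabla_{\boldsymbol{\theta}}Q(\boldsymbol{x};\boldsymbol{\theta}^0)$, which is $\mathcal{O}(m^{-1/2})$ by Assumption \ref{as:activation} and standard overparameterization concentration that holds with probability at least $1-2L\exp(-\widetilde{C}_2 m)$.

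Plugging the choice $\eta_t=\frac{1}{2(1-\gamma)\lambda_0(t+1)}$ into the recursion
\begin{equation*}
\mathbb{E}V_{t+1}\leq \bigl(1-2(1-\gamma)\lambda_0\eta_t\bigr)\mathbb{E}V_t+\eta_t^2\cdot\widetilde{C}+\eta_t\cdot\text{bias},
\end{equation*}
and applying a standard inductive telescoping yields $\mathbb{E}V_T\lesssim\frac{\log T}{(1-\gamma)^2\lambda_0^2 T}+\frac{\tau^*\log T\cdot\log(T/\delta)}{(1-\gamma)^2\lambda_0^2 T}+\frac{m^{-1/2}\sqrt{\log(T/\delta)}}{(1-\gamma)\lambda_0}$. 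The conversion from parameter distance to the target Q-value gap is immediate:
\begin{equation*}
\mathbb{E}\bigl[(\widehat{Q}(\boldsymbol{x};\boldsymbol{\theta}^T)-\widehat{Q}(\boldsymbol{x};\boldsymbol{\theta}^T_*))^2\bigr]=(\boldsymbol{\theta}^T-\boldsymbol{\theta}^T_*)^\top\Sigma_\pi(\boldsymbol{\theta}^T-\boldsymbol{\theta}^T_*)\leq\|\Sigma_\pi\|_{\mathrm{op}}\cdot V_T,
\end{equation*}
and by \eqref{eq:fixed-point-unique} applied with $\beta=2\omega$ we may replace $\widehat{Q}(\boldsymbol{x};\boldsymbol{\theta}^T_*)$ by $\widehat{Q}(\boldsymbol{x};\boldsymbol{\theta}^*)$ for any $\boldsymbol{\theta}^*\in\Xi_\omega\subseteq\Xi_{2\omega}$.

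The main obstacle I anticipate is a clean joint treatment of the drifting reference $\boldsymbol{\theta}^t_*$ together with the Markovian noise: the buffered delay argument normally operates around a single fixed optimum, so one must verify that replacing the fixed anchor by the moving one does not create an uncontrolled drift term $\|\boldsymbol{\theta}^{t+1}_*-\boldsymbol{\theta}^t_*\|$, either by leveraging the optimality that picks $\boldsymbol{\theta}^{t+1}_*$ after $\boldsymbol{\theta}^{t+1}$ is known or by absorbing the drift into the natural one-step displacement $\eta_t\|\boldsymbol{g}\|$. A secondary technicality is that $\boldsymbol{\theta}^t_*$ may lie in $S_{2\omega}\setminus S_\omega$, so the naive non-expansiveness of $\Pi_{S_\omega}$ fails and must be replaced by a projection-gap bound that stays below $\mathcal{O}(\omega)$ deterministically; this costs only a constant in $\widetilde{C}_3,\widetilde{C}_5$ and does not alter the $\tilde{\mathcal{O}}(1/T)$ rate.
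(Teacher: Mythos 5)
Your plan follows essentially the same route as the paper's own proof: the moving anchors $\boldsymbol{\theta}^t_*\in\Xi_{2\omega}$ from Proposition~\ref{prop:optimal}, the potential $\|\boldsymbol{\theta}^t-\boldsymbol{\theta}^t_*\|^2$, the same four-way split of the one-step recursion into gradient-norm, linearization-gap, Markov-mixing and range-space-contraction terms (the paper's $\mathbf{I}_1$--$\mathbf{I}_4$, with $\mathbf{I}_4\geq(1-\gamma)\lambda_0\|\boldsymbol{\theta}^t-\boldsymbol{\theta}^t_*\|^2$ obtained exactly as you describe from the stationarity condition, the $\gamma$-contraction under the stationary distribution, and Proposition~\ref{prop:range-strong-convex}), the same step size and telescoping, and the same final appeal to \eqref{eq:fixed-point-unique}. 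The only substantive deviations are the two technicalities you flag at the end: your Pythagorean bound $V_{t+1}\leq\|\boldsymbol{\theta}^{t+1}-\boldsymbol{\theta}^t_*\|^2$ (valid because $\boldsymbol{\theta}^{t+1}_*-\boldsymbol{\theta}^t_*\in\mathcal{K}(\Sigma_\pi)$ is orthogonal to $\boldsymbol{\theta}^{t+1}-\boldsymbol{\theta}^{t+1}_*\in\mathcal{R}(\Sigma_\pi)$) is a clean substitute for the paper's explicit cancellation of the cross terms, but your proposed handling of the projection --- accepting a deterministic additive slack of order $\omega$ per step --- would \emph{not} preserve the $\tilde{\mathcal{O}}(1/T)$ rate if taken literally, since it does not decay with $t$; the intended resolution is to compare both $\boldsymbol{\theta}^{t+1}$ and $\boldsymbol{\theta}^{t+1}_*$ inside $S_{2\omega}$ and use non-expansiveness of $\Pi_{S_{2\omega}}$ there, so that no per-step additive slack arises at all.
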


Let $Q^\pi$ be the true state-action value function that satisfies the Bellman equation $Q^\pi=\mathcal{T}^\pi Q^\pi$. Then based on the convergence of the local linearization in Theorem \ref{theorem:pi-f}, we establish the global convergence of neural temporal difference learning as Theorem \ref{theorem:total}.

\begin{theorem}
\label{theorem:total}
Suppose the conditions in Theorem \ref{theorem:pi-f} hold. Then the output of Algorithm \ref{alg:Neural-TD} satisfies
\begin{align}
&\mathbb{E}\Big[\big(Q(\phi(s,a);\boldsymbol{\theta}^T)-Q^\pi(s,a)\big)^2\mid \boldsymbol{\theta}^0 \Big]\nonumber\\
\leq&\ \frac{3\mathbb{E}\left[\left(Q^\pi(s,a)-\Pi_{\mathcal{F}_{\omega, m}}Q^\pi(s,a)\right)^2\right]}{(1-\gamma)^2}+\widetilde{C}_6 m^{-1} \nonumber\\
&+\frac{\widetilde{C}_7(\log T+1)}{(1-\gamma)^2\lambda_0^2 T} + \frac{\widetilde{C}_8m^{-1/2}}{(1-\gamma)\lambda_0}\sqrt{\log (T / \delta)}  \\
&+\frac{\widetilde{C}_9\tau^* \left(\log (T / \delta)+1\right) \log T}{(1-\gamma)^2\lambda_0^2 T}\nonumber
\end{align} 
$w.p.\ 1-2 \delta-2L \exp \!\big(\!-\widetilde{C}_2 m \big)$, where $\widetilde{C}_6,\cdots,\widetilde{C}_9>0$ are universal constants.
\end{theorem}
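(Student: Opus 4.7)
The plan is to bound $Q(\phi(s,a);\boldsymbol{\theta}^T)-Q^\pi(s,a)$ by a three-way triangle split through the local linearization $\widehat{Q}(\cdot;\boldsymbol{\theta})$ and a fixed point $\boldsymbol{\theta}^*\in\Xi_\omega$:
$$
Q(\boldsymbol{x};\boldsymbol{\theta}^T)-Q^\pi = \underbrace{\bigl[Q(\boldsymbol{x};\boldsymbol{\theta}^T)-\widehat{Q}(\boldsymbol{x};\boldsymbol{\theta}^T)\bigr]}_{\text{(I) linearization}} + \underbrace{\bigl[\widehat{Q}(\boldsymbol{x};\boldsymbol{\theta}^T)-\widehat{Q}(\boldsymbol{x};\boldsymbol{\theta}^*)\bigr]}_{\text{(II) TD convergence}} + \underbrace{\bigl[\widehat{Q}(\boldsymbol{x};\boldsymbol{\theta}^*)-Q^\pi\bigr]}_{\text{(III) approximation}},
$$
squaring and applying $(a+b+c)^2\le 3(a^2+b^2+c^2)$ before taking expectation under $\mu\times\pi$. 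Term (II) is handled directly by Theorem~\ref{theorem:pi-f}, which contributes the $\widetilde{C}_7,\widetilde{C}_8,\widetilde{C}_9$ summands.

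For term (I), I would exploit the $L_2$-smoothness of $\sigma$ in Assumption~\ref{as:activation} together with the $1/\sqrt{m}$ rescaling at every layer of \eqref{eq:nn-layer}. A standard computation from the overparameterized NTK literature shows that, with probability at least $1-2L\exp(-\widetilde{C}_2 m)$ over the initialization, $\|\nabla_{\boldsymbol{\theta}}^2 Q(\boldsymbol{x};\boldsymbol{\theta})\|_{\mathrm{op}}=\mathcal{O}(m^{-1/2})$ uniformly on $S_\omega$. Combining this Hessian bound with a second-order Taylor expansion about $\boldsymbol{\theta}^0$ and using $\|\boldsymbol{\theta}^T-\boldsymbol{\theta}^0\|\le\omega=\widetilde{C}_1$ yields $|Q(\boldsymbol{x};\boldsymbol{\theta}^T)-\widehat{Q}(\boldsymbol{x};\boldsymbol{\theta}^T)|=\mathcal{O}(m^{-1/2}\omega^2)$ pointwise, so squaring produces the $\widetilde{C}_6 m^{-1}$ term.

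For term (III), I would invoke Assumption~\ref{as:technical} to make $\mathcal{T}^\pi$ a $\gamma$-contraction with respect to $\|\cdot\|_\mu$. Since $\boldsymbol{\theta}^*\in\Xi_\omega$ satisfies $\widehat{Q}(\cdot;\boldsymbol{\theta}^*)=\Pi_{\mathcal{F}_{\omega,m}}\mathcal{T}^\pi\widehat{Q}(\cdot;\boldsymbol{\theta}^*)$ and $Q^\pi=\mathcal{T}^\pi Q^\pi$, I would decompose
$$
\widehat{Q}(\cdot;\boldsymbol{\theta}^*)-Q^\pi=\bigl[\Pi_{\mathcal{F}_{\omega,m}}\mathcal{T}^\pi\widehat{Q}(\cdot;\boldsymbol{\theta}^*)-\Pi_{\mathcal{F}_{\omega,m}}Q^\pi\bigr]+\bigl[\Pi_{\mathcal{F}_{\omega,m}}Q^\pi-Q^\pi\bigr],
$$
bound the first bracket in $\|\cdot\|_\mu$ by $\gamma\|\widehat{Q}(\cdot;\boldsymbol{\theta}^*)-Q^\pi\|_\mu$ using non-expansiveness of the $L^2(\mu)$-projection, and rearrange to obtain $\|\widehat{Q}(\cdot;\boldsymbol{\theta}^*)-Q^\pi\|_\mu\le (1-\gamma)^{-1}\|\Pi_{\mathcal{F}_{\omega,m}}Q^\pi-Q^\pi\|_\mu$. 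Squaring, multiplying by the factor of $3$, and inserting the $(1-\gamma)^{-2}$ gives the first summand of the stated bound.

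The main obstacle is term (I): propagating the $\mathcal{O}(m^{-1/2})$ smoothness bound through all $L$ recursive layers of \eqref{eq:nn-layer} demands careful control of the layerwise activations $\boldsymbol{x}^{(l)}$ and their Jacobians under the random Gaussian initialization, as well as ensuring that the bounds hold uniformly on the projected ball $S_\omega$. One must also take a union bound to keep all concentration events compatible with the failure probability $2\delta+2L\exp(-\widetilde{C}_2 m)$ inherited from Theorem~\ref{theorem:pi-f}, and bookkeep how $\omega=\widetilde{C}_1$ and $\gamma$ enter the universal constants $\widetilde{C}_6,\ldots,\widetilde{C}_9$ when the three pieces are finally summed.
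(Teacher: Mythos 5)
Your proposal is correct and follows essentially the same route as the paper: the identical three-term decomposition with $(a+b+c)^2\le 3(a^2+b^2+c^2)$, the $\mathcal{O}(m^{-1/2})$ Hessian/smoothness bound (the paper's Lemma~\ref{lemma:hessian}) giving the $\widetilde{C}_6 m^{-1}$ term, Theorem~\ref{theorem:pi-f} for the middle term, and the same fixed-point-plus-contraction argument yielding $\|\widehat{Q}(\cdot;\boldsymbol{\theta}^*)-Q^\pi\|_\mu\le(1-\gamma)^{-1}\|\Pi_{\mathcal{F}_{\omega,m}}Q^\pi-Q^\pi\|_\mu$. No substantive differences.
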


Let $\epsilon_\mathcal{F}:=\frac{3}{(1-\gamma)^2}\mathbb{E}\big[\big(Q^\pi(s,a)-\Pi_{\mathcal{F}_{\omega, m}}Q^\pi(s,a)\big)^2\big]$ be the optimal approximation error of the function class $\mathcal{F}_{\omega, m}$. Then
Theorem \ref{theorem:total} demonstrates that under suitable parameter choices, neural TD learning method identify an approximation error bound of $\mathcal{O}(\epsilon_\mathcal{F}+\epsilon+m^{-\frac{1}{2}})$ within $\tilde{\mathcal{O}}(\epsilon^{-1})$ samples. Existing works include \citet{cai2023neural, xu2020finite, tian2022performance, cayci2023sample} achieve $\tilde{\mathcal{O}}(\epsilon^{-2})$ sample complexity, and \cite{sun2022finite} achieves $\mathcal{O}\big(\epsilon^{-\frac{2}{2-a}}\big), a\in(0,1]$ with additional assumptions. 

Following a similar analysis while adopting an additional regularity assumption on the matrix $\Sigma_\pi$, one can further extend the above analysis to the neural Q-learning by substituting the Bellman operator with the Bellman optimality operator. A similar $\mathcal{O}(\epsilon^{-1})$ sample complexity can still be achieved, which is relegated to  Appendix for succinctness.

\section{Convergence of Minimax Neural Q-Learning}
\label{section:minmax-ql-conv}
A two-player zero-sum Markov game \cite{littman1994markov, bowling2001rational, perolat2018actor}, as a simple variant of MDP, is defined as a six-tuple $\mathcal{M}=(\mathcal{S}, \mathcal{A}_1, \mathcal{A}_2, \mathbb{P}, r, \gamma)$. Here $\mathcal{S}$ is state space, $\mathcal{A}_1$ and $\mathcal{A}_2$ are the action space of the first and second player, respectively, $\mathbb{P}:\mathcal{A}_1\times\mathcal{A}_2\rightarrow \mathcal{P}(\mathcal{S})$ is the transition probability, $r:\mathcal{S}\times\mathcal{A}_1\times\mathcal{A}_2\rightarrow \mathbb{R}$ is the reward function and $\gamma$ is the discounted factor. At time $t$, player 1 and player 2 take actions ($a^1_t\in\mathcal{A}_1$ and $a^2_t\in\mathcal{A}_2$) simultaneously. Player 1 obtains the reward $r(s_t, a^1_t, a^2_t)$. while player 2 obtains $-r(s_t, a^1_t, a^2_t)$. The goal of the two players is to maximize their cumulative rewards respectively. For a policy pair $(\pi_1, \pi_2)$, we can define the state-action value function as follows:
$$
\begin{aligned}
Q^{\pi_1, \pi_2}(s,a^1,a^2)=\ &\mathbb{E}_{\pi_1,\pi_2}\left[\sum_{t=0}^\infty \gamma^t\cdot r(s_t, a^1_t,a^2_t)\mid s_0=s,\right.\\
&\left.a^1_0=a^1, a^2_0=a^2\right],\ \forall s,a^1,a^2.
\end{aligned}
$$
The optimal state-action value function $Q^*$ is defined as 
\begin{eqnarray*}
Q^*(s,a^1,a^2)&=&\max_{\pi_1}\min_{\pi_2} Q^{\pi_1, \pi_2}(s,a^1,a^2)\\
&=&\min_{\pi_2}\max_{\pi_1} Q^{\pi_1, \pi_2}(s,a^1,a^2).
\end{eqnarray*}
We denote the optimal policy pair $\pi^*=\{\pi_1^*, \pi_2^*\}$ if $Q^*(s,a^1,a^2)=Q^{\pi_1^*, \pi_2^*}$. Moreover, the Minimax Bellman operator $\mathcal{H}$ for the Markov game is defined as
$$
\begin{aligned}
\mathcal{H} Q(s,a^1,a^2)=\ &r(s,a^1,a^2)+\gamma \mathbb{E}\left[\min_{b^{1}}\max_{b^{2}} Q(s', b^{1},b^{2})\right. \mid \\
& \left.s'\sim \mathbb{P}(\cdot\mid s,a^1,a^2)\right], \,\,\forall s,a^1,a^2.
\end{aligned}
$$
Thus $\mathcal{H}Q^*=Q^*$. Let the feature map $\boldsymbol{x}=\phi(s,a^1,a^2)$ and $\pi=\{\pi^1, \pi^2\}$ be a given learning policy for players 1 and 2. Assume that $\{s_t, a^1_t,a^2_t,r_t\}_{t=0}^{T}$ is a sampled trajectory of states, actions and rewards obtained from the environment using policy $\pi$. 
Let us recall the definition of the local linearization function class $\mathcal{F}_{\omega,m}$ introduced in \eqref{defn:local-linearization}. Consider the 
MSPBE minimization problem with multi-layer neural network approximation:
\begin{equation*}
\min_{\boldsymbol{\theta}\in S_\omega} \mathbb{E}_{\mu,\pi,\mathbb{P}} \left[\left(Q(\boldsymbol{x};\boldsymbol{\theta})- \Pi_{\mathcal{F}_{\omega,m}}\mathcal{H}Q(\boldsymbol{x};\boldsymbol{\theta})\right)^2\right].
\end{equation*}
To solve this problem, we still adopt the projected stochastic semi-gradient iteration method is provided described by \eqref{eq:semi-grad}, that is,
\begin{equation}
\label{eq:semi-grad-minimax-ql}
\boldsymbol{\theta}^{t+1}=\Pi_{S_\omega}\Big(\boldsymbol{\theta}^t-\eta_t \boldsymbol{g}\left(\boldsymbol{\theta}^t\right)\!\!\Big),
\end{equation}
while redefining the stochastic semi-gradient estimator $\boldsymbol{g}(\boldsymbol{\theta}^t)$ as 
$$
\boldsymbol{g}(\boldsymbol{\theta}^t)= \Delta\left(s_t, a^1_t, a^2_t, s_{t+1} ; \boldsymbol{\theta}^t\right)\cdot \nabla_{\boldsymbol{\theta}}  Q(\boldsymbol{x}_t;\boldsymbol{\theta}^t), 
$$
where $\boldsymbol{x}_t:=\phi(s_t,a^1_t,a^2_t)$ and
\begin{equation}
\label{eq:td-error-minimax-ql}
\begin{aligned}
\Delta\left(s_t, a^1_t,\right.&\!\!\!\left. a^2_t, s_{t+1} ; \boldsymbol{\theta}^t\right)=Q(\boldsymbol{x}_t;\boldsymbol{\theta}^t)-\Big(r(s_t,a^1_t,a^2_t)+\Big.\\
&\Big.\gamma \max_{b^1\in\mathcal{A}_1} \min_{b^2\in\mathcal{A}_2} Q(\phi(s_{t+1},b^1,b^2);\boldsymbol{\theta}^t) \Big).
\end{aligned}
\end{equation}
Now we redefine the function class $\mathcal{F}_{\omega, m}$ as a collection of all local linearization of $Q(\boldsymbol{x};\boldsymbol{\theta})$ at the initial point $\boldsymbol{\theta}^0$:
\begin{align*}
\mathcal{F}_{\omega, m}=\ &\left\{\widehat{Q}(\boldsymbol{x};\boldsymbol{\theta})=Q(\boldsymbol{x};\boldsymbol{\theta}^0)+\left<\nabla_{\boldsymbol{\theta}} Q(\boldsymbol{x};\boldsymbol{\theta}^0), \right.\right.\\
&\left.\left.\boldsymbol{\theta}-\boldsymbol{\theta}^0\right>,\ \boldsymbol{\theta} \in S_\omega\right\}.
\end{align*}
To analyze this method, for any $\beta>0$, we redefine the set $\Xi_\beta$ introduced in \eqref{defn:mspbe-fixedpoint} by replacing the Bellman operator $\mathcal{T}^\pi$ with the Minimax Bellman operator $\mathcal{H}$. Similar to \eqref{eq:fixed-point-unique}, we still have a point $\boldsymbol{\theta}^*\in \Xi_\omega$ if and only if 
$$
\begin{aligned}
\mathbb{E}_{\mu,\pi,\mathbb{P}}&\left[\widehat{\Delta}\left(s, a^1, a^2, s^{\prime} ; \boldsymbol{\theta}^*\right)\big\langle\nabla_{\boldsymbol{\theta}} \widehat{Q}\left(\phi(s,a^1, a^2);\boldsymbol{\theta}^* \right), \big.\right.\\
& \big.\left.\boldsymbol{\theta}-\boldsymbol{\theta}^*\big\rangle\right] \geq 0,
\end{aligned}
$$
where $\widehat{Q}(\cdot\,;\boldsymbol{\theta}^*)\in\mathcal{F}_{\omega,m}$, and $\widehat{\Delta}\left(s, a^1, a^2, s^{\prime} ; \boldsymbol{\theta}\right)$ has the same structure as $\Delta\left(s, a^1, a^2, s^{\prime} ; \boldsymbol{\theta}\right)$ expect that the function $Q(\cdot;\boldsymbol{\theta})$ is replaced by $\widehat{Q}(\cdot;\boldsymbol{\theta})$.

Unlike the neural temporal difference learning method that aims at evaluating the state-action values of a fixed learning policy. The Minimax Bellman operator significantly sophisticates the analysis. Let us redefine the feature covariance matrix $\Sigma_\pi$ with respect to the learning policy $\pi=\{\pi^1,\pi^2\}$, that is
\begin{eqnarray*}
\Sigma_\pi&=&\mathbb{E}_{\pi}\left[\nabla_{\boldsymbol{\theta}}Q(s,a^1, a^2;\boldsymbol{\theta}^0)\nabla_{\boldsymbol{\theta}}Q(s,a^1, a^2;\boldsymbol{\theta}^0)^\top\right].
\end{eqnarray*}
Let the actions $(a^1_{\boldsymbol{\theta}}, a^2_{\boldsymbol{\theta}})$ satisfies $\left<\nabla_{\boldsymbol{\theta}}Q(s,a^1_, a^2;\boldsymbol{\theta}^0), \boldsymbol{\theta}\right>=\max_{a^1\in\mathcal{A}_1}\min_{a^2\in\mathcal{A}_2} \left<\nabla_{\boldsymbol{\theta}}Q(s,a^1, a^2;\boldsymbol{\theta}^0), \boldsymbol{\theta}\right>$. For each parameter pair $\boldsymbol{\theta}_s=(\boldsymbol{\theta}_1, \boldsymbol{\theta}_2)$, we define the action pair $(a^1_{\boldsymbol{\theta}_s}, a^2_{\boldsymbol{\theta}_s})$ that satisfies
$$
\begin{aligned}
(a^1_{\boldsymbol{\theta}_s}, a^2_{\boldsymbol{\theta}_s})=\ &{\arg\max}_{(a^1,a^2)\in \left\{\left(a^1_{\boldsymbol{\theta}_1}, a^2_{\boldsymbol{\theta}_2}\right), \left(a^1_{\boldsymbol{\theta}_2}, a^2_{\boldsymbol{\theta}_1}\right)\right\}} \\
&\Big\{ \left|\left<\nabla_{\boldsymbol{\theta}} Q(s,a^1,a^2;\boldsymbol{\theta}^0), \boldsymbol{\theta}_1-\boldsymbol{\theta}_2 \right>\right| \Big\}.
\end{aligned}
$$
Then for any $\boldsymbol{\theta}_1,\boldsymbol{\theta}_2$, the minimax feature covariance matrix is defined as follows:
$$
\begin{aligned}
\Sigma^*_\pi(\boldsymbol{\theta}_1,\boldsymbol{\theta}_2)=\ &\mathbb{E}_{\pi}\left[\nabla_{\boldsymbol{\theta}}Q(s,a^1_{\boldsymbol{\theta}_s}, a^2_{\boldsymbol{\theta}_s};\boldsymbol{\theta}^0)\right. \\
& \left. \nabla_{\boldsymbol{\theta}}Q(s,a^1_{\boldsymbol{\theta}_s}, a^2_{\boldsymbol{\theta}_s};\boldsymbol{\theta}^0)^\top\right].
\end{aligned}
$$

\begin{figure*}[htbp]
\begin{center}
\centering
\subfigure{
\subfigure{
\includegraphics[width=0.24\textwidth,height=0.192\textwidth]{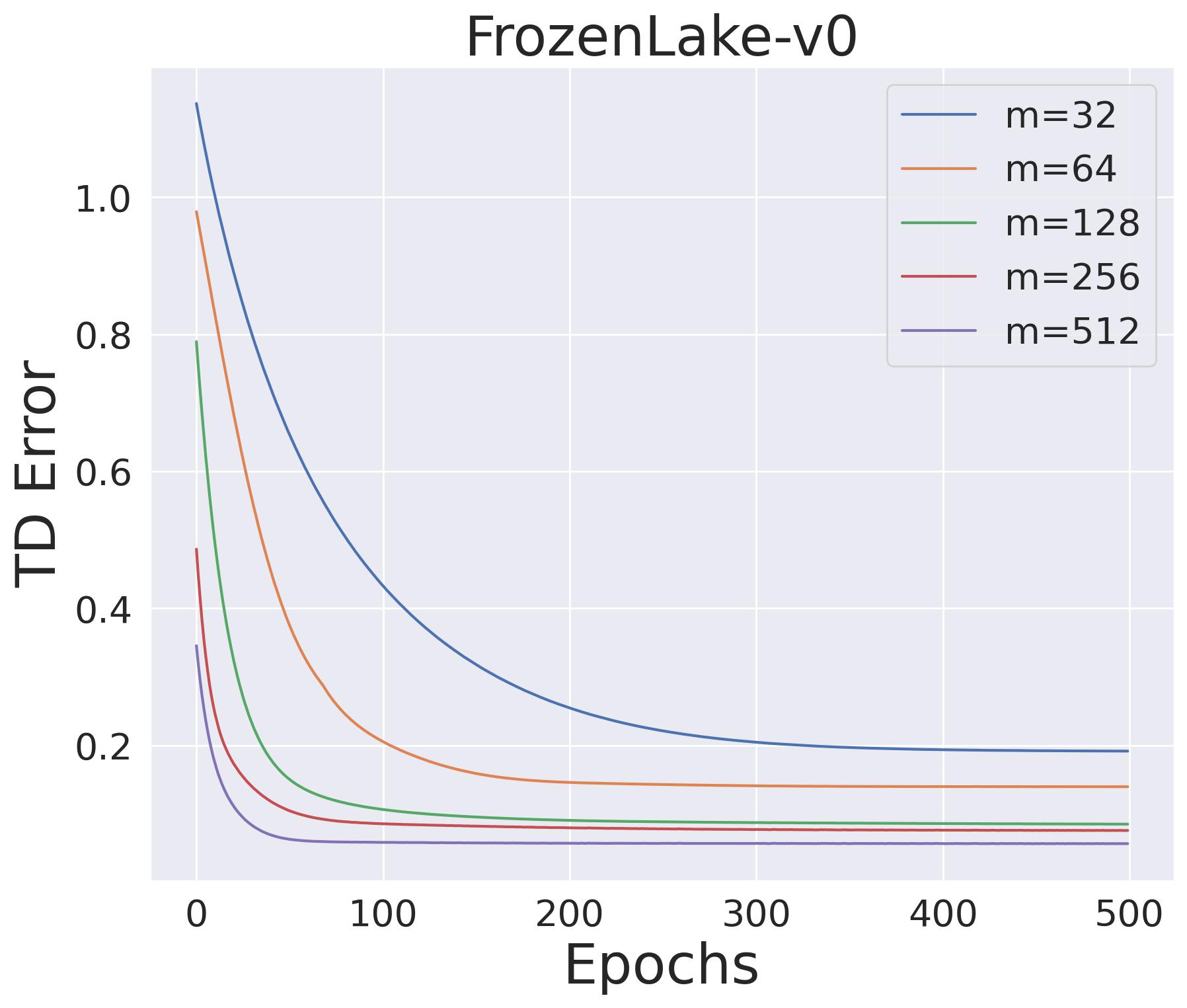}
}
\hspace{-3mm}
\subfigure{
\includegraphics[width=0.24\textwidth,height=0.192\textwidth]{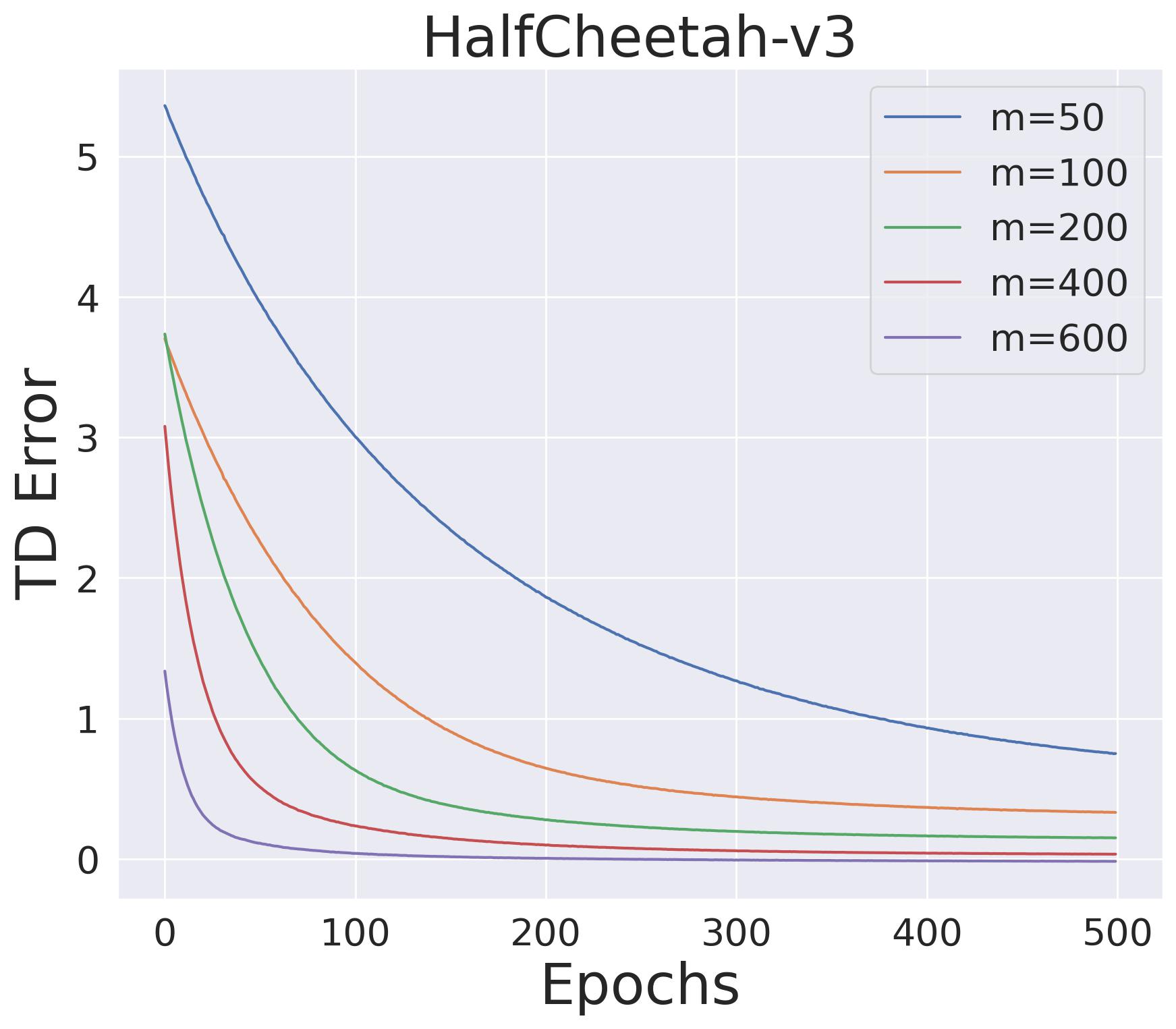}
}
\hspace{-3mm}
\includegraphics[width=0.24\textwidth,height=0.192\textwidth]{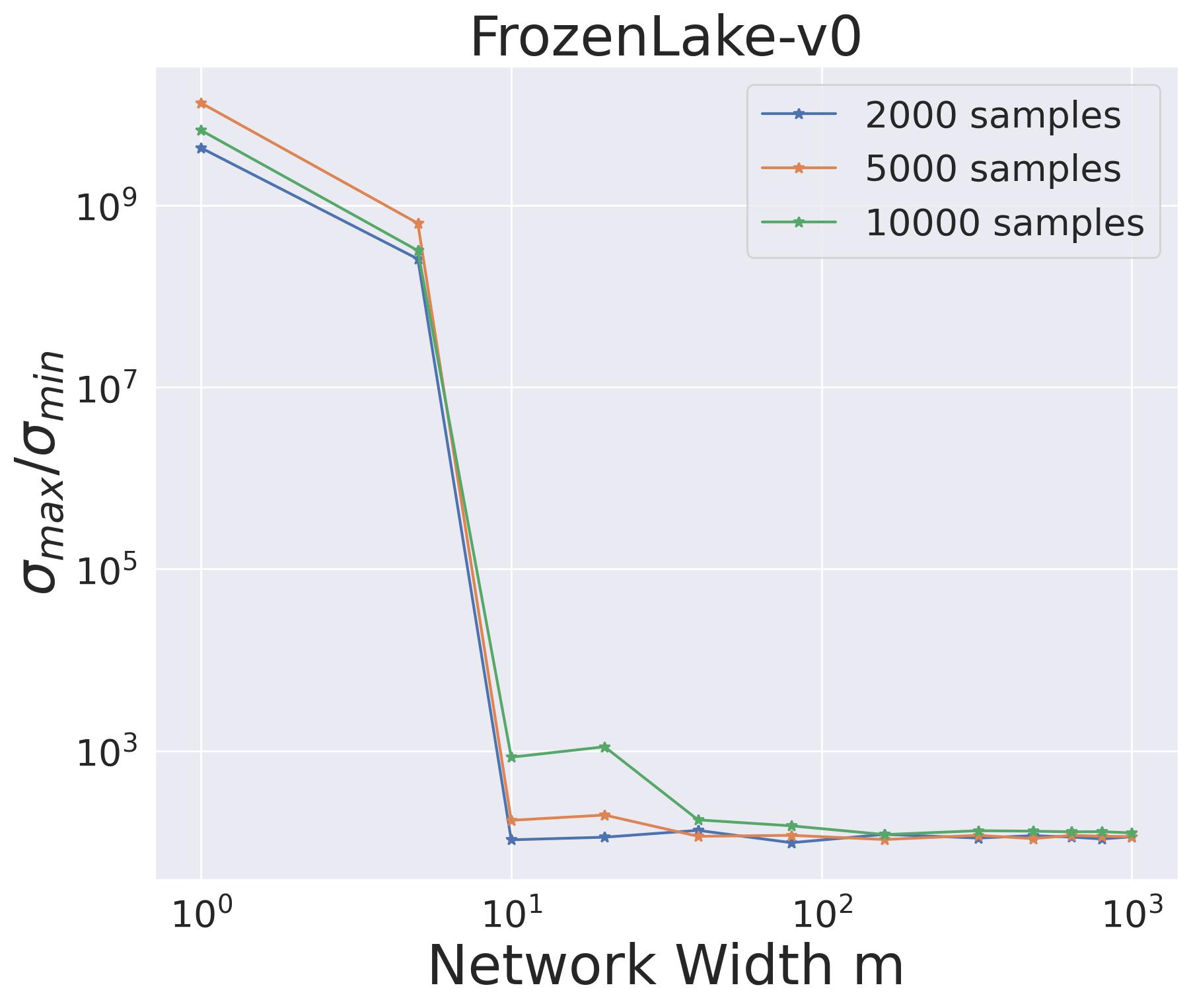}
}
\hspace{-3mm}
\subfigure{
\includegraphics[width=0.24\textwidth,height=0.192\textwidth]{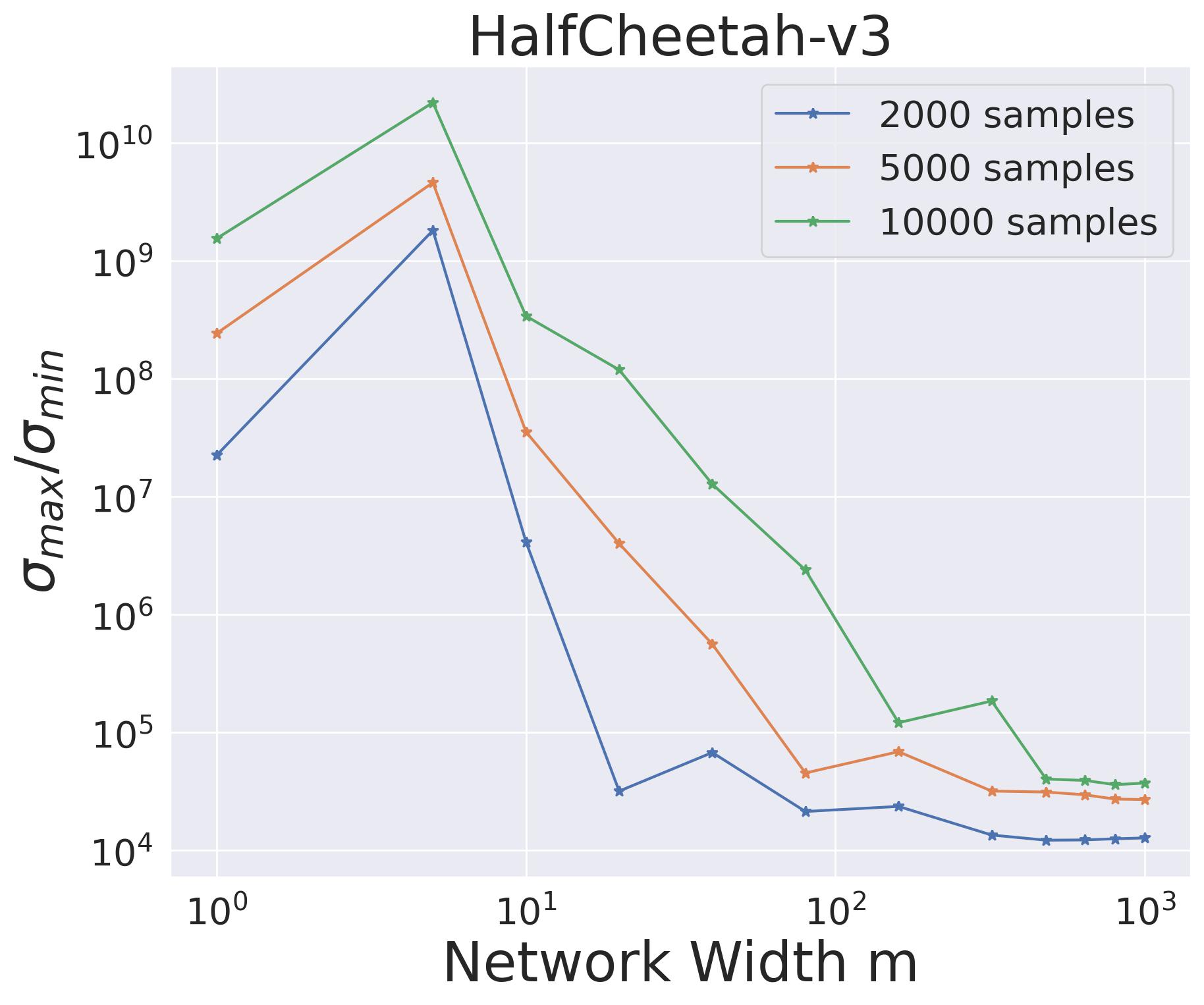}
}
\caption{Training curves and the ratio of the largest and smallest non-zero singular values of $\Sigma_\pi$ over different network widths $m$.} 
\label{fig:td-learning}
\end{center} 
\end{figure*}

\begin{assumption}
\label{as:minimax-regularity}
For any $\boldsymbol{\theta}^1, \boldsymbol{\theta}^2$, there exists a constant $\nu\in(0,1)$ such that $(1-\nu)^2\Sigma_\pi-\gamma^2\Sigma_\pi^*(\boldsymbol{\theta}_1,\boldsymbol{\theta}_2)\succeq 0$.
\end{assumption}
Note the original version of this assumption in \cite{zou2019finite} in fact requires a strict positive definite condition: $((1-\nu)^2\Sigma_\pi-\gamma^2\Sigma_\pi^*(\boldsymbol{\theta}_1,\boldsymbol{\theta}_2)\succ 0$. Under this additional assumption, \cite{zou2019finite} obtained an $\tilde{\mathcal{O}}(\epsilon^{-1})$ sample complexity for minimax Q-learning with linear function approximation. With the help of our subspace analysis technique, in this paper, we relax it to the positive semi-definiteness ($\succeq0$). Now we are ready to state our result for minimax neural Q-learning.
\begin{theorem}
\label{theorem:minimax-pi-f}
Suppose Assumptions \ref{as:lamda}, \ref{as:markov} and \ref{as:minimax-regularity} hold. We set $\omega=\widetilde{C}_1 $ and the learning rate $\eta_t=\frac{1}{2\nu\lambda_0 (t+1)}$. If the feature map $\|\phi(s,a^1,a^2)\|=1$ for each state-action pair $(s,a^1,a^2)$ and the network width $m \geq m^*$, then the output $\boldsymbol{\theta}^T$ of neural minimax Q-learning Algorithm \ref{alg:minimax-q-learning} satisfies
\begin{eqnarray*}
&\ &\mathbb{E}\left[\left(\widehat{Q}(\boldsymbol{x};\boldsymbol{\theta}^T)-\widehat{Q}(\boldsymbol{x};\boldsymbol{\theta}^*)\right)^2\mid \boldsymbol{\theta}^0 \right] \\
&\leq& \frac{\widetilde{C}_3(\log T+1)}{\nu^2\lambda_0^2T}+ \frac{\widetilde{C}_4m^{-1/2}}{\nu\lambda_0}\cdot\sqrt{ \log (T / \delta)} \\
&\ &+\frac{\widetilde{C}_5\tau^* \left(\log (T / \delta)+1\right) \log T}{ \nu^2\lambda_0^2 T},
\end{eqnarray*}
with probability at least $1-2\delta-2L\exp{(-\widetilde{C}_2 m)}$, where $\tau^*$ is the mixing time of Markov chain in Assumption \ref{as:markov}, and $\left\{\widetilde{C}_i>0\right\}_{i=1, \ldots, 5}$ are universal constants.
\end{theorem}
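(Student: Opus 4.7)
The plan is to adapt the subspace-decomposition argument used for Theorem \ref{theorem:pi-f} to the minimax setting. First, I would establish a minimax analogue of Proposition \ref{prop:optimal}: for each iterate $\boldsymbol{\theta}^t$ produced by the projected semi-gradient step \eqref{eq:semi-grad-minimax-ql}, there exists a companion point $\boldsymbol{\theta}^t_* \in \Xi_{2\omega}$ (where $\Xi$ is now defined w.r.t.\ the minimax Bellman operator $\mathcal{H}$) such that $\boldsymbol{\theta}^t-\boldsymbol{\theta}^t_* \in \mathcal{R}(\Sigma_\pi)$. The argument should follow the same orthogonal-decomposition template as in the proof of Proposition \ref{prop:optimal}, namely taking any $\bar{\boldsymbol{\theta}}\in\Xi_\omega$, writing $\boldsymbol{\theta}_* = \bar{\boldsymbol{\theta}}_\parallel + \boldsymbol{\theta}_\bot$, and verifying the stationarity condition using that $\langle \nabla_{\boldsymbol{\theta}} Q(\boldsymbol{x};\boldsymbol{\theta}^0), \boldsymbol{v}\rangle = 0$ a.s.\ for $\boldsymbol{v}\in\mathcal{K}(\Sigma_\pi)$. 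The only subtlety is that $\mathcal{H}$ involves a $\max_{b^1}\min_{b^2}$ over the next state, but by \eqref{eq:zero-as} this operation is unaffected when shifting $\bar{\boldsymbol{\theta}}$ by a kernel component, so the verification goes through.

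Next, I would set up the standard one-step contraction inequality for the quantity $\|\boldsymbol{\theta}^t - \boldsymbol{\theta}^t_*\|^2$. Using the nonexpansiveness of $\Pi_{S_\omega}$, the fact that $\boldsymbol{\theta}^t_*\in S_{2\omega}$, and expanding the square, I obtain a term of the form $-2\eta_t \langle \boldsymbol{g}(\boldsymbol{\theta}^t), \boldsymbol{\theta}^t - \boldsymbol{\theta}^t_*\rangle$ plus a $\eta_t^2\|\boldsymbol{g}(\boldsymbol{\theta}^t)\|^2$ term. Using the stationarity condition for $\boldsymbol{\theta}^t_*$, the Markov-chain mixing from Assumption \ref{as:markov}, and a first-order linearization error of order $m^{-1/2}$ coming from the neural network smoothness (as in the proof of Theorem \ref{theorem:pi-f}), I can replace the stochastic gradient by its expected linearized counterpart up to controlled errors. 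The cross term then becomes (in expectation) a difference of the form $\mathbb{E}[(\widehat{Q}(\boldsymbol{x};\boldsymbol{\theta}^t)-\widehat{Q}(\boldsymbol{x};\boldsymbol{\theta}^t_*))\cdot(\widehat{Q}(\boldsymbol{x};\boldsymbol{\theta}^t) - \widehat{Q}(\boldsymbol{x};\boldsymbol{\theta}^t_*) - \gamma (\max_{b^1}\min_{b^2}\widehat{Q}(\boldsymbol{x}';\boldsymbol{\theta}^t) - \max_{b^1}\min_{b^2}\widehat{Q}(\boldsymbol{x}';\boldsymbol{\theta}^t_*)))]$.

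The main obstacle is handling the $\max_{b^1}\min_{b^2}$ over the successor state, which is nonlinear in $\boldsymbol{\theta}$ and is the essential new difficulty beyond Theorem \ref{theorem:pi-f}. Here I would use the minimax feature covariance matrix $\Sigma^*_\pi(\boldsymbol{\theta}^t,\boldsymbol{\theta}^t_*)$ together with Cauchy--Schwarz to control the max-min difference by $(\boldsymbol{\theta}^t-\boldsymbol{\theta}^t_*)^\top \Sigma^*_\pi(\boldsymbol{\theta}^t,\boldsymbol{\theta}^t_*) (\boldsymbol{\theta}^t-\boldsymbol{\theta}^t_*)$, following the strategy of \citet{zou2019finite} for linear minimax Q-learning. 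Then Assumption \ref{as:minimax-regularity}, $(1-\nu)^2 \Sigma_\pi \succeq \gamma^2 \Sigma^*_\pi$, lets me lower-bound the inner product by $\nu\cdot (\boldsymbol{\theta}^t-\boldsymbol{\theta}^t_*)^\top \Sigma_\pi (\boldsymbol{\theta}^t-\boldsymbol{\theta}^t_*)$. Crucially, since $\boldsymbol{\theta}^t-\boldsymbol{\theta}^t_*\in\mathcal{R}(\Sigma_\pi)$, I can invoke Proposition \ref{prop:range-strong-convex} to further lower-bound this quadratic form by $\nu\lambda_0\|\boldsymbol{\theta}^t-\boldsymbol{\theta}^t_*\|^2$. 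Note that this is exactly where the relaxation from $\succ 0$ to $\succeq 0$ in Assumption \ref{as:minimax-regularity} is afforded by the subspace analysis: we no longer need strict positive definiteness because we only ever measure progress within $\mathcal{R}(\Sigma_\pi)$.

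Combining these pieces gives a recursion of the form $\mathbb{E}\|\boldsymbol{\theta}^{t+1}-\boldsymbol{\theta}^{t+1}_*\|^2 \leq (1-2\eta_t\nu\lambda_0)\mathbb{E}\|\boldsymbol{\theta}^t-\boldsymbol{\theta}^t_*\|^2 + \eta_t^2 G^2 + \eta_t \cdot (\text{mixing bias}) + \eta_t\cdot \mathcal{O}(m^{-1/2})$, where the mixing-bias term is handled by the standard $\tau^*$-step delay trick from \citet{bhandari2018finite,zou2019finite} and produces the $\tau^*(\log(T/\delta)+1)\log T$ factor. Plugging in the stepsize $\eta_t = \frac{1}{2\nu\lambda_0(t+1)}$ and unrolling the recursion yields $\mathbb{E}\|\boldsymbol{\theta}^T-\boldsymbol{\theta}^T_*\|^2 = \tilde{\mathcal{O}}(1/(\nu^2\lambda_0^2 T))$. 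Finally, since $\boldsymbol{\theta}^T,\boldsymbol{\theta}^T_*\in S_{2\omega}$ and $\boldsymbol{\theta}^T-\boldsymbol{\theta}^T_*\in\mathcal{R}(\Sigma_\pi)$, I can translate this parameter bound back into the function-value bound $\mathbb{E}[(\widehat{Q}(\boldsymbol{x};\boldsymbol{\theta}^T)-\widehat{Q}(\boldsymbol{x};\boldsymbol{\theta}^T_*))^2] \leq \mathbb{E}[(\boldsymbol{\theta}^T-\boldsymbol{\theta}^T_*)^\top \Sigma_\pi (\boldsymbol{\theta}^T-\boldsymbol{\theta}^T_*)]$ plus $m^{-1/2}$ corrections, and use \eqref{eq:fixed-point-unique} together with any $\boldsymbol{\theta}^*\in\Xi_\omega$ to obtain the stated bound. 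The high-probability wrapper follows from the same concentration arguments used for Theorem \ref{theorem:pi-f}, contributing the $2\delta + 2L\exp(-\widetilde{C}_2 m)$ failure probability.
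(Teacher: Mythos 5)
Your proposal follows essentially the same route as the paper: construct the companion point $\boldsymbol{\theta}^t_*\in\Xi_{2\omega}$ with $\boldsymbol{\theta}^t-\boldsymbol{\theta}^t_*\in\mathcal{R}(\Sigma_\pi)$, run the standard one-step recursion with the four error terms, and handle the $\max_{b^1}\min_{b^2}$ nonlinearity by bounding the max-min difference through the quadratic form in $\Sigma^*_\pi(\boldsymbol{\theta}^t,\boldsymbol{\theta}^t_*)$, then applying Assumption \ref{as:minimax-regularity} and Proposition \ref{prop:range-strong-convex} to get the $\nu\lambda_0$ contraction (this is exactly the paper's Lemma \ref{lemma:I4-minimax-ql}, which the paper proves via a pointwise sandwich of $\widehat{Q}^{\#}(s;\boldsymbol{\theta}^t)-\widehat{Q}^{\#}(s;\boldsymbol{\theta}^t_*)$ using the crossed action pairs $(a^1_{\boldsymbol{\theta}^t},a^2_{\boldsymbol{\theta}^t_*})$ and $(a^1_{\boldsymbol{\theta}^t_*},a^2_{\boldsymbol{\theta}^t})$). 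The remaining steps — mixing-time bias, stepsize choice, unrolling, and translating the parameter bound to the function-value bound via \eqref{eq:fixed-point-unique} — match the paper's argument, so the proposal is correct and not a genuinely different approach.
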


Theorem \ref{theorem:minimax-pi-f} establishes a finite-time analysis of $\tilde{\mathcal{O}}(\epsilon^{-1})$-sample complexity for minimax neural Q-learning in terms of the function class $\mathcal{F}_{\omega,m}$. For a more specific description and theorem proof, see Appendix \ref{section:minimax-conv}. To the best of our knowledge, this is the first analysis of minimax Q-learning with neural network function approximation, characterized by a complexity bound of $\tilde{\mathcal{O}}(\epsilon^{-1})$.

\section{Experiments}
\label{sect:exp}
Finally, we construct several experiments over the OpenAI Gym \cite{brockman2016openai} tasks and validate our theoretical findings. We consider a two-layer neural network, as follows:
\begin{equation*}
\label{eq:neural-network}
    Q(s,a;\boldsymbol{\theta}):=\frac{1}{\sqrt{m}}\sum_{r=1}^mb_r\sigma(\boldsymbol{\theta}_r^{\top}\phi(s,a)),
\end{equation*}
where $\sigma(\cdot)$ is ELU activation in this section.
Furthermore, details regarding the initialization and iteration methods for the parameters can be found in Section \ref{sect:prelim}. For all experiments, we generate samples based on a prescribed $\epsilon$-greedy policy with $\epsilon=0.1$. To prevent redundancy in the features $\phi(s,a)$, we employ one-hot encoding for discrete action-state spaces and implement a fixed grid discretization for continuous spaces. when 
both $\phi(s,a)$ and $\phi(s',a')$ belong to the same one-hot encoding or grid, we treat them as the same sample point. Our investigation into the impact of network width on the TD learning algorithm will be conducted from two perspectives:
(i) examining whether the network width $m$ is correlated with the TD error, and (ii) exploring the existence of constants $m^*$ and $\lambda_0$ that satisfy Assumption \ref{as:lamda}.

The four subfigures in Figure \ref{fig:td-learning} represent two types of environments: one with a discrete state space and the other with a continuous state space. The first two subfigures depict the convergence performance of the TD algorithm at different network widths. We generate 2,000 sample points and run for 500 epochs. Notably, as the parameter $m$ increases, the TD algorithm demonstrates faster convergence, resulting in smaller final TD errors. The latter two subfigures illustrate the existence of $m^*$ and $\lambda_0$. Specifically, we compute the largest non-zero singular value $\sigma_{\max}$ and smallest non-zero singular value $\sigma_{\min}$ of the matrix $\Sigma_\pi$. To mitigate the absolute magnitude of $\sigma_{\min}$, we introduce the ratio $r = \sigma_{\max} / \sigma_{\min}$ as a metric to validate Assumption \ref{as:lamda}.
It can be observed that the value of $r$ approaches a constant as $m$ increases for all cases, providing empirical support for the validity of the assumption. 

\section{Conclusion}
We study the finite-time analysis of the TD and Q learning methods with neural network approximation, where the state-action pairs are generated by a given policy under the Markovian sampling. Besides the convergence to the true action-value function except for an inevitable function approximation error, an improved analysis technique is introduced to establish an $\tilde{\mathcal{O}}(\epsilon^{-1})$ complexity for the neural TD and Q learning methods, which improves the existing $\tilde{\mathcal{O}}(\epsilon^{-2})$ complexity. 
For future work, it is also interesting to investigate if the proposed technique can improve the current complexity estimate of the actor-critic methods, which are partially built upon the neural TD methods.  

\section{Acknowledgements}
Dr. Zaiwen Wen is supported in part by the NSFC grant 12331010. Dr. Junyu Zhang is supported in part by the MOE AcRF grant A-0009530-05-00.

\nocite{langley00}

\bibliography{icml2024}
\bibliographystyle{icml2024}

\newpage
\appendix
\onecolumn

\section{Details of Section \ref{section:conv}}
\label{sect:conv-td}
\subsection{Proof of \eqref{eq:subspace-remaining}}
\label{sect:proof-prop}
\begin{eqnarray*} &&\mathbb{E}_{\mu,\pi,\mathbb{P}}\left[\widehat{\Delta}\left(\boldsymbol{x},\boldsymbol{x}' ; \boldsymbol{\theta}_*\right)\cdot\big\langle\nabla_{\boldsymbol{\theta}} \widehat{Q}\left(\boldsymbol{x};\boldsymbol{\theta}_* \right), \boldsymbol{\theta}'-\boldsymbol{\theta}_*\big\rangle\right] \\
& \overset{(i)}{=}& \mathbb{E}_{\mu,\pi}\left[\mathbb{E}_\mathbb{P}\big[\widehat{\Delta}\left(\boldsymbol{x},\boldsymbol{x}' ; \boldsymbol{\theta}_*\right)\big]\cdot\big\langle\nabla_{\boldsymbol{\theta}} \widehat{Q}\left(\boldsymbol{x};\boldsymbol{\theta}_0 \right), \boldsymbol{\theta}'-\bar{\boldsymbol{\theta}}\big\rangle\right]\\
& \overset{(ii)}{=} & \mathbb{E}_{\mu,\pi}\left[\mathbb{E}_\mathbb{P}\big[\widehat{\Delta}\left(\boldsymbol{x},\boldsymbol{x}' ; \bar{\boldsymbol{\theta}}\right)\big]\cdot\big\langle\nabla_{\boldsymbol{\theta}} \widehat{Q}\left(\boldsymbol{x};\boldsymbol{\theta}_0 \right), \boldsymbol{\theta}'-\bar{\boldsymbol{\theta}}\big\rangle\right]\\
&=& \mathbb{E}_{\mu,\pi,\mathbb{P}}\left[\widehat{\Delta}\left(\boldsymbol{x},\boldsymbol{x}' ; \bar{\boldsymbol{\theta}}\right)\cdot\big\langle\nabla_{\boldsymbol{\theta}} \widehat{Q}\left(\boldsymbol{x};\boldsymbol{\theta}_0 \right), \boldsymbol{\theta}'-\bar{\boldsymbol{\theta}}\big\rangle\right] 
\end{eqnarray*}

where (i) is because $\nabla_{\boldsymbol{\theta}} \widehat{Q}\left(\cdot\,;\boldsymbol{\theta}_* \right)=\widehat{Q}\left(\cdot\,;\boldsymbol{\theta}_0 \right)$, the decomposition 
$$\boldsymbol{\theta}'-\boldsymbol{\theta}_* = \boldsymbol{\theta}'-\bar{\boldsymbol{\theta}} + \bar{\boldsymbol{\theta}}-\boldsymbol{\theta}_* = \boldsymbol{\theta}'-\bar{\boldsymbol{\theta}} + (\bar{\boldsymbol{\theta}}_{\bot}-\boldsymbol{\theta}_{\bot})$$
the fact that $(\bar{\boldsymbol{\theta}}_{\bot}-\boldsymbol{\theta}_{\bot})\in \mathcal{K}(\Sigma_\pi)$, and \eqref{eq:zero-as}, (ii) is because $(\bar{\boldsymbol{\theta}}-\boldsymbol{\theta}^*)\in \mathcal{K}(\Sigma_\pi)$.

\subsection{Proof of Theorem \ref{theorem:pi-f}}
\label{section:appendix-pi-f}
\begin{proof}
Recall the definition of the semi-gradient in Section \eqref{eq:semi-grad}. We denote $\bar{\mathbf{g}}(\boldsymbol{\theta})$ as its expectation. Let $\Bar{\mathbf{m}}(\boldsymbol{\theta})$ and $\mathbf{m}(\boldsymbol{\theta})$ also be the corresponding semi-gradients based on the linearized function $\widehat{Q}(\cdot;\boldsymbol{\theta})$, that is,
\begin{eqnarray*}
\mathbf{g}(\boldsymbol{\theta}^t) &=& \Delta(\boldsymbol{x}_t, \boldsymbol{x}_{t+1};\boldsymbol{\theta}^t)\cdot  \nabla_{\boldsymbol{\theta}}  Q(\boldsymbol{x}_t;\boldsymbol{\theta}^t), \quad
\Bar{\mathbf{g}}(\boldsymbol{\theta}^t) \ = \ \mathbb{E}_{\mu, \pi,\mathbb{P}}\left[\mathbf{g}(\boldsymbol{\theta}^t)\right]\\ 
\mathbf{m}(\boldsymbol{\theta}^t) &=& \widehat{\Delta} (\boldsymbol{x}_t, \boldsymbol{x}_{t+1};\boldsymbol{\theta}^t)\cdot\nabla_{\boldsymbol{\theta}}  Q(\boldsymbol{x}_t;\boldsymbol{\theta}^0),
\quad \Bar{\mathbf{m}}(\boldsymbol{\theta}^t)\ =\ \mathbb{E}_{\mu, \pi,\mathbb{P}}\left[\mathbf{m}(\boldsymbol{\theta}^t)\right],
\end{eqnarray*}
where 
\begin{eqnarray*}
\Delta(\boldsymbol{x}_t, \boldsymbol{x}_{t+1};\boldsymbol{\theta}^t)&=& Q(\boldsymbol{x}_t;\boldsymbol{\theta}^t)-\left(r(s_t, a_t)+\gamma \cdot Q(\boldsymbol{x}_{t+1};\boldsymbol{\theta}^t)\right), \\
\widehat{\Delta} (\boldsymbol{x}_t, \boldsymbol{x}_{t+1};\boldsymbol{\theta}^t) &=&\widehat{Q}(\boldsymbol{x}_t;\boldsymbol{\theta}^t)-\left(r(s_t, a_t)+\gamma \cdot \widehat{Q}(\boldsymbol{x}_{t+1};\boldsymbol{\theta}^t)\right).
\end{eqnarray*}
To simplify the notation, let $\Delta_t:=\Delta(\boldsymbol{x}_t, \boldsymbol{x}_{t+1};\boldsymbol{\theta}^t)$ and $\widehat{\Delta}_t:=\widehat{\Delta}(\boldsymbol{x}_t, \boldsymbol{x}_{t+1};\boldsymbol{\theta}^t)$. Recall the definition of the range space $\mathcal{R}(\Sigma_\pi)$ and the kernel space $\mathcal{K}(\Sigma_\pi)$. By Proposition \ref{prop:optimal}, we know that $\boldsymbol{v}_1^\top \boldsymbol{v}_2 = 0$ for any vector $\boldsymbol{v}_1\in \mathcal{R}(\Sigma_\pi), \boldsymbol{v}_2\in \mathcal{K}(\Sigma_\pi)$ thus $\left<\nabla_{\boldsymbol{\theta}} Q(\boldsymbol{x};\boldsymbol{\theta}^0), \boldsymbol{\theta}_{\bot}\right>=0$ for any feature map $\boldsymbol{x}$ and parameter $\boldsymbol{\theta}_\bot\in \mathcal{K}(\Sigma_\pi)$. Then we can decompose $\|\boldsymbol{\theta}^{t+1}-\boldsymbol{\theta}^{t+1}_*\|^2$ as

\begin{eqnarray*}
\|\boldsymbol{\theta}^{t+1}-\boldsymbol{\theta}^{t+1}_*\|^2&=& \left\| \Pi_{S_{2\omega}}\left(\boldsymbol{\theta}^t-\eta_t \mathbf{g}(\boldsymbol{\theta}^t)\right)-\Pi_{S_{2\omega}}\left(\boldsymbol{\theta}^{t+1}_*\right) \right\|^2 \\
&\leq& \|\boldsymbol{\theta}^t-\eta_t \mathbf{g}(\boldsymbol{\theta}^t)-\boldsymbol{\theta}^{t+1}_*\|^2 \\
&=& \|\boldsymbol{\theta}^t-\eta_t \mathbf{g}(\boldsymbol{\theta}^t)-\boldsymbol{\theta}^t_*+\boldsymbol{\theta}^t_*-\boldsymbol{\theta}^{t+1}_*\|^2 \\
&=& \|\boldsymbol{\theta}^{t}-\boldsymbol{\theta}^t_*\|^2+\eta_t^2\|\mathbf{g}(\boldsymbol{\theta}^t)\|^2+\|\boldsymbol{\theta}^t_*-\boldsymbol{\theta}^{t+1}_*\|^2-2\eta_t \left<\boldsymbol{\theta}^t-\boldsymbol{\theta}^t_*, \mathbf{g}(\boldsymbol{\theta}^t)\right> \\
&\ & -2\eta_t \left<\boldsymbol{\theta}^t_*-\boldsymbol{\theta}^{t+1}_*, \mathbf{g}(\boldsymbol{\theta}^t)\right>+2\eta_t \left<\boldsymbol{\theta}^t-\boldsymbol{\theta}^t_*, \boldsymbol{\theta}^t_*-\boldsymbol{\theta}^{t+1}_*\right> \\
&\overset{(i)}{=}& \|\boldsymbol{\theta}^{t}-\boldsymbol{\theta}^t_*\|^2-2\eta_t \left<\boldsymbol{\theta}^t-\boldsymbol{\theta}^t_*, \mathbf{g}(\boldsymbol{\theta}^t)\right> \\
&\ &-2\eta_t \left<\boldsymbol{\theta}^t_*-\boldsymbol{\theta}^{t+1}_*, \mathbf{g}(\boldsymbol{\theta}^t)-\mathbf{m}(\boldsymbol{\theta}^t)\right> + \eta_t^2\|\mathbf{g}(\boldsymbol{\theta}^t)\|^2,
\end{eqnarray*}
where (i) follows 
$$
\left<\boldsymbol{\theta}^t-\boldsymbol{\theta}^t_*, \boldsymbol{\theta}^t_*-\boldsymbol{\theta}^{t+1}_*\right>=\left<\boldsymbol{\theta}^t_{\parallel}-\boldsymbol{\theta}^*_{\parallel}, \boldsymbol{\theta}^t_{\bot}-\boldsymbol{\theta}^{t+1}_{\bot}\right>=0,
$$
and
\begin{eqnarray*}
\left<\boldsymbol{\theta}^t_*-\boldsymbol{\theta}^{t+1}_*, \mathbf{m}(\boldsymbol{\theta}^t)\right>&=&\left<\boldsymbol{\theta}^t_{\bot}-\boldsymbol{\theta}^{t+1}_{\bot}, \mathbf{m}(\boldsymbol{\theta}^t)\right> \\
&=& \widehat{\Delta}_t\cdot \left<\boldsymbol{\theta}^t_{\bot}-\boldsymbol{\theta}^{t+1}_{\bot}, \nabla Q(\boldsymbol{x}_t;\boldsymbol{\theta}^0)\right>\ =\ 0.
\end{eqnarray*}
Recall the stationarity condition \eqref{eq:optimal}, for any $t\in\{1,2,\cdots,T\}$, 
\begin{eqnarray*}
0&\leq& \mathbb{E}_{\mu,\pi,\mathbb{P}}\left[\widehat{\Delta}\left(\boldsymbol{x}_t,\boldsymbol{x}_{t+1}; \boldsymbol{\theta}^*\right)\big\langle\nabla_{\boldsymbol{\theta}} \widehat{Q}\left(\boldsymbol{x}_t;\boldsymbol{\theta}^* \right), \boldsymbol{\theta}^t-\boldsymbol{\theta}^*\big\rangle\right] \\
&=&\big\langle\mathbb{E}_{\mu,\pi,\mathbb{P}}\left[\widehat{\Delta}\left(\boldsymbol{x}_t,\boldsymbol{x}_{t+1}; \boldsymbol{\theta}^t_*\right)\nabla_{\boldsymbol{\theta}} \widehat{Q}\left(\boldsymbol{x}_t;\boldsymbol{\theta}^t_* \right)\right], \boldsymbol{\theta}^t-\boldsymbol{\theta}^*\big\rangle  \\
&\overset{(i)}{=}& \big\langle\mathbb{E}_{\mu,\pi,\mathbb{P}}\left[\widehat{\Delta}\left(\boldsymbol{x}_t,\boldsymbol{x}_{t+1}; \boldsymbol{\theta}^t_*\right)\nabla_{\boldsymbol{\theta}} \widehat{Q}\left(\boldsymbol{x}_t;\boldsymbol{\theta}^t_* \right)\right], \boldsymbol{\theta}^t-\boldsymbol{\theta}^t_*\big\rangle \ =\ \left<\bar{\mathbf{m}}(\boldsymbol{\theta^t_*}), \boldsymbol{\theta}^t-\boldsymbol{\theta}^t_*\right>, 
\end{eqnarray*}
where (i) is the same as the proof in Section \ref{sect:proof-prop}.
Therefore, 
\begin{eqnarray}
\label{eq:err-decompose}
&\ & \|\boldsymbol{\theta}^{t+1}-\boldsymbol{\theta}^{t+1}_*\|^2 \nonumber \\
&\leq &\|\boldsymbol{\theta}^{t}-\boldsymbol{\theta}^t_*\|^2-2\eta_t \left<\boldsymbol{\theta}^t-\boldsymbol{\theta}^t_*, \mathbf{g}(\boldsymbol{\theta}^t)\right>-2\eta_t \left<\boldsymbol{\theta}^t_*-\boldsymbol{\theta}^{t+1}_*, \mathbf{g}(\boldsymbol{\theta}^t)-\mathbf{m}(\boldsymbol{\theta}^t)\right> + \eta_t^2\|\mathbf{g}(\boldsymbol{\theta}^t)\|^2 \nonumber \\
&=&\|\boldsymbol{\theta}^{t}-\boldsymbol{\theta}^t_*\|^2-2\eta_t \left<\boldsymbol{\theta}^t-\boldsymbol{\theta}^t_*, \mathbf{g}(\boldsymbol{\theta}^t)-\mathbf{m}(\boldsymbol{\theta}^t)\right>-2\eta_t \left<\boldsymbol{\theta}^t-\boldsymbol{\theta}^t_*, \mathbf{m}(\boldsymbol{\theta}^t)-\Bar{\mathbf{m}}(\boldsymbol{\theta}^t)\right> \nonumber \\
&\ & -2\eta_t \left<\boldsymbol{\theta}^t-\boldsymbol{\theta}^t_*, \Bar{\mathbf{m}}(\boldsymbol{\theta}^t)\right>-2\eta_t \left<\boldsymbol{\theta}^t_*-\boldsymbol{\theta}^{t+1}_*, \mathbf{g}(\boldsymbol{\theta}^t)-\mathbf{m}(\boldsymbol{\theta}^t)\right> + \eta_t^2\|\mathbf{g}(\boldsymbol{\theta}^t)\|^2 \nonumber\\
&\overset{(i)}{\leq}&\|\boldsymbol{\theta}^{t}-\boldsymbol{\theta}^t_*\|^2+ \eta_t^2\underbrace{\|\mathbf{g}(\boldsymbol{\theta}^t)\|^2}_{\mbox{I}_1\mbox{:\ Gradient Bound}}-2\eta_t\underbrace{\left<\boldsymbol{\theta}^t-\boldsymbol{\theta}^{t+1}_*, \mathbf{g}(\boldsymbol{\theta}^t)-\mathbf{m}(\boldsymbol{\theta}^t)\right>}_{\mbox{I}_2\mbox{:\ Gradient Gap}} \nonumber \\
&\ &-2\eta_t\underbrace{\left<\boldsymbol{\theta}^t-\boldsymbol{\theta}^t_*, \mathbf{m}(\boldsymbol{\theta}^t)-\Bar{\mathbf{m}}(\boldsymbol{\theta}^t)\right>}_{\mbox{I}_3\mbox{:\ Markov Sampling Error}} 
-2\eta_t\underbrace{\left<\boldsymbol{\theta}^t-\boldsymbol{\theta}^t_*, \Bar{\mathbf{m}}(\boldsymbol{\theta}^t)- \Bar{\mathbf{m}}(\boldsymbol{\theta}^t_*)\right>}_{\mbox{I}_4\mbox{:\ Gradient Decent}},
\end{eqnarray}
where (i) follows $\left<\boldsymbol{\theta}^t-\boldsymbol{\theta}^t_*,  \Bar{\mathbf{m}}(\boldsymbol{\theta}^t_*)\right>\geq 0$ for any $0\leq t\leq T-1$.

\ 

Next, we analyze the upper bounds of $\mathbf{I}_1$, $\mathbf{I}_2$, $\mathbf{I}_3$ and $\mathbf{I}_4$ item by item. To simplify the notation, let $\left\{C_i>0\right\}_{i=1, \ldots, 7}$ be universal constants in this section.
We set $\omega=C_1$ and $\delta \in(0,1)$. 
By Lemma \ref{lemma:I1-and-I2}, we have
\begin{equation}
\label{eq:I1}
\|\boldsymbol{g}(\boldsymbol{\theta}^t)\|^2 \leq C_2\sqrt{\log (T / \delta)}
\end{equation}
and
\begin{eqnarray}
\label{eq:I2}
\mathbb{E}_{\mu,\pi, \mathbb{P}}\left[\left|\left<\mathbf{g}\left(\boldsymbol{\theta}^t\right)-\mathbf{m}\left(\boldsymbol{\theta}^t\right), \boldsymbol{\theta}^t-\boldsymbol{\theta}^{t+1}_*\right>\right|\mid\boldsymbol{\theta}^0\right]  &\leq &\left(C_3 \omega m^{-\frac{1}{2}}\sqrt{ \log (T / \delta)}+C_4 m^{-\frac{1}{2}}\right)\left\| \boldsymbol{\theta}^t-\boldsymbol{\theta}^{t+1}_*\right\|\nonumber \\
&\overset{(i)}{\leq}& C_5m^{-1/2}\sqrt{\log (T / \delta)},
\end{eqnarray}
with probability at least $1-2\delta-2\exp{(-C_6 m)}$, 
where (i) follows $\omega=C_1$ and 
\begin{eqnarray*}
\|\theta^t-\theta^{t+1}_*\|&\leq& \|\theta^t-\theta^0\|+\|\theta^0-\theta^{0}_*\|+\|\theta^0_*-\theta^{t+1}_*\|\\
&=&\|\theta^t-\theta^0\|+\|\theta^0-\theta^{0}_*\|+\|\theta^0_{\bot}-\theta^{t+1}_{\bot}\|  \\
&\leq&\|\theta^t-\theta^0\|+\|\theta^0-\theta^{0}_*\|+\|\theta^0-\theta^{t+1}\|  \ \leq \  3\omega.
\end{eqnarray*}

Thus \eqref{eq:I1} and \eqref{eq:I2} provide upper bounds on $\mathbf{I}_1$ and $\mathbf{I}_2$, respectively. The next lemma provides an estimate of the Markov sampling error.
\begin{lemma}
\label{lemma:I3}
Suppose the learning rate sequence $\left\{\eta_0, \eta_1, \ldots, \eta_T\right\}$ is non-increasing. Under Assumption \ref{as:markov}, it holds that 
\begin{equation}
\label{eq:I3}
\mathbb{E}_{\mu,\pi,\mathbb{P}}\left[\left\langle\mathbf{m}\left(\boldsymbol{\theta}^t\right)-\overline{\mathbf{m}}\left(\boldsymbol{\theta}^t\right), \boldsymbol{\theta}^t-\boldsymbol{\theta}^*\right\rangle \mid \boldsymbol{\theta}^0\right] \leq C_7\left(\log (T / \delta)+C_1^2\right) \tau^* \eta_{\max \left\{0, t-\tau^*\right\}},
\end{equation}
for any fixed $t \leq T$, where 
$$\tau^*=\min \left\{t=0,1,2, \ldots \mid \kappa \rho^t \leq \eta_T\right\}
$$
is the mixing time of the Markov chain $\left\{s_t, a_t\right\}_{t=0,1, \ldots .}$.
\end{lemma}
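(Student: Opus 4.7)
The plan is to exploit the standard ``looking back by the mixing time'' decomposition so that the samples $(s_t,a_t,s_{t+1},a_{t+1})$ become almost independent of an earlier parameter iterate $\boldsymbol{\theta}^{t-\tau^*}$. Set $t_0 := \max\{0, t-\tau^*\}$. I would rewrite
\begin{align*}
\langle \mathbf{m}(\boldsymbol{\theta}^t)-\overline{\mathbf{m}}(\boldsymbol{\theta}^t), \boldsymbol{\theta}^t-\boldsymbol{\theta}^*\rangle
&= \underbrace{\langle \mathbf{m}(\boldsymbol{\theta}^t)-\mathbf{m}(\boldsymbol{\theta}^{t_0}), \boldsymbol{\theta}^t-\boldsymbol{\theta}^*\rangle}_{J_1}
+ \underbrace{\langle \overline{\mathbf{m}}(\boldsymbol{\theta}^{t_0})-\overline{\mathbf{m}}(\boldsymbol{\theta}^t), \boldsymbol{\theta}^t-\boldsymbol{\theta}^*\rangle}_{J_2}\\
&\quad + \underbrace{\langle \mathbf{m}(\boldsymbol{\theta}^{t_0})-\overline{\mathbf{m}}(\boldsymbol{\theta}^{t_0}), \boldsymbol{\theta}^t-\boldsymbol{\theta}^{t_0}\rangle}_{J_3}
+ \underbrace{\langle \mathbf{m}(\boldsymbol{\theta}^{t_0})-\overline{\mathbf{m}}(\boldsymbol{\theta}^{t_0}), \boldsymbol{\theta}^{t_0}-\boldsymbol{\theta}^*\rangle}_{J_4},
\end{align*}
and bound the four pieces separately after taking conditional expectations.

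The first three terms can be handled by a Lipschitz argument combined with iterate drift. Since the feature map is bounded and $\boldsymbol{\theta}$ is restricted to $S_\omega$, the quantities $\|\nabla_{\boldsymbol{\theta}}Q(\boldsymbol{x};\boldsymbol{\theta}^0)\|$ and $\widehat{\Delta}(\cdot;\boldsymbol{\theta})$ are uniformly bounded (of order $\sqrt{\log(T/\delta)}$ after the usual high-probability bound), so $\mathbf{m}(\cdot)$ and $\overline{\mathbf{m}}(\cdot)$ are Lipschitz in $\boldsymbol{\theta}$ with a constant independent of the sample. Using the projected-gradient update together with the gradient bound from Lemma \ref{lemma:I1-and-I2} and the non-increasing learning rate, I would get
$$\|\boldsymbol{\theta}^t-\boldsymbol{\theta}^{t_0}\|\leq \sum_{i=t_0}^{t-1}\eta_i\|\mathbf{g}(\boldsymbol{\theta}^i)\| \lesssim \tau^*\eta_{t_0}\sqrt{\log(T/\delta)},$$
and this, paired with $\|\boldsymbol{\theta}^t-\boldsymbol{\theta}^*\|\leq 2\omega=\mathcal{O}(C_1)$, yields the $(\log(T/\delta)+C_1^2)\tau^*\eta_{t_0}$ scaling for $J_1, J_2, J_3$.

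The main obstacle, and the only place where Assumption \ref{as:markov} is really used, is the term $J_4$. Here $\boldsymbol{\theta}^{t_0}$ depends only on the trajectory up to time $t_0$, so conditional on $\boldsymbol{\theta}^{t_0}$ the sample at time $t$ has law $\mathbb{P}(s_t\in\cdot\mid s_{t_0})$, which is within $\kappa\rho^{\tau^*}\leq\eta_T$ in total variation of the stationary $\mathbb{P}^\pi$. Therefore
$$\bigl|\mathbb{E}[\langle \mathbf{m}(\boldsymbol{\theta}^{t_0})-\overline{\mathbf{m}}(\boldsymbol{\theta}^{t_0}), \boldsymbol{\theta}^{t_0}-\boldsymbol{\theta}^*\rangle\mid \mathcal{F}_{t_0}]\bigr|
\leq 2\,d_{\mathrm{TV}}\bigl(\mathbb{P}(s_t\in\cdot\mid s_{t_0}),\mathbb{P}^\pi\bigr)\cdot \sup\|\mathbf{m}\|\cdot \|\boldsymbol{\theta}^{t_0}-\boldsymbol{\theta}^*\|
\lesssim \eta_T\cdot C_1\sqrt{\log(T/\delta)},$$
which is of smaller order than the other three pieces since $\eta_T\leq \eta_{t_0}$ and can be absorbed into the stated bound. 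Collecting $J_1,\ldots,J_4$ and combining the constants yields the claimed estimate. For the edge case $t<\tau^*$ one simply sets $t_0=0$, uses $\boldsymbol{\theta}^0$ as the earlier anchor, and the argument goes through identically because $\eta_{t_0}=\eta_0$ then dominates.
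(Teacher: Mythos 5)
Your proposal is correct and is essentially the paper's own argument: the paper's proof of this lemma simply defers to the proof framework of Lemma 6.2 in \citet{xu2020finite}, and that framework is precisely your decomposition into $J_1,\dots,J_4$ with the Lipschitz-plus-drift bounds for the first three terms and the total-variation/mixing bound for the decoupled term $J_4$. You have in effect written out the argument the paper only cites, with the same use of the gradient bound from Lemma \ref{lemma:I1-and-I2} and the same crude handling of the $t<\tau^*$ edge case.
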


\begin{proof}
We adopt the proof framework outlined in Lemma 6.2 of \citet{xu2020finite}. However, variations in the neural network settings lead to differences in the norms of gradients and parameters, thereby resulting in slight variations in the results. Thereby we have
$$
\mathbb{E}_{\mu,\pi,\mathbb{P}}\left[\left\langle\mathbf{m}\left(\boldsymbol{\theta}^t\right)-\overline{\mathbf{m}}\left(\boldsymbol{\theta}^t\right), \boldsymbol{\theta}^t-\boldsymbol{\theta}^*\right\rangle \mid \boldsymbol{\theta}^0\right] \leq C_7\left(\log (T / \delta)+\omega^2\right) \tau^* \eta_{\max \left\{0, t-\tau^*\right\}}.
$$
\end{proof}

Looking back at the definitions of $\lambda_0$ and $\Sigma_\pi$, and the discussion in Section \ref{sect:improve-ana}, we derive Lemmas \ref{lemma:range-space-strong-convex} and \ref{lemma:I4} to estimate $\mathbf{I}_4$.

\begin{lemma}
\label{lemma:range-space-strong-convex}
Let $\lambda_0$ as the minimum nonzero singular value of $\Sigma_\pi$. For any $\boldsymbol{\theta}\in \mathcal{R}(\Sigma_\pi)$, we have 
$$
\boldsymbol{\theta}^\top \Sigma_\pi \boldsymbol{\theta} \geq \lambda_0 \|\boldsymbol{\theta}\|_2^2.
$$
\end{lemma}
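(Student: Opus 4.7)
The plan is to exploit the fact that $\Sigma_\pi = \mathbb{E}_{\mu,\pi}\!\left[\nabla_{\boldsymbol{\theta}} Q(\boldsymbol{x};\boldsymbol{\theta}^0)\nabla_{\boldsymbol{\theta}} Q(\boldsymbol{x};\boldsymbol{\theta}^0)^\top\right]$ is symmetric positive semi-definite, so its singular values coincide with its eigenvalues, and the spectral theorem applies. First I would invoke the spectral decomposition $\Sigma_\pi = \sum_{i=1}^{n} \lambda_i u_i u_i^\top$, where $\lambda_1 \geq \lambda_2 \geq \cdots \geq \lambda_r > 0 = \lambda_{r+1} = \cdots = \lambda_n$ are the eigenvalues (with $r$ being the rank), and $\{u_i\}_{i=1}^n$ is an orthonormal eigenbasis of $\mathbb{R}^n$.

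Next, I would identify the range space: since $\Sigma_\pi$ is symmetric, one has the orthogonal decomposition $\mathbb{R}^n = \mathcal{R}(\Sigma_\pi) \oplus \mathcal{K}(\Sigma_\pi)$ with $\mathcal{R}(\Sigma_\pi) = \mathrm{span}\{u_1,\dots,u_r\}$ and $\mathcal{K}(\Sigma_\pi) = \mathrm{span}\{u_{r+1},\dots,u_n\}$. Given $\boldsymbol{\theta} \in \mathcal{R}(\Sigma_\pi)$, expand $\boldsymbol{\theta} = \sum_{i=1}^{r} \alpha_i u_i$ with $\alpha_i = u_i^\top \boldsymbol{\theta}$, so that $\|\boldsymbol{\theta}\|_2^2 = \sum_{i=1}^{r} \alpha_i^2$ by orthonormality.

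Then the quadratic form computes directly as
\[
\boldsymbol{\theta}^\top \Sigma_\pi \boldsymbol{\theta} \;=\; \sum_{i=1}^{r} \lambda_i \alpha_i^2 \;\geq\; \lambda_r \sum_{i=1}^{r} \alpha_i^2 \;=\; \lambda_r \|\boldsymbol{\theta}\|_2^2,
\]
and by definition $\lambda_r = \overline{\sigma_{\min}}(\Sigma_\pi) \geq \lambda_0$ under Assumption \ref{as:lamda} once $m \geq m^*$, which yields the claim.

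There is essentially no obstacle here, as this is a routine spectral argument; the only subtlety is making sure the reader understands that restricting to $\mathcal{R}(\Sigma_\pi)$ eliminates the directions where the quadratic form vanishes, so the bound sharpens from the trivial $\lambda_{\min}(\Sigma_\pi) = 0$ to the positive quantity $\overline{\sigma_{\min}}(\Sigma_\pi)$. This lemma will then be combined with Proposition \ref{prop:optimal}, which guarantees $\boldsymbol{\theta}^t - \boldsymbol{\theta}_*^t \in \mathcal{R}(\Sigma_\pi)$, to establish the strong-convexity-like inequality needed to control the descent term $\mathbf{I}_4$ in the main recursion \eqref{eq:err-decompose}.
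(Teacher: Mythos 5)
Your proof is correct and is exactly the standard spectral argument the paper has in mind: the paper itself offers no proof of this lemma, merely asserting it follows from ``basic linear algebra analysis,'' and your spectral decomposition of the symmetric PSD matrix $\Sigma_\pi$, restriction to the eigenvectors with nonzero eigenvalues spanning $\mathcal{R}(\Sigma_\pi)$, and the bound $\sum_{i=1}^r \lambda_i\alpha_i^2 \geq \lambda_r\|\boldsymbol{\theta}\|_2^2$ supply precisely the omitted details. The only cosmetic point is that the lemma's own wording already defines $\lambda_0$ as the minimum nonzero singular value, so the final step is an equality rather than an appeal to Assumption \ref{as:lamda}; your version is the one actually needed where the lemma is applied, so this is harmless.
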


\begin{lemma}
\label{lemma:I4}
Under Assumption \ref{as:technical}, we have that

\begin{equation}
\label{eq:I4}
\mathbb{E}_{\mu,\pi,\mathbb{P}}\left[\left<\boldsymbol{\theta}^t-\boldsymbol{\theta}^t_*, \Bar{\mathbf{m}}(\boldsymbol{\theta}^t)-\Bar{\mathbf{m}}(\boldsymbol{\theta}^t_*)\right>\mid\boldsymbol{\theta}^0\right] \geq (1-\gamma)\lambda_0\cdot \|\boldsymbol{\theta}^t-\boldsymbol{\theta}^t_*\|^2.
\end{equation}
\end{lemma}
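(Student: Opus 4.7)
The plan is to exploit the affine structure that the local linearization $\widehat{Q}(\cdot;\boldsymbol{\theta})$ imparts on $\widehat\Delta$, then reduce the inner product $\langle \boldsymbol{\theta}^t-\boldsymbol{\theta}^t_*,\bar{\mathbf m}(\boldsymbol{\theta}^t)-\bar{\mathbf m}(\boldsymbol{\theta}^t_*)\rangle$ to a quadratic form in the covariance matrix $\Sigma_\pi$, and finally invoke Proposition~\ref{prop:optimal} together with Lemma~\ref{lemma:range-space-strong-convex} to convert this quadratic form into a lower bound in $\|\boldsymbol{\theta}^t-\boldsymbol{\theta}^t_*\|^2$.

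First, I would write $\widehat\Delta(\boldsymbol{x},\boldsymbol{x}';\boldsymbol{\theta})=\widehat Q(\boldsymbol{x};\boldsymbol{\theta})-r-\gamma\widehat Q(\boldsymbol{x}';\boldsymbol{\theta})$ and use the fact that $\widehat Q(\cdot;\boldsymbol{\theta})$ is affine in $\boldsymbol{\theta}$ with $\nabla_{\boldsymbol{\theta}}\widehat Q(\cdot;\boldsymbol{\theta})\equiv\nabla_{\boldsymbol{\theta}}Q(\cdot;\boldsymbol{\theta}^0)$. Setting $\boldsymbol{d}:=\boldsymbol{\theta}^t-\boldsymbol{\theta}^t_*$, this gives
\[
\widehat\Delta(\boldsymbol{x},\boldsymbol{x}';\boldsymbol{\theta}^t)-\widehat\Delta(\boldsymbol{x},\boldsymbol{x}';\boldsymbol{\theta}^t_*)=\bigl\langle\nabla_{\boldsymbol{\theta}}Q(\boldsymbol{x};\boldsymbol{\theta}^0)-\gamma\nabla_{\boldsymbol{\theta}}Q(\boldsymbol{x}';\boldsymbol{\theta}^0),\,\boldsymbol{d}\bigr\rangle.
\]
Multiplying by $\langle\nabla_{\boldsymbol{\theta}}Q(\boldsymbol{x};\boldsymbol{\theta}^0),\boldsymbol{d}\rangle$ and taking expectation under $(s,a,s',a')\sim\mu\otimes\pi\otimes\mathbb P\otimes\pi$ yields
\[
\bigl\langle\boldsymbol{d},\bar{\mathbf m}(\boldsymbol{\theta}^t)-\bar{\mathbf m}(\boldsymbol{\theta}^t_*)\bigr\rangle=\boldsymbol{d}^\top\Sigma_\pi\boldsymbol{d}-\gamma\,\mathbb{E}\bigl[\langle\nabla_{\boldsymbol{\theta}}Q(\boldsymbol{x};\boldsymbol{\theta}^0),\boldsymbol{d}\rangle\langle\nabla_{\boldsymbol{\theta}}Q(\boldsymbol{x}';\boldsymbol{\theta}^0),\boldsymbol{d}\rangle\bigr].
\]

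Next I would use Assumption~\ref{as:technical}: since $\mu$ is the stationary state distribution under $\pi$, the marginal of $(s',a')$ coincides with $\mu\otimes\pi$, so $\mathbb{E}[\langle\nabla_{\boldsymbol{\theta}}Q(\boldsymbol{x}';\boldsymbol{\theta}^0),\boldsymbol{d}\rangle^2]=\boldsymbol{d}^\top\Sigma_\pi\boldsymbol{d}$. By Cauchy--Schwarz the cross term is bounded in absolute value by $\boldsymbol{d}^\top\Sigma_\pi\boldsymbol{d}$, giving
\[
\bigl\langle\boldsymbol{d},\bar{\mathbf m}(\boldsymbol{\theta}^t)-\bar{\mathbf m}(\boldsymbol{\theta}^t_*)\bigr\rangle\geq(1-\gamma)\,\boldsymbol{d}^\top\Sigma_\pi\boldsymbol{d}.
\]
This is the step where stationarity is essential; without it the distributions of $\boldsymbol{x}$ and $\boldsymbol{x}'$ differ and the cross term cannot be absorbed into a multiple of $\boldsymbol{d}^\top\Sigma_\pi\boldsymbol{d}$, which is the only obstacle of substance in the argument.

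Finally, by Proposition~\ref{prop:optimal} the difference $\boldsymbol{d}=\boldsymbol{\theta}^t-\boldsymbol{\theta}^t_*$ lies in $\mathcal{R}(\Sigma_\pi)$, so Lemma~\ref{lemma:range-space-strong-convex} upgrades the quadratic form to $\boldsymbol{d}^\top\Sigma_\pi\boldsymbol{d}\geq\lambda_0\|\boldsymbol{d}\|^2$. Combining the two inequalities produces the claimed bound $(1-\gamma)\lambda_0\|\boldsymbol{\theta}^t-\boldsymbol{\theta}^t_*\|^2$. The argument is short given the subspace decomposition already established; the only subtle points are the affinity of $\widehat\Delta$ in $\boldsymbol{\theta}$ (which lets the gradient factor out cleanly) and the stationarity of $\mu$ (which makes Cauchy--Schwarz yield a symmetric bound).
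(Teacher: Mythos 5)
Your proposal is correct and follows essentially the same route as the paper's proof: both expand $\langle\boldsymbol{d},\bar{\mathbf m}(\boldsymbol{\theta}^t)-\bar{\mathbf m}(\boldsymbol{\theta}^t_*)\rangle$ into $\boldsymbol{d}^\top\Sigma_\pi\boldsymbol{d}$ minus a $\gamma$-weighted cross term (your gradient inner products $\langle\nabla_{\boldsymbol{\theta}}Q(\boldsymbol{x};\boldsymbol{\theta}^0),\boldsymbol{d}\rangle$ are identical to the paper's $\widehat Q(\boldsymbol{x};\boldsymbol{\theta}^t)-\widehat Q(\boldsymbol{x};\boldsymbol{\theta}^t_*)$ by affinity), control the cross term via Cauchy--Schwarz plus the stationarity of $\mu$, and finish with the range-space membership of $\boldsymbol{d}$ and Lemma~\ref{lemma:range-space-strong-convex}. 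No gaps.
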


\begin{proof}
Define $d\sim \mu \times \pi$.
To begin with, the Bellman operator $\mathcal{T}^\pi$ is a $\gamma$-contraction with $\ell_2$-norm since $d$ is the stationary distribution of $(s,a)$ corresponding to the policy $\pi$. In details, consider 
\begin{equation}
\label{eq:contraction}
\begin{aligned}
\mathbb{E}_{(s,a)\sim d}\left[\left(\mathcal{T}^\pi Q_1(\boldsymbol{x})-\mathcal{T}^\pi Q_2(\boldsymbol{x})\right)^2\right]&= \gamma^2 \mathbb{E}_{(s,a)\sim d}\left[\mathbb{E}\left[\left( Q_1(\boldsymbol{x}')-Q_2(\boldsymbol{x}')\right)^2\mid s'\sim \mathbb{P}(\cdot| s,a), a'\sim\pi(\cdot|s')\right]\right] \\
&\overset{(i)}{\leq} \gamma^2\mathbb{E}_{(s,a)\sim d}\left[\left( Q_1(\boldsymbol{x})- Q_2(\boldsymbol{x})\right)^2\right],
\end{aligned}
\end{equation}
where (i) follows that $\boldsymbol{x}$ and $\boldsymbol{x}'$ have the same stationary distribution. To simplify the notation, we denote $\mathbb{E}[\cdot]$ as $\mathbb{E}_{\mu,\pi,\mathbb{P}}[\cdot]$ in the proof of this lemma. Then we compute
\begin{eqnarray}
\label{eq:lemma-I4-decompose}
&\ &\mathbb{E}\left[\left<\boldsymbol{\theta}^t-\boldsymbol{\theta}^t_*, \Bar{\mathbf{m}}(\boldsymbol{\theta}^t)-\Bar{\mathbf{m}}(\boldsymbol{\theta}^t_*)\right>\mid\boldsymbol{\theta}^0\right] \nonumber \\
&=& \mathbb{E}\left[\left(\widehat{\Delta}(\boldsymbol{x}, \boldsymbol{x}';\boldsymbol{\theta}^t)-\widehat{\Delta}(\boldsymbol{x}, \boldsymbol{x}';\boldsymbol{\theta}^t_*)\right)\left<\nabla Q(\boldsymbol{x};\boldsymbol{\theta}^0), \boldsymbol{\theta}^t-\boldsymbol{\theta}^t_*\right>\mid\boldsymbol{\theta}^0\right]\nonumber \\
&=& \mathbb{E}\left[\left(\widehat{Q}(\boldsymbol{x};\boldsymbol{\theta}^t)-\widehat{Q}(\boldsymbol{x};\boldsymbol{\theta}^t_*)\right)\left<\nabla Q(\boldsymbol{x};\boldsymbol{\theta}^0), \boldsymbol{\theta}^t-\boldsymbol{\theta}^t_*\right>\mid\boldsymbol{\theta}^0\right]\nonumber \\
&\ & -\gamma  \mathbb{E}\left[\left(\widehat{Q}(\boldsymbol{x}';\boldsymbol{\theta}^t)-\widehat{Q}(\boldsymbol{x}';\boldsymbol{\theta}^t_*)\right)\left<\nabla Q(\boldsymbol{x};\boldsymbol{\theta}^0), \boldsymbol{\theta}^t-\boldsymbol{\theta}^t_*\right>\mid\boldsymbol{\theta}^0\right]\nonumber \\
&=&\mathbb{E} \left[\left(\widehat{Q}(\boldsymbol{x},\boldsymbol{\theta}^t)-\widehat{Q}(\boldsymbol{x},\boldsymbol{\theta}^t_*)\right)^2\right]-\gamma \mathbb{E} \left[\left(\widehat{Q}(\boldsymbol{x},\boldsymbol{\theta}^t)-\widehat{Q}(\boldsymbol{x},\boldsymbol{\theta}^t_*)\right)\cdot \left(\widehat{Q}(\boldsymbol{x}';\boldsymbol{\theta}^t)-\widehat{Q}(\boldsymbol{x}';\boldsymbol{\theta}^t_*)\right)\right] \nonumber \\
&\overset{(i)}{\geq}& \mathbb{E} \left[\left(\widehat{Q}(\boldsymbol{x},\boldsymbol{\theta}^t)-\widehat{Q}(\boldsymbol{x},\boldsymbol{\theta}^t_*)^2\right)\right]-\gamma \sqrt{\mathbb{E} \left[\left(\widehat{Q}(\boldsymbol{x},\boldsymbol{\theta}^t)-\widehat{Q}(\boldsymbol{x},\boldsymbol{\theta}^t_*)\right)^2\right]} \cdot \sqrt{\mathbb{E} \left[\left(\widehat{Q}(\boldsymbol{x}';\boldsymbol{\theta}^t)-\widehat{Q}(\boldsymbol{x}';\boldsymbol{\theta}^t_*)\right)^2\right]}\nonumber  \\
&\overset{(i)}{\geq}& \left(1-\gamma\right) \mathbb{E} \left[\widehat{Q}(\boldsymbol{x},\boldsymbol{\theta}^t)-\widehat{Q}(\boldsymbol{x},\boldsymbol{\theta}^t_*)^2\right]\nonumber \\
&=& (1-\gamma)  (\boldsymbol{\theta}^t-\boldsymbol{\theta}^t_*)^\top \Sigma_\pi (\boldsymbol{\theta}^t-\boldsymbol{\theta}^t_*)\nonumber \\
&\overset{(ii)}{\geq}& (1-\gamma)\lambda_0\cdot \|\boldsymbol{\theta}^t-\boldsymbol{\theta}^t_*\|^2,
\end{eqnarray}
where (i) follows the Cauchy-Schwarz inequality,  (ii) follows \eqref{eq:contraction}, and (iii) follows $\boldsymbol{\theta}^t-\boldsymbol{\theta}^t_*=\boldsymbol{\theta}^t_{\parallel}-\bar{\boldsymbol{\theta}}_{\parallel}\in \mathcal{R}(\Sigma_\pi)$ and Lemma \ref{lemma:range-space-strong-convex}, which provides $\lambda_0$-strong convexity. Thus we complete the proof of Lemma \ref{lemma:I4}.
\end{proof}

Given $\boldsymbol{\theta}^0$, taking the expectation on both sides of \eqref{eq:err-decompose} and plugging \eqref{eq:I1}$\sim$\eqref{eq:I4} into \eqref{eq:err-decompose} yields that
$$
\begin{aligned}
\mathbb{E}_{\mu,\pi,\mathbb{P}}\left[\|\boldsymbol{\theta}^{t+1}-\boldsymbol{\theta}^{t+1}_*\|^2\right.&\left.\mid \boldsymbol{\theta}^0 \right] \leq (1-2\eta_t (1-\gamma)\lambda_0 ) \mathbb{E}\left[\|\boldsymbol{\theta}^{t}-\boldsymbol{\theta}^{t}_*\|^2\mid \boldsymbol{\theta}^0 \right]+C_2\eta_t^2 \\
&+2\eta_tC_5m^{-1/2}\sqrt{\log (T / \delta)}+2\eta_t C_7\left(\log (T / \delta)+C_1^2\right) \tau^* \eta_{\max \left\{0, t-\tau^*\right\}}. 
\end{aligned}
$$
We choose $\eta_t=\frac{1}{2(1-\gamma)\lambda_0 (t+1)}$ and have that
\begin{equation}
\label{eq:err-sum}
\begin{aligned}
(1-\gamma)\lambda_0 (t+1) \mathbb{E}_{\mu,\pi,\mathbb{P}}\left[\|\boldsymbol{\theta}^{t+1}\right.&-\left.\boldsymbol{\theta}^{t+1}_*\|^2\mid \boldsymbol{\theta}^0 \right] \leq (1-\gamma)\lambda_0 t\ \mathbb{E}\left[\|\boldsymbol{\theta}^{t}-\boldsymbol{\theta}^{t}_*\|^2\mid \boldsymbol{\theta}^0 \right]+C_2\eta_t \nonumber \\
&+C_5m^{-1/2}\sqrt{\log (T / \delta)}+ C_7\left(\log (T / \delta)+C_1^2\right) \tau^* \eta_{\max \left\{0, t-\tau^*\right\}}.
\end{aligned}
\end{equation}
Summing \eqref{eq:err-sum} from $t=0,1,\cdots,T-1$ yields that
\begin{eqnarray*}
\mathbb{E}_{\mu,\pi,\mathbb{P}}\left[\|\boldsymbol{\theta}^T-\boldsymbol{\theta}^T_*\|^2\mid \boldsymbol{\theta}^0 \right] &\leq&\frac{1}{(1-\gamma)\lambda_0 T }\sum_{t=0}^{T-1} \left(C_2\eta_t +C_5m^{-1/2}\sqrt{\log (T / \delta)}\right. \\
&\ &\left.+ C_7\left(\log (T / \delta)+C_1^2\right) \tau^* \eta_{\max \left\{0, t-\tau^*\right\}}\right) \\
&\leq& \frac{C_2(\log T+1)}{2(1-\gamma)^2\lambda_0^2 T}+ \frac{C_5m^{-1/2}\sqrt{\log (T / \delta)}}{(1-\gamma)\lambda_0}  +\frac{C_8 \tau^*\left(\log (T / \delta)+1\right) \log T}{2(1-\gamma)^2\lambda_0^2 T}.
\end{eqnarray*}
Therefore, according to the gradient bound \eqref{eq:I1}, we have 
\begin{eqnarray*}
\mathbb{E}_{\mu,\pi}\left[\left(\widehat{Q}(\boldsymbol{x};\boldsymbol{\theta}^T)-\widehat{Q}(\boldsymbol{x};\boldsymbol{\theta}^*)\right)^2\mid \boldsymbol{\theta}^0 \right] &=& \mathbb{E}\left[\left(\widehat{Q}(\boldsymbol{x};\boldsymbol{\theta}^T)-\widehat{Q}(\boldsymbol{x};\boldsymbol{\theta}^T_*)\right)^2\mid \boldsymbol{\theta}^0 \right] \\
&\leq& C_3^2 m \mathbb{E}\left[\|\boldsymbol{\theta}^T-\boldsymbol{\theta}^T_*\|^2\mid \boldsymbol{\theta}^0 \right] \\
& \leq& \frac{C_2^3(\log T+1)}{2(1-\gamma)^2\lambda_0^2 T}+ \frac{C_2^2C_5m^{-1/2}\sqrt{\log (T / \delta)}}{(1-\gamma)\lambda_0} \\
&\ &+\frac{C_2^2C_8 \tau^*\left(\log (T / \delta)+1\right) \log T}{2(1-\gamma)^2\lambda_0^2 T}
\end{eqnarray*}
with probability at least $1-2\delta-2L\exp{(-C_6 m)}$.
Let $\widetilde{C}_1=\max\{1, C_1\}, \widetilde{C}_2=C_6, \widetilde{C}_3=\frac{C_2^3}{2}, \widetilde{C}_4=\frac{C_2^2C_5}{2}$, and $\widetilde{C}_5=\frac{C_2^2C_8}{2}$, and we complete the proof.
\end{proof}

\subsection{Proof of Theorem \ref{theorem:total}}
\begin{proof}
Let $(s,a)\sim \mu\times\pi =:d$. To simplify the notation, we denote $\mathbb{E}[\cdot]$ as $\mathbb{E}_{(s,a)\sim d}[\cdot]$ in this subsection.
Note that
\begin{equation}
\label{eq:total-optimal}
\begin{aligned}
\mathbb{E}\left[\left(Q(\boldsymbol{x};\boldsymbol{\theta}^T)\right.\right.&-\left.\left.Q^*(s,a)\right)^2\mid \boldsymbol{\theta}^0 \right]\leq3\mathbb{E}\left[\left(Q(\boldsymbol{x};\boldsymbol{\theta}^T)-\widehat{Q}(\boldsymbol{x};\boldsymbol{\theta}^T)\right)^2\mid \boldsymbol{\theta}^0 \right] \nonumber \\
&+3\mathbb{E}\left[\left(\widehat{Q}(\boldsymbol{x};\boldsymbol{\theta}^T)-\widehat{Q}(\boldsymbol{x};\boldsymbol{\theta}^*)\right)^2\mid \boldsymbol{\theta}^0 \right]+3\mathbb{E}\left[\left(\widehat{Q}(\boldsymbol{x};\boldsymbol{\theta}^*)-Q^*(s,a)\right)^2\mid \boldsymbol{\theta}^0 \right].
\end{aligned}
\end{equation}
By Lemma \ref{lemma:hessian}, we have
\begin{equation}
\label{eq:gap-q-qhat}
\mathbb{E}\left[\left(Q(\boldsymbol{x};\boldsymbol{\theta}^T)-\widehat{Q}(\boldsymbol{x};\boldsymbol{\theta}^T)\right)^2\mid \boldsymbol{\theta}^0 \right]\leq C_8 m^{-1}
\end{equation}
with probability at least $1-\delta$. Recall that $\widehat{Q}(\boldsymbol{x};\boldsymbol{\theta}^*)$ is the fixed point of $\Pi_{\mathcal{F}_{\omega, m}}\mathcal{T}$ and $Q^*(s,a)$ is the fixed point of $\mathcal{T}$. We define the $\ell_2$-norm $\|f(s,a)\|_d^2=\mathbb{E}_{(s,a)\sim d}\left[f(s,a)^2\right]$. Thus
\begin{eqnarray*}
\left\|\widehat{Q}(\boldsymbol{x};\boldsymbol{\theta}^*)-Q^*(s,a)\right\|_d&= &\left\|\widehat{Q}(\boldsymbol{x};\boldsymbol{\theta}^*)-\Pi_{\mathcal{F}_{\omega, m}}Q^*(s,a)+\Pi_{\mathcal{F}_{\omega, m}}Q^*(s,a)-Q^*(s,a) \right\|_d \\
&\overset{(i)}{=}&\left\|\Pi_{\mathcal{F}_{\omega, m}}\mathcal{T}\widehat{Q}(\boldsymbol{x};\boldsymbol{\theta}^*)-\Pi_{\mathcal{F}_{\omega, m}}\mathcal{T}Q^*(s,a)+\Pi_{\mathcal{F}_{\omega, m}}Q^*(s,a)-Q^*(s,a) \right\|_d \\
&\leq&\left\|\Pi_{\mathcal{F}_{\omega, m}}\mathcal{T}\widehat{Q}(\boldsymbol{x};\boldsymbol{\theta}^*)-\Pi_{\mathcal{F}_{\omega, m}}\mathcal{T}Q^*(s,a)\right\|_d+\left\|\Pi_{\mathcal{F}_{\omega, m}}Q^*(s,a)-Q^*(s,a) \right\|_d \\
&\overset{(ii)}{\leq}&\gamma \left\|\widehat{Q}(\boldsymbol{x};\boldsymbol{\theta}^*)-Q^*(s,a)\right\|_d+\left\|\Pi_{\mathcal{F}_{\omega, m}}Q^*(s,a)-Q^*(s,a) \right\|_d,
\end{eqnarray*}
where (i) is due to the properties of the fixed point, and (ii) is due to $\Pi_{\mathcal{F}_{\omega, m}}\mathcal{T}$ is $\gamma$-contractive on the $\infty$-norm. This further means that
\begin{equation}
\label{eq:prop-fix-point}
\left\|\widehat{Q}(\boldsymbol{x};\boldsymbol{\theta}^*)-Q^*(s,a)\right\|_d^2\leq \frac{1}{(1-\gamma)^2}\left\|\Pi_{\mathcal{F}_{\omega, m}}Q^*(s,a)-Q^*(s,a) \right\|_d^2.
\end{equation}
Plugging \eqref{eq:gap-q-qhat} and \eqref{eq:prop-fix-point} into \eqref{eq:total-optimal} and using Theorem \ref{theorem:pi-f}, we complete the proof.
\end{proof}

\section{Convergence Results of Neural Q-learning}
\label{sect:conv-ql}
\subsection{Neural Q-Learning Algorithm}
For neural Q-learning, let us redefine some of the above notations. Let the optimal Q-function be $Q^*(s,a)=\sup_\pi Q^\pi(s,a)$ for all state action pairs $(s,a)$, then the optimal sequence of actions that maximizes the expected cumulative reward will follow $a_t=\mathop{\mathrm{argmax}}_{a'\in \mathcal{A}} Q^*(s_t,a'), t\geq0$. Therefore, to obtain a near-optimal policy, it is sufficient to find some $\hat{Q}$ that approximates $Q^*$ well. Define the Bellman optimality operator $\mathcal{T}$ as
$$\mathcal{T} Q(s,a):=r(s, a)+\gamma \mathbb{E}\left[\max_{a'} Q(s', a')\mid s'\sim \mathbb{P}(\cdot\mid s,a)\right], $$
for any $(s,a)$.
Let us remain the definition of the local linearization function class $\mathcal{F}_{\omega,m}$ introduced in \eqref{defn:local-linearization}. Consider the 
MSPBE minimization problem with multi-layer neural network approximation:
\begin{equation*}
\min_{\boldsymbol{\theta}\in S_\omega} \mathbb{E}_{\mu,\pi,\mathbb{P}} \left[\left(Q(\boldsymbol{x};\boldsymbol{\theta})- \Pi_{\mathcal{F}_{\omega,m}}\mathcal{T}Q(\boldsymbol{x};\boldsymbol{\theta})\right)^2\right].
\end{equation*}
Then the projected neural Q-learning algorithm can be written as follows:
\begin{equation}
\label{eq:semi-grad-ql} 
\boldsymbol{\theta}^{t+1}=\Pi_{S_\omega}\Big(\boldsymbol{\theta}^t-\eta_t \boldsymbol{g}\left(\boldsymbol{\theta}^t\right)\!\!\Big),\quad\mbox{with}\quad\boldsymbol{g}(\boldsymbol{\theta}^t)= \Delta\left(s_t, a_t, s_{t+1} ; \boldsymbol{\theta}^t\right)\cdot \nabla_{\boldsymbol{\theta}}  Q(\phi(s_t,a_t);\boldsymbol{\theta}^t)
\end{equation}
where
\begin{align}
\label{eq:td-error-ql}
\Delta\left(s, a, s^{\prime} ; \boldsymbol{\theta}^t\right)=&Q(\phi(s,a);\boldsymbol{\theta}^t)-\Big(r(s, a)+\gamma \max _{b \in \mathcal{A}} Q\left(\phi\left(s^{\prime}, b\right);\boldsymbol{\theta}^t\right)\Big).
\end{align}
The algorithm details can be described by Algorithm \ref{alg:Neural-ql} as follows.
\begin{algorithm}[H]
\caption{Neural Q-Learning with Markovian Sampling}
\label{alg:Neural-ql}
\begin{algorithmic}
\STATE \textbf{Input:} A learning policy $\pi$, a discount factor $\gamma\in(0,1)$, a sequence of learning rates $\{\eta_t\}_{t\geq0}$, a maximum iteration number $T$, a projection radius $\omega>0$, a Q network with architecture \eqref{eq:nn}.\vspace{0.05cm}
\STATE \textbf{Initialization:} Generate each entry of $\boldsymbol{W}_l^0$ independently from $\mathcal{N}(0, 1)$, for $l=1,2,\cdots, L$, and each entry of $\boldsymbol{b}$ independently from $\text{Unif} \{-1,+1\}$. 
\vspace{0.05cm}
\FOR{$t=0,1,\cdots,T-1$}
\STATE Sample $(s_t,a_t,r_t,s_{t+1})$ from the learning policy $\pi$ with $a_t\sim\pi(\cdot|s_t)$.
\STATE Compute the TD error $\Delta_t$ by \eqref{eq:td-error-ql}.
\STATE Update $\boldsymbol{\theta}^{t+1}$ by the projected stochastic semi-gradient step \eqref{eq:semi-grad-ql}.
\ENDFOR\vspace{0.05cm}
\STATE \textbf{Output:} $\boldsymbol{\theta}^T$.
\end{algorithmic}
\end{algorithm}

\subsection{Global Convergence}
\label{sect:proof-ql}

Similar to Section \ref{section:conv}, we define the function class $\mathcal{F}_{\omega, m}$ as a collection of all local linearization of $Q(\boldsymbol{x};\boldsymbol{\theta})$ at the initial point $\boldsymbol{\theta}^0$:
$$
\mathcal{F}_{\omega, m}:=\left\{\widehat{Q}(\boldsymbol{x};\boldsymbol{\theta})=Q(\boldsymbol{x};\boldsymbol{\theta}^0)+\left<\nabla_{\boldsymbol{\theta}} Q(\boldsymbol{x};\boldsymbol{\theta}^0), \boldsymbol{\theta}-\boldsymbol{\theta}^0\right>,\ \boldsymbol{\theta} \in S_\omega\right\}.
$$
Let $\widehat{Q}(\cdot\,;\boldsymbol{\theta}^*)\in\mathcal{F}_{\omega,m}$, and $\widehat{\Delta}\left(s, a, s^{\prime} ; \boldsymbol{\theta}\right)$ has the same structure as $\Delta\left(s, a, s^{\prime} ; \boldsymbol{\theta}\right)$ expect that the function $Q(\cdot;\boldsymbol{\theta})$ is replaced by $\widehat{Q}(\cdot;\boldsymbol{\theta})$. The stationary point $\boldsymbol{\theta}^*$ satisfies $\widehat{Q}(\boldsymbol{x};\boldsymbol{\theta}^*)=\Pi_{\mathcal{F}_{\omega,m}}\mathcal{T}\widehat{Q}(\boldsymbol{x};\boldsymbol{\theta}^*)$ for neural Q-learning. We redefine $\Xi_\beta$ by replacing the Bellman operator $\mathcal{T}^\pi$ in Section \ref{section:conv} with the Bellman optimality operator $\mathcal{T}$. A point $\boldsymbol{\theta}^*\in \Xi_\omega$ if and only if 
$$
\mathbb{E}_{\mu,\pi,\mathbb{P}}\left[\widehat{\Delta}\left(s, a, s^{\prime} ; \boldsymbol{\theta}^*\right)\big\langle\nabla_{\boldsymbol{\theta}} \widehat{Q}\left(\phi(s,a);\boldsymbol{\theta}^* \right), \boldsymbol{\theta}-\boldsymbol{\theta}^*\big\rangle\right] \geq 0.
$$

The maximum operator introduced by the Bellman optimality operator significantly sophisticates the analysis.  
Let us remain the definition of $\Sigma_\pi$ in \eqref{eq:feature-matrix}, and we define $\Sigma^*_\pi(\boldsymbol{\theta})$ as follows:
\begin{eqnarray}
\label{eq:feature-matrix-ql} 
\mathbb{E}_{\mu, \pi}\left[\nabla_{\boldsymbol{\theta}}Q(\phi(s,a^{\boldsymbol{\theta}}_{\max});\boldsymbol{\theta}^0)\nabla_{\boldsymbol{\theta}}Q(\phi(s,a^{\boldsymbol{\theta}}_{\max});\boldsymbol{\theta}^0)^\top\right],
\end{eqnarray}
where $a^{\boldsymbol{\theta}}_{\max}=\arg\max_{a\in\mathcal{A}}\left|\left<\nabla_{\boldsymbol{\theta}}Q(s,a;\boldsymbol{\theta}^0), \boldsymbol{\theta}\right>\right|$. To facilitate the analysis of neural Q-learning,  we further assume the following regularity condition introduced by \cite{xu2020finite}.
\begin{assumption}
\label{as:regularity}
$\exists\nu\in(0,1)$ such that $(1-\nu)^2\Sigma_\pi-\gamma^2\Sigma_\pi^*(\boldsymbol{\theta})\succeq 0$ for any $\boldsymbol{\theta}^0$ and $\boldsymbol{\theta}\in S_\omega$.
\end{assumption}

The original version of this assumption comes from \cite{xu2020finite}, which requires a strict positive definite condition: $(1-\nu)^2\Sigma_\pi-\gamma^2\Sigma_\pi^*(\boldsymbol{\theta})\succ 0$. 
Under this additional assumption, \cite{xu2020finite} obtained an $\tilde{\mathcal{O}}(\epsilon^{-2})$ sample complexity for neural Q-learning. A similar complexity result was also derived in \cite{cai2023neural} under a similar regularity condition on the learning policy $\pi$. At this time, we relax it to the positive semi-definiteness ($\succeq0$) and provide a convergence result of neural Q-learning. See Theorem \ref{theorem:pi-f-ql}.

\begin{theorem}
\label{theorem:pi-f-ql}
Suppose Assumptions \ref{as:lamda}, \ref{as:markov} and \ref{as:regularity} hold. We set $\omega=\widetilde{C}_1 $ and the learning rate $\eta_t=\frac{1}{2\nu\lambda_0 (t+1)}$. If the feature map $\|\phi(s,a)\|=1$ for each state-action pair $(s,a)$ and the network width $m \geq m^*$, then the output $\boldsymbol{\theta}^T$ of neural Q-learning algorithm (i.e. \eqref{eq:semi-grad-ql}) satisfies
\begin{eqnarray*}
&\ &\mathbb{E}\left[\big(\widehat{Q}(\boldsymbol{x};\boldsymbol{\theta}^T)-\widehat{Q}(\boldsymbol{x};\boldsymbol{\theta}^*)\big)^2\mid \boldsymbol{\theta}^0 \right]\leq \frac{\widetilde{C}_3(\log T+1)}{\nu^2\lambda_0^2T}+ \frac{\widetilde{C}_4m^{-1/2}}{\nu\lambda_0}\cdot\sqrt{ \log (T / \delta)} +\frac{\widetilde{C}_5\tau^* \left(\log (T / \delta)+1\right) \log T}{ \nu^2\lambda_0^2 T},
\end{eqnarray*}
with probability at least $1-2\delta-2L\exp\!\big(-\widetilde{C}_2 m\big)$, where $\tau^*$ is the mixing time of Markov chain in Assumption \ref{as:markov}, and $\widetilde{C}_1,\cdots,\widetilde{C}_5>0$ are universal constants.
\end{theorem}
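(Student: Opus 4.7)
The plan is to mirror the proof of Theorem~\ref{theorem:pi-f} while replacing every appearance of the Bellman operator $\mathcal{T}^\pi$ with the Bellman optimality operator $\mathcal{T}$, and then using Assumption~\ref{as:regularity} in place of the $\gamma$-contractivity of $\mathcal{T}^\pi$ under $\|\cdot\|_\mu$. First I would verify that the subspace decomposition of Proposition~\ref{prop:optimal} carries over verbatim: the only feature of the operator $\mathcal{T}^\pi$ used there is that $\widehat{\Delta}(\cdot;\boldsymbol{\theta})$ depends on $\boldsymbol{\theta}$ only through the values of $\widehat{Q}(\cdot;\boldsymbol{\theta})$, and this is still true for $\mathcal{T}$ since the max is an outer pointwise operation. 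Consequently, for each iterate $\boldsymbol{\theta}^t\in S_\omega$ there exists $\boldsymbol{\theta}^t_*\in\Xi_{2\omega}$ with $\boldsymbol{\theta}^t-\boldsymbol{\theta}^t_*\in\mathcal{R}(\Sigma_\pi)$, and the three technical lemmas on gradient bound, gradient gap, and Markov sampling error (\eqref{eq:I1}, \eqref{eq:I2}, \eqref{eq:I3}) depend only on the smoothness of $Q(\cdot;\boldsymbol{\theta})$ and on the trajectory mixing, hence they transfer directly.

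The core work is to redo the ``gradient descent'' inequality \eqref{eq:I4}, which is where the max operator enters. Writing $\bar{\mathbf{m}}(\boldsymbol{\theta})$ for the expected linearized semi-gradient, I would expand
\begin{align*}
\mathbb{E}\left[\langle\boldsymbol{\theta}^t-\boldsymbol{\theta}^t_*,\bar{\mathbf{m}}(\boldsymbol{\theta}^t)-\bar{\mathbf{m}}(\boldsymbol{\theta}^t_*)\rangle\right] &= (\boldsymbol{\theta}^t-\boldsymbol{\theta}^t_*)^\top\Sigma_\pi(\boldsymbol{\theta}^t-\boldsymbol{\theta}^t_*) \\
&\quad-\gamma\,\mathbb{E}\big[(\widehat{Q}(\boldsymbol{x};\boldsymbol{\theta}^t)-\widehat{Q}(\boldsymbol{x};\boldsymbol{\theta}^t_*))\cdot(M(\boldsymbol{\theta}^t)-M(\boldsymbol{\theta}^t_*))\big],
\end{align*}
where $M(\boldsymbol{\theta}):=\max_{b\in\mathcal{A}}\widehat{Q}(\phi(s',b);\boldsymbol{\theta})$. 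Using the elementary bound $|\max_b f(b)-\max_b g(b)|\le\max_b|f(b)-g(b)|$ together with the definition of $a^{\boldsymbol{\theta}}_{\max}$, I would control $\mathbb{E}[(M(\boldsymbol{\theta}^t)-M(\boldsymbol{\theta}^t_*))^2]$ by $(\boldsymbol{\theta}^t-\boldsymbol{\theta}^t_*)^\top\Sigma_\pi^*(\boldsymbol{\theta}^t-\boldsymbol{\theta}^t_*)(\boldsymbol{\theta}^t-\boldsymbol{\theta}^t_*)$ up to a harmless choice of maximizer. Cauchy--Schwarz then gives
\[
\mathbb{E}[\langle\boldsymbol{\theta}^t-\boldsymbol{\theta}^t_*,\bar{\mathbf{m}}(\boldsymbol{\theta}^t)-\bar{\mathbf{m}}(\boldsymbol{\theta}^t_*)\rangle]\ge \boldsymbol{v}^\top\Sigma_\pi\boldsymbol{v}-\gamma\sqrt{\boldsymbol{v}^\top\Sigma_\pi\boldsymbol{v}}\cdot\sqrt{\boldsymbol{v}^\top\Sigma_\pi^*\boldsymbol{v}},
\]
with $\boldsymbol{v}=\boldsymbol{\theta}^t-\boldsymbol{\theta}^t_*$. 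Assumption~\ref{as:regularity} converts the second term into at most $(1-\nu)\boldsymbol{v}^\top\Sigma_\pi\boldsymbol{v}$, leaving a net lower bound of $\nu\boldsymbol{v}^\top\Sigma_\pi\boldsymbol{v}\ge\nu\lambda_0\|\boldsymbol{v}\|^2$ by Proposition~\ref{prop:range-strong-convex}, since $\boldsymbol{v}\in\mathcal{R}(\Sigma_\pi)$.

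With this replacement, plugging the four bounds into the error recursion \eqref{eq:err-decompose} (now with $\nu\lambda_0$ in place of $(1-\gamma)\lambda_0$) and choosing $\eta_t=\frac{1}{2\nu\lambda_0(t+1)}$ turns it into a telescoping inequality for $(t+1)\mathbb{E}\|\boldsymbol{\theta}^{t+1}-\boldsymbol{\theta}^{t+1}_*\|^2$. Summing from $0$ to $T-1$ and dividing by $T$ yields the bound on $\mathbb{E}\|\boldsymbol{\theta}^T-\boldsymbol{\theta}^T_*\|^2$; finally, converting parameter error into function-value error through the gradient norm bound $\|\nabla_{\boldsymbol{\theta}}Q(\boldsymbol{x};\boldsymbol{\theta}^0)\|=\mathcal{O}(\sqrt{m})$ and invoking the relation $\widehat{Q}(\cdot;\boldsymbol{\theta}^*)=\widehat{Q}(\cdot;\boldsymbol{\theta}^T_*)$ from \eqref{eq:fixed-point-unique} produces the three stated terms with rates $\widetilde{\mathcal{O}}(1/T)$, $\widetilde{\mathcal{O}}(m^{-1/2})$, and $\widetilde{\mathcal{O}}(\tau^*\log T/T)$.

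The main obstacle is the non-smoothness of the max in the Bellman optimality operator: in the TD case, the descent lemma followed from an exact $\gamma$-contraction of $\mathcal{T}^\pi$ in $\|\cdot\|_\mu$, whereas here the best one can say pointwise is the $\max$-Lipschitz inequality above, which inflates the second moment by $\Sigma_\pi^*$ rather than $\Sigma_\pi$. Assumption~\ref{as:regularity} is exactly the ingredient that restores a one-point strong-monotonicity lower bound with modulus $\nu\lambda_0$ on the subspace $\mathcal{R}(\Sigma_\pi)$, and the subspace decomposition of Proposition~\ref{prop:optimal} is what allows us to use the semi-definiteness (rather than the strict positive definiteness assumed in \citep{xu2020finite}) to close the argument.
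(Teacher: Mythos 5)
Your proposal is correct and follows essentially the same route as the paper: the paper likewise reuses the subspace decomposition and Lemmas~\ref{lemma:I1-and-I2} and~\ref{lemma:I3} unchanged, and isolates the only genuine modification in Lemma~\ref{lemma:I4-ql}, where the max-Lipschitz bound, Cauchy--Schwarz, and Assumption~\ref{as:regularity} combine to replace the $(1-\gamma)\lambda_0$ modulus with $\nu\lambda_0$ on $\mathcal{R}(\Sigma_\pi)$, exactly as you describe. The remainder (the $\eta_t=\frac{1}{2\nu\lambda_0(t+1)}$ telescoping and the conversion from parameter error to function-value error via \eqref{eq:fixed-point-unique}) matches the paper's argument.
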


\begin{proof}
For a little notation abuse, we redefine
\begin{eqnarray*}
\mathbf{g}(\boldsymbol{\theta}^t) &=& \Delta(s_t,a_t,s'_t,\boldsymbol{\theta}^t)\cdot  \nabla_{\boldsymbol{\theta}}  Q(\boldsymbol{x}_t;\boldsymbol{\theta}^t), \quad
\Bar{\mathbf{g}}(\boldsymbol{\theta}^t) \ = \ \mathbb{E}_{\mu, \pi,\mathbb{P}}\left[\mathbf{g}(\boldsymbol{\theta}^t)\right]\\ 
\mathbf{m}(\boldsymbol{\theta}^t) &=& \widehat{\Delta} (s_t,a_t,s'_t,\boldsymbol{\theta}^t)\cdot\nabla_{\boldsymbol{\theta}}  Q(\boldsymbol{x}_t;\boldsymbol{\theta}^0),
\quad \Bar{\mathbf{m}}(\boldsymbol{\theta}^t)\ =\ \mathbb{E}_{\mu, \pi,\mathbb{P}}\left[\mathbf{m}(\boldsymbol{\theta}^t)\right],
\end{eqnarray*}
where 
\begin{eqnarray*}
\Delta(s_t,a_t,s'_t,\boldsymbol{\theta}^t)&=& Q(\boldsymbol{x}_t;\boldsymbol{\theta}^t)-\left(r(s_t, a_t)+\gamma \max_{b\in\mathcal{A}} Q(\phi(s_{t+1},b);\boldsymbol{\theta}^t)\right), \\
\widehat{\Delta} (s_t,a_t,s'_t,\boldsymbol{\theta}^t) &=&\widehat{Q}(\boldsymbol{x}_t;\boldsymbol{\theta}^t)-\left(r(s_t, a_t)+\gamma \max_{b\in\mathcal{A}} \widehat{Q}(\phi(s_{t+1},b);\boldsymbol{\theta}^t)\right).
\end{eqnarray*}
Let $\Delta_t=\Delta(s_t,a_t,s'_t,\boldsymbol{\theta}^t)$ and $\widehat{\Delta}_t=\widehat{\Delta} (s_t,a_t,s'_t,\boldsymbol{\theta}^t)$. Similarly, \eqref{eq:err-decompose} can be derived in neural Q-learning. To estimate the terms $\mathbf{I}_1\sim\mathbf{I}_4$, we can apply Lemmas \ref{lemma:I1-and-I2} and \ref{lemma:I3}. However, due to the utilization of the Bellman optimality operator in neural Q-learning, some modifications based on Lemma \ref{lemma:I4} are required.
\begin{lemma}
\label{lemma:I4-ql}
Under Assumption \ref{as:regularity}, we have that

\begin{equation}
\label{eq:I4-ql}
\mathbb{E}_{\mu,\pi,\mathbb{P}}\left[\left<\boldsymbol{\theta}^t-\boldsymbol{\theta}^t_*, \Bar{\mathbf{m}}(\boldsymbol{\theta}^t)-\Bar{\mathbf{m}}(\boldsymbol{\theta}^t_*)\right>\mid\boldsymbol{\theta}^0\right] \geq \nu\lambda_0\cdot \|\boldsymbol{\theta}^t-\boldsymbol{\theta}^t_*\|^2.
\end{equation}
\end{lemma}

\begin{proof}
To simplify the notation, we denote $\mathbb{E}[\cdot]$ as $\mathbb{E}_{\mu,\pi,\mathbb{P}}[\cdot]$ in the proof of this lemma.
Define $\widehat{Q}^{\#}(s;\boldsymbol{\theta}):=\max_{a\in\mathcal{A}}\widehat{Q}(\phi(s,a);\boldsymbol{\theta})$. Then we have
\begin{eqnarray}
\label{eq:lemma-I4-decompose-ql}
&\ &\mathbb{E}\left[\left<\boldsymbol{\theta}^t-\boldsymbol{\theta}^t_*, \Bar{\mathbf{m}}(\boldsymbol{\theta}^t)-\Bar{\mathbf{m}}(\boldsymbol{\theta}^t_*)\right>\mid\boldsymbol{\theta}^0\right] \nonumber \\
&=& \mathbb{E}\left[\left(\widehat{\Delta}(s_t,a_t,s;_t,\boldsymbol{\theta}^t)-\widehat{\Delta}(s_t,a_t,s;_t,\boldsymbol{\theta}^t_*)\right)\left<\nabla Q(\boldsymbol{x};\boldsymbol{\theta}^0), \boldsymbol{\theta}^t-\boldsymbol{\theta}^t_*\right>\mid\boldsymbol{\theta}^0\right]\nonumber \\
&=& \mathbb{E}\left[\left(\widehat{Q}(\boldsymbol{x};\boldsymbol{\theta}^t)-\widehat{Q}(\boldsymbol{x};\boldsymbol{\theta}^t_*)\right)\left<\nabla Q(\boldsymbol{x};\boldsymbol{\theta}^0), \boldsymbol{\theta}^t-\boldsymbol{\theta}^t_*\right>\mid\boldsymbol{\theta}^0\right]\nonumber \\
&\ & -\gamma  \mathbb{E}\left[\left(\widehat{Q}^{\#}(s;\boldsymbol{\theta}^t)-\widehat{Q}^{\#}(s;\boldsymbol{\theta}^t_*)\right)\left<\nabla Q(\boldsymbol{x};\boldsymbol{\theta}^0), \boldsymbol{\theta}^t-\boldsymbol{\theta}^t_*\right>\mid\boldsymbol{\theta}^0\right] \\
&=&\mathbb{E} \left[\left(\widehat{Q}(\boldsymbol{x},\boldsymbol{\theta}^t)-\widehat{Q}(\boldsymbol{x},\boldsymbol{\theta}^t_*)\right)^2\right]-\gamma \mathbb{E} \left[\left(\widehat{Q}(\boldsymbol{x},\boldsymbol{\theta}^t)-\widehat{Q}(\boldsymbol{x},\boldsymbol{\theta}^t_*)\right)\cdot \left(\widehat{Q}^{\#}(s;\boldsymbol{\theta}^t)-\widehat{Q}^{\#}(s;\boldsymbol{\theta}^t_*)\right)\right]. \nonumber
\end{eqnarray}
For the second term of \eqref{eq:lemma-I4-decompose-ql}, we consider
\begin{eqnarray}
\label{eq:regularity-ql}
\mathbb{E} \left[\left(\widehat{Q}^{\#}(s;\boldsymbol{\theta}^t)-\widehat{Q}^{\#}(s;\boldsymbol{\theta}_*^t)\right)^2\right]&\leq & \mathbb{E} \left[\max_{a\in\mathcal{A}} \left|\widehat{Q}(s,a;\boldsymbol{\theta}^t)-\widehat{Q}(s,a;\boldsymbol{\theta}^t_*) \right|^2 \right] \nonumber\\
&\overset{(i)}{=}& \mathbb{E} \left[\max_{a\in\mathcal{A}}\left| \widehat{Q}(s,a;\boldsymbol{\theta}^t-\boldsymbol{\theta}^t_*)\right|^2 \right] \nonumber\\
&=&(\boldsymbol{\theta}^t-\boldsymbol{\theta}^t_*)^\top \Sigma_\pi^*(\boldsymbol{\theta}^t-\boldsymbol{\theta}^t_*)\nonumber \\
&\overset{(ii)}{\leq} & \frac{(1-\nu)^2}{\gamma^2} (\boldsymbol{\theta}^t-\boldsymbol{\theta}^t_*)^\top \Sigma_\pi\cdot (\boldsymbol{\theta}^t-\boldsymbol{\theta}^t_*)\nonumber \\
&\overset{(iii)}{=}&\frac{(1-\nu)^2}{\gamma^2} \mathbb{E} \left[\left(\widehat{Q}(\boldsymbol{x},\boldsymbol{\theta}^t)-\widehat{Q}(\boldsymbol{x},\boldsymbol{\theta}^t_*)\right)^2\right],
\end{eqnarray}
where (i) and (iii) follow that $\widehat{Q}(\boldsymbol{x};\cdot)$ is linear, and (ii) follows Assumption \ref{as:regularity}. Therefore, 
\begin{eqnarray*}
&\ &\mathbb{E}\left[\left<\boldsymbol{\theta}^t-\boldsymbol{\theta}^t_*, \Bar{\mathbf{m}}(\boldsymbol{\theta}^t)-\Bar{\mathbf{m}}(\boldsymbol{\theta}^t_*)\right>\mid\boldsymbol{\theta}^0\right] \\
&=&\mathbb{E} \left[\left(\widehat{Q}(\boldsymbol{x},\boldsymbol{\theta}^t)-\widehat{Q}(\boldsymbol{x},\boldsymbol{\theta}^t_*)\right)^2\right]-\gamma \mathbb{E} \left[\left(\widehat{Q}(\boldsymbol{x},\boldsymbol{\theta}^t)-\widehat{Q}(\boldsymbol{x},\boldsymbol{\theta}^t_*)\right)\cdot \left(\widehat{Q}^{\#}(s;\boldsymbol{\theta}^t)-\widehat{Q}^{\#}(s;\boldsymbol{\theta}^t_*)\right)\right] \\
&\geq& \mathbb{E} \left[\left(\widehat{Q}(\boldsymbol{x},\boldsymbol{\theta}^t)-\widehat{Q}(\boldsymbol{x},\boldsymbol{\theta}^t_*)^2\right)\right]-\gamma \sqrt{\mathbb{E} \left[\left(\widehat{Q}(\boldsymbol{x},\boldsymbol{\theta}^t)-\widehat{Q}(\boldsymbol{x},\boldsymbol{\theta}^t_*)\right)^2\right]} \cdot \sqrt{\mathbb{E} \left[\left(\widehat{Q}^{\#}(s;\boldsymbol{\theta}^t)-\widehat{Q}^{\#}(s;\boldsymbol{\theta}^t_*)\right)^2\right]}  \\
&\overset{(i)}{\geq}& \left(1-\gamma\cdot\frac{1-\nu}{\gamma}\right) \mathbb{E} \left[\widehat{Q}(\boldsymbol{x},\boldsymbol{\theta}^t)-\widehat{Q}(\boldsymbol{x},\boldsymbol{\theta}^t_*)^2\right] \\
&=& \nu\cdot  (\boldsymbol{\theta}^t-\boldsymbol{\theta}^t_*)^\top \Sigma_\pi (\boldsymbol{\theta}^t-\boldsymbol{\theta}^t_*) \\
&\overset{(ii)}{\geq}& \nu\lambda_0\cdot \|\boldsymbol{\theta}^t-\boldsymbol{\theta}^t_*\|^2,
\end{eqnarray*}
where (i) follows \eqref{eq:regularity-ql}, and (ii) follows $\boldsymbol{\theta}^t-\boldsymbol{\theta}^t_*=\boldsymbol{\theta}^t_{\parallel}-\boldsymbol{\theta}^*\in \mathcal{R}(\mathbf{\Sigma_\pi})$ and Lemma \ref{lemma:range-space-strong-convex}, which provides $\lambda_0$-strong convexity. 
\end{proof}

Now given $\boldsymbol{\theta}^0$, we can deduce that 
$$
\begin{aligned}
\mathbb{E}_{\mu,\pi,\mathbb{P}}\left[\|\boldsymbol{\theta}^{t+1}\right.&-\left.\boldsymbol{\theta}^{t+1}_*\|^2\mid \boldsymbol{\theta}^0 \right] \leq (1-2\eta_t \nu\lambda_0 ) \mathbb{E}\left[\|\boldsymbol{\theta}^{t}-\boldsymbol{\theta}^{t}_*\|^2\mid \boldsymbol{\theta}^0 \right]+C_1\eta_t^2 \\
&+2\eta_tC_2m^{-1/2}\sqrt{\log (T / \delta)}+2\eta_t C_3\left(\log (T / \delta)+C_1^2\right) \tau^* \eta_{\max \left\{0, t-\tau^*\right\}},
\end{aligned}
$$
with probability at least $1-2\delta-2L\exp (-C_4 m)$, where $\left\{C_i>0\right\}_{i=1, \ldots, 4}$ are universal constants in this subsection. Choosing $\eta_t=\frac{1}{2\nu\lambda_0 (t+1)}$ can derive the similar results as \eqref{eq:err-sum}. This suggests that we can utilize the techniques outlined in Section \ref{section:appendix-pi-f} to finalize the remaining proof of Theorem \ref{theorem:pi-f-ql}. As a result, we conclude the proof.

\end{proof}

\section{Details of Section \ref{section:minmax-ql-conv}}
\label{section:minimax-conv}
We formally describe the minimax neural Q-learning method in Algorithm \ref{alg:minimax-q-learning}.

\begin{algorithm}[H]
\caption{Minimax Neural Q-Learning with Gaussian Initialization}
\label{alg:minimax-q-learning}
\begin{algorithmic}
\STATE \textbf{Input:} A learning policy pair $\pi=(\pi^1,\pi^2)$, a discount factor $\gamma\in(0,1)$, a sequence of learning rates $\{\eta_t\}_{t\geq0}$, a maximum iteration number $T$, a projection radius $\omega>0$, a Q network with architecture \eqref{eq:nn}.\vspace{0.05cm}
\STATE \textbf{Initialization:} Generate each entry of $\boldsymbol{W}_l^0$ independently from $\mathcal{N}(0, 1)$, for $l=1,2,\cdots, L$, and each entry of $\boldsymbol{b}$ independently from $\text{Unif} \{-1,+1\}$. 
\FOR{$t=0,1,\cdots,T-1$}
\STATE Sample $(s_t,a^1_t,a^2_t,r_t,s_{t+1})$ from the learning policy pair $\pi$ with $a^1_t\sim\pi^1(\cdot|s_t), a^2_t\sim\pi^2(\cdot|s_t)$.
\STATE Compute the TD error $\Delta_t$ by \eqref{eq:td-error-minimax-ql}.
\STATE Update $\boldsymbol{\theta}^{t+1}$ by the projected stochastic semi-gradient step \eqref{eq:semi-grad-minimax-ql}.
\ENDFOR
\STATE \textbf{Output:} $\boldsymbol{\theta}^T$.
\end{algorithmic}
\end{algorithm}






\subsection{Proof of Theorem \ref{theorem:minimax-pi-f}}
The proof of Theorem \ref{theorem:minimax-pi-f} is similar to Sections \ref{section:appendix-pi-f} and \ref{sect:proof-ql}. However, due to the difference in Bellman operators, we still need to make some modifications to Lemma \ref{lemma:I4} or Lemma \ref{lemma:I4-ql}. See Lemma \ref{lemma:I4-minimax-ql}.

\begin{lemma}
\label{lemma:I4-minimax-ql}
Under Assumption \ref{as:minimax-regularity}, we have that
\begin{equation}
\label{eq:I4-minimax-ql}
\mathbb{E}_{\mu,\pi,\mathbb{P}}\left[\left<\boldsymbol{\theta}^t-\boldsymbol{\theta}^t_*, \Bar{\mathbf{m}}(\boldsymbol{\theta}^t)-\Bar{\mathbf{m}}(\boldsymbol{\theta}^t_*)\right>\mid\boldsymbol{\theta}^0\right] \geq \nu\lambda_0\cdot \|\boldsymbol{\theta}^t-\boldsymbol{\theta}^t_*\|^2.
\end{equation}
\end{lemma}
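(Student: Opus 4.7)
The plan is to follow closely the structure of the proof of Lemma \ref{lemma:I4-ql}, but to replace the action--maximizer step with an action--pair $(a^1_{\boldsymbol{\theta}_s},a^2_{\boldsymbol{\theta}_s})$ that dominates the difference of max--min operations. Concretely, setting $\widehat{Q}^{\#}(s;\boldsymbol{\theta}):=\max_{b^1\in\mathcal{A}_1}\min_{b^2\in\mathcal{A}_2}\widehat{Q}(\phi(s,b^1,b^2);\boldsymbol{\theta})$, the first step is to write
\[
\mathbb{E}\big[\langle \boldsymbol{\theta}^t-\boldsymbol{\theta}^t_*,\bar{\mathbf{m}}(\boldsymbol{\theta}^t)-\bar{\mathbf{m}}(\boldsymbol{\theta}^t_*)\rangle\big]
=\mathbb{E}\big[(\widehat{Q}(\boldsymbol{x};\boldsymbol{\theta}^t)-\widehat{Q}(\boldsymbol{x};\boldsymbol{\theta}^t_*))^2\big]-\gamma\,\mathbb{E}\big[(\widehat{Q}(\boldsymbol{x};\boldsymbol{\theta}^t)-\widehat{Q}(\boldsymbol{x};\boldsymbol{\theta}^t_*))(\widehat{Q}^{\#}(s';\boldsymbol{\theta}^t)-\widehat{Q}^{\#}(s';\boldsymbol{\theta}^t_*))\big],
\]
using linearity of $\widehat{Q}(\boldsymbol{x};\cdot)$ in the parameter together with the fact that $\boldsymbol{\theta}^t-\boldsymbol{\theta}^t_*\in\mathcal{R}(\Sigma_\pi)$ so that $\langle\nabla Q(\boldsymbol{x};\boldsymbol{\theta}^0),\boldsymbol{\theta}^t-\boldsymbol{\theta}^t_*\rangle=\widehat{Q}(\boldsymbol{x};\boldsymbol{\theta}^t)-\widehat{Q}(\boldsymbol{x};\boldsymbol{\theta}^t_*)$ almost surely under the stationary measure.

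The key step, and also the main obstacle, is to control the cross term through the minimax feature covariance matrix $\Sigma_\pi^*(\boldsymbol{\theta}^t,\boldsymbol{\theta}^t_*)$. This is exactly where the tailored definition of $(a^1_{\boldsymbol{\theta}_s},a^2_{\boldsymbol{\theta}_s})$ becomes indispensable. A standard minimax perturbation argument shows that for any two parameters $\boldsymbol{\theta}_1,\boldsymbol{\theta}_2$ and any state $s$,
\[
\big|\widehat{Q}^{\#}(s;\boldsymbol{\theta}_1)-\widehat{Q}^{\#}(s;\boldsymbol{\theta}_2)\big|\le \big|\langle\nabla_{\boldsymbol{\theta}} Q(\phi(s,a^1_{\boldsymbol{\theta}_s},a^2_{\boldsymbol{\theta}_s});\boldsymbol{\theta}^0),\boldsymbol{\theta}_1-\boldsymbol{\theta}_2\rangle\big|,
\]
because the worst-case action pair between the two max-min values is captured by one of $(a^1_{\boldsymbol{\theta}_1},a^2_{\boldsymbol{\theta}_2})$ and $(a^1_{\boldsymbol{\theta}_2},a^2_{\boldsymbol{\theta}_1})$, and $(a^1_{\boldsymbol{\theta}_s},a^2_{\boldsymbol{\theta}_s})$ was defined to be the larger of these two. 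Squaring and taking expectations gives
\[
\mathbb{E}\big[(\widehat{Q}^{\#}(s;\boldsymbol{\theta}^t)-\widehat{Q}^{\#}(s;\boldsymbol{\theta}^t_*))^2\big]\le (\boldsymbol{\theta}^t-\boldsymbol{\theta}^t_*)^\top \Sigma_\pi^*(\boldsymbol{\theta}^t,\boldsymbol{\theta}^t_*)(\boldsymbol{\theta}^t-\boldsymbol{\theta}^t_*).
\]

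With this inequality at hand, the remainder is straightforward. Invoking Assumption \ref{as:minimax-regularity} to upper bound the quadratic form above by $\tfrac{(1-\nu)^2}{\gamma^2}(\boldsymbol{\theta}^t-\boldsymbol{\theta}^t_*)^\top\Sigma_\pi(\boldsymbol{\theta}^t-\boldsymbol{\theta}^t_*)$, and applying Cauchy--Schwarz to the cross term in the initial decomposition, I obtain
\[
\mathbb{E}\big[\langle \boldsymbol{\theta}^t-\boldsymbol{\theta}^t_*,\bar{\mathbf{m}}(\boldsymbol{\theta}^t)-\bar{\mathbf{m}}(\boldsymbol{\theta}^t_*)\rangle\big]\ge \Big(1-\gamma\cdot\tfrac{1-\nu}{\gamma}\Big)(\boldsymbol{\theta}^t-\boldsymbol{\theta}^t_*)^\top\Sigma_\pi(\boldsymbol{\theta}^t-\boldsymbol{\theta}^t_*)=\nu\,(\boldsymbol{\theta}^t-\boldsymbol{\theta}^t_*)^\top\Sigma_\pi(\boldsymbol{\theta}^t-\boldsymbol{\theta}^t_*).
\]
Finally, because Proposition \ref{prop:optimal} guarantees $\boldsymbol{\theta}^t-\boldsymbol{\theta}^t_*\in\mathcal{R}(\Sigma_\pi)$, Lemma \ref{lemma:range-space-strong-convex} (equivalently Proposition \ref{prop:range-strong-convex}) yields the $\lambda_0$-strong-convexity replacement $(\boldsymbol{\theta}^t-\boldsymbol{\theta}^t_*)^\top\Sigma_\pi(\boldsymbol{\theta}^t-\boldsymbol{\theta}^t_*)\ge \lambda_0\|\boldsymbol{\theta}^t-\boldsymbol{\theta}^t_*\|^2$, completing the proof. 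The only genuinely new ingredient compared with Lemma \ref{lemma:I4-ql} is the minimax perturbation inequality displayed above, which legitimately reduces the max--min gap to a single linear functional and thereby makes Assumption \ref{as:minimax-regularity} applicable.
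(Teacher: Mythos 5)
Your proposal is correct and follows essentially the same route as the paper: the same decomposition into a squared term minus $\gamma$ times a cross term, the same minimax sandwich bound reducing $\bigl|\widehat{Q}^{\#}(s;\boldsymbol{\theta}^t)-\widehat{Q}^{\#}(s;\boldsymbol{\theta}^t_*)\bigr|$ to the cross-pair linear functionals (which the paper merely spells out more explicitly via a case split on $\mathcal{S}_+$ and $\mathcal{S}_-$), followed by Assumption \ref{as:minimax-regularity}, Cauchy--Schwarz, and the range-space strong convexity of Lemma \ref{lemma:range-space-strong-convex}.
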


\begin{proof}
To simplify the notation, we denote $\mathbb{E}[\cdot]$ as $\mathbb{E}_{\mu,\pi,\mathbb{P}}[\cdot]$ in the proof of this lemma.
Define $\widehat{Q}^{\#}(s;\boldsymbol{\theta}):=\max_{a^1\in\mathcal{A}}\min_{a^2\in\mathcal{A}}\widehat{Q}(\phi(s,a^1,a^2);\boldsymbol{\theta})$. Define the sets $\mathcal{S}_+ =\{s:\widehat{Q}^{\#}(s;\boldsymbol{\theta}^t)>\widehat{Q}^{\#}(s;\boldsymbol{\theta}^t_*)\}$ and $\mathcal{S}_- = \mathcal{S} / \mathcal{S}_+$. For each $s\in \mathcal{S}_+$, 
\begin{eqnarray*}
\widehat{Q}^{\#}(s;\boldsymbol{\theta}^t)-\widehat{Q}^{\#}(s;\boldsymbol{\theta}^t_*)&=& \Big<\nabla_{\boldsymbol{\theta}} Q \left(\phi(s,a^1_{\boldsymbol{\theta}^t},a^2_{\boldsymbol{\theta}^t});\boldsymbol{\theta}^0\right), \boldsymbol{\theta}^t\Big>-\left<\nabla_{\boldsymbol{\theta}} Q \left(\phi(s,a^1_{\boldsymbol{\theta}^t_*},a^2_{\boldsymbol{\theta}^t_*});\boldsymbol{\theta}^0\right), \boldsymbol{\theta}^t_*\right> \\
&=&\left(\Big<\nabla_{\boldsymbol{\theta}} Q \left(\phi(s,a^1_{\boldsymbol{\theta}^t},a^2_{\boldsymbol{\theta}^t});\boldsymbol{\theta}^0\right), \boldsymbol{\theta}^t\Big>-\left<\nabla_{\boldsymbol{\theta}} Q \left(\phi(s,a^1_{\boldsymbol{\theta}^t},a^2_{\boldsymbol{\theta}^t_*});\boldsymbol{\theta}^0\right), \boldsymbol{\theta}^t\right>\right)- \\
&\ &\left<\nabla_{\boldsymbol{\theta}} Q \left(\phi(s,a^1_{\boldsymbol{\theta}^t},a^2_{\boldsymbol{\theta}^t_*});\boldsymbol{\theta}^0\right), \boldsymbol{\theta}^t-\boldsymbol{\theta}^t_*\right>- \\
&\ &\left(\Big<\nabla_{\boldsymbol{\theta}} Q \left(\phi(s,a^1_{\boldsymbol{\theta}^t},a^2_{\boldsymbol{\theta}^t_*});\boldsymbol{\theta}^0\right), \boldsymbol{\theta}^t_*\Big>-\left<\nabla_{\boldsymbol{\theta}} Q \left(\phi(s,a^1_{\boldsymbol{\theta}^t_*},a^2_{\boldsymbol{\theta}^t_*});\boldsymbol{\theta}^0\right), \boldsymbol{\theta}^t_*\right>\right) \\
&\leq & \left<\nabla_{\boldsymbol{\theta}} Q \left(\phi(s,a^1_{\boldsymbol{\theta}^t},a^2_{\boldsymbol{\theta}^t_*});\boldsymbol{\theta}^0\right), \boldsymbol{\theta}^t-\boldsymbol{\theta}^t_*\right>
\end{eqnarray*}
and 
\begin{eqnarray*}
\widehat{Q}^{\#}(s;\boldsymbol{\theta}^t)-\widehat{Q}^{\#}(s;\boldsymbol{\theta}^t_*)&=&\left(\Big<\nabla_{\boldsymbol{\theta}} Q \left(\phi(s,a^1_{\boldsymbol{\theta}^t},a^2_{\boldsymbol{\theta}^t});\boldsymbol{\theta}^0\right), \boldsymbol{\theta}^t\Big>-\left<\nabla_{\boldsymbol{\theta}} Q \left(\phi(s,a^1_{\boldsymbol{\theta}^t_*},a^2_{\boldsymbol{\theta}^t});\boldsymbol{\theta}^0\right), \boldsymbol{\theta}^t\right>\right)- \\
&\ &\left<\nabla_{\boldsymbol{\theta}} Q \left(\phi(s,a^1_{\boldsymbol{\theta}^t_*},a^2_{\boldsymbol{\theta}^t});\boldsymbol{\theta}^0\right), \boldsymbol{\theta}^t-\boldsymbol{\theta}^t_*\right>- \\
&\ &\left(\Big<\nabla_{\boldsymbol{\theta}} Q \left(\phi(s,a^1_{\boldsymbol{\theta}^t_*},a^2_{\boldsymbol{\theta}^t});\boldsymbol{\theta}^0\right), \boldsymbol{\theta}^t_*\Big>-\left<\nabla_{\boldsymbol{\theta}} Q \left(\phi(s,a^1_{\boldsymbol{\theta}^t_*},a^2_{\boldsymbol{\theta}^t_*});\boldsymbol{\theta}^0\right), \boldsymbol{\theta}^t_*\right>\right) \\
&\geq& \left<\nabla_{\boldsymbol{\theta}} Q \left(\phi(s,a^1_{\boldsymbol{\theta}^t_*},a^2_{\boldsymbol{\theta}^t});\boldsymbol{\theta}^0\right), \boldsymbol{\theta}^t-\boldsymbol{\theta}^t_*\right>.
\end{eqnarray*}
In the same way, for each $s\in \mathcal{S}_-$,
\begin{eqnarray*}
\left<\nabla_{\boldsymbol{\theta}} Q \left(\phi(s,a^1_{\boldsymbol{\theta}^t},a^2_{\boldsymbol{\theta}^t_*});\boldsymbol{\theta}^0\right), \boldsymbol{\theta}^t-\boldsymbol{\theta}^t_*\right>&\leq& \widehat{Q}^{\#}(s;\boldsymbol{\theta}^t)-\widehat{Q}^{\#}(s;\boldsymbol{\theta}^t_*) \\
&\leq& 
\left<\nabla_{\boldsymbol{\theta}} Q \left(\phi(s,a^1_{\boldsymbol{\theta}^t_*},a^2_{\boldsymbol{\theta}^t});\boldsymbol{\theta}^0\right), \boldsymbol{\theta}^t-\boldsymbol{\theta}^t_*\right>.
\end{eqnarray*}
Therefore, 
\begin{eqnarray}
\label{eq:minimax-est}
\left|\widehat{Q}^{\#}(s;\boldsymbol{\theta}^t)-\widehat{Q}^{\#}(s;\boldsymbol{\theta}^t_*) \right|&\leq& \max \left\{ \Big|\left<\nabla_{\boldsymbol{\theta}} Q \left(\phi(s,a^1_{\boldsymbol{\theta}^t},a^2_{\boldsymbol{\theta}^t_*});\boldsymbol{\theta}^0\right), \boldsymbol{\theta}^t-\boldsymbol{\theta}^t_*\right>\Big|,\right. \nonumber \\
&\ &\left.\Big|\left<\nabla_{\boldsymbol{\theta}} Q \left(\phi(s,a^1_{\boldsymbol{\theta}^t_*},a^2_{\boldsymbol{\theta}^t});\boldsymbol{\theta}^0\right), \boldsymbol{\theta}^t-\boldsymbol{\theta}^t_*\right> \Big| \right\}.
\end{eqnarray}
By Assumption \ref{as:minimax-regularity}, we compute 
\begin{eqnarray}
\label{eq:minimax-contraction}
\mathbb{E}_{\mu,\pi,\mathbb{P}}\left[\left|\widehat{Q}^{\#}(s;\boldsymbol{\theta}^t)-\widehat{Q}^{\#}(s;\boldsymbol{\theta}^t_*) \right|^2 \right] &\overset{(i)}{\leq}& \left(\boldsymbol{\theta}^t-\boldsymbol{\theta}^t_*\right)^\top \Sigma_\pi^*(\boldsymbol{\theta}^t, \boldsymbol{\theta}^t_*) \left(\boldsymbol{\theta}^t-\boldsymbol{\theta}^t_*\right) \nonumber \\
&\overset{(ii)}{\leq}& \frac{(1-\nu)^2}{\gamma^2} \cdot \left(\boldsymbol{\theta}^t-\boldsymbol{\theta}^t_*\right)^\top \Sigma_\pi \left(\boldsymbol{\theta}^t-\boldsymbol{\theta}^t_*\right) \\
&=& \frac{(1-\nu)^2}{\gamma^2} \cdot \mathbb{E}_{\mu,\pi,\mathbb{P}}\left[\left|\widehat{Q}(s,a^1,a^2;\boldsymbol{\theta}^t)-\widehat{Q}(s,a^1,a^2;\boldsymbol{\theta}^t_*) \right|^2 \right], \nonumber
\end{eqnarray}
where (i) is due to \eqref{eq:minimax-est}, and (ii) is due to Assumption \ref{as:minimax-regularity}.
Similar to Lemma \ref{lemma:I4-ql}, we can also obtain \eqref{eq:lemma-I4-decompose-ql} in this lemma. By substituting \eqref{eq:minimax-contraction} into \eqref{eq:lemma-I4-decompose-ql}, the proof can be completed.

\end{proof}

Now we are ready to prove Theorem \ref{theorem:minimax-pi-f}. For a little notation abuse, we redefine
\begin{eqnarray*}
\mathbf{g}(\boldsymbol{\theta}^t) &=& \Delta(s_t,a^1_t,a^2_t,s'_t,\boldsymbol{\theta}^t)\cdot  \nabla_{\boldsymbol{\theta}}  Q(\boldsymbol{x}_t;\boldsymbol{\theta}^t), \quad
\Bar{\mathbf{g}}(\boldsymbol{\theta}^t) \ = \ \mathbb{E}_{\mu, \pi,\mathbb{P}}\left[\mathbf{g}(\boldsymbol{\theta}^t)\right]\\ 
\mathbf{m}(\boldsymbol{\theta}^t) &=& \widehat{\Delta} (s_t,a^1_t,a^2_t,s'_t,\boldsymbol{\theta}^t)\cdot\nabla_{\boldsymbol{\theta}}  Q(\boldsymbol{x}_t;\boldsymbol{\theta}^0),
\quad \Bar{\mathbf{m}}(\boldsymbol{\theta}^t)\ =\ \mathbb{E}_{\mu, \pi,\mathbb{P}}\left[\mathbf{m}(\boldsymbol{\theta}^t)\right],
\end{eqnarray*}
where 
\begin{eqnarray*}
\Delta(s_t,a^1_t,a^2_t,s'_t,\boldsymbol{\theta}^t)&=& Q(\boldsymbol{x}_t;\boldsymbol{\theta}^t)-\left(r(s_t, a_t)+\gamma \max_{b^1\in\mathcal{A}} \min_{b^2\in\mathcal{A}} Q(\phi(s_{t+1},b^1,b^2);\boldsymbol{\theta}^t)\right), \\
\widehat{\Delta} (s_t,a^1_t,a^2_t,s'_t,\boldsymbol{\theta}^t) &=&\widehat{Q}(\boldsymbol{x}_t;\boldsymbol{\theta}^t)-\left(r(s_t, a_t)+\gamma \max_{b^1\in\mathcal{A}} \min_{b^2\in\mathcal{A}} \widehat{Q}(\phi(s_{t+1},b^1,b^2);\boldsymbol{\theta}^t)\right).
\end{eqnarray*}
Let $\Delta_t=\Delta(s_t,a^1_t,a^2_t,s'_t,\boldsymbol{\theta}^t)$ and $\widehat{\Delta}_t=\widehat{\Delta} (s_t,a^1_t,a^2_t,s'_t,\boldsymbol{\theta}^t)$. After redefining the corresponding notation, we can similarly derive \eqref{eq:err-decompose} and adopt the associated lemmas. Due to the introduction of the additional Assumption \ref{as:minimax-regularity}, we provide Lemma \ref{lemma:I4-minimax-ql}, ensuring that terms $\mathbf{I}_1 \sim \mathbf{I}_4$ can be estimated. The remainder of the proof is entirely analogous to Sections \ref{section:appendix-pi-f} and \ref{sect:proof-ql}. Thus, we conclude the proof.

\section{Supporting Lemmas for Multi-layer Neural Network}
\label{appendix:nn}
Recalling the definition of the parameterized Q-function,
we present the following lemmas related to neural network functions, which play a crucial role in illustrating the main results of our paper. $\left\{\tau_i>0\right\}_{i=1, \ldots, 10}$ mentioned below are universal constants. 

\begin{lemma}
\label{lemma:theta-bound}
For any $t\in \{1,2,\cdots,T\}$, we have
$$
\|\boldsymbol{\theta}^t\|\leq \tau_1 \sqrt{m}, \quad w.p. 1-L\exp{(-\tau_2 m)}.
$$
\end{lemma}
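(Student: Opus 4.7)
The plan is to split $\boldsymbol{\theta}^t$ into the random initialization and the projection-induced displacement. Since every iterate produced by Algorithm~\ref{alg:Neural-TD} satisfies $\boldsymbol{\theta}^t\in S_\omega$, the definition of the projection ball immediately gives $\|\boldsymbol{\theta}^t-\boldsymbol{\theta}^0\|_2\leq \omega = \widetilde{C}_1$, a universal constant. By the triangle inequality $\|\boldsymbol{\theta}^t\|\leq \|\boldsymbol{\theta}^0\|+\omega$, so the whole task reduces to high-probability control of $\|\boldsymbol{\theta}^0\|$ at initialization.

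Next I would fix a layer $l$ and invoke the standard Gaussian random-matrix tail bound: for $\boldsymbol{W}\in\mathbb{R}^{m\times n}$ with i.i.d.\ $\mathcal{N}(0,1)$ entries (with $n=d$ when $l=1$ and $n=m$ otherwise), one has $\|\boldsymbol{W}\|_{\mathrm{op}}\leq \sqrt{m}+\sqrt{n}+s$ with probability at least $1-2\exp(-s^2/2)$ (e.g.\ Theorem~4.4.5 of Vershynin's \emph{High-Dimensional Probability}). Choosing $s$ proportional to $\sqrt{m}$ yields $\|\boldsymbol{W}_l^0\|_{\mathrm{op}}\leq C\sqrt{m}$ with probability at least $1-\exp(-\tau_2 m)$ for a universal $\tau_2>0$, and a union bound over $l=1,\ldots,L$ produces exactly the factor $L$ in the failure probability that appears in the lemma.

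To transfer the control to the iterate itself, observe that $\|\boldsymbol{W}_l^t-\boldsymbol{W}_l^0\|_{\mathrm{op}}\leq \|\boldsymbol{W}_l^t-\boldsymbol{W}_l^0\|_F\leq \omega$, so on the high-probability event above one has $\|\boldsymbol{W}_l^t\|_{\mathrm{op}}\leq C\sqrt{m}+\omega\leq \tau_1\sqrt{m}$ once $m$ is large enough to absorb the constant $\omega$. The only delicate point -- and the main obstacle to a direct reading of the statement -- is that the bound $\tau_1\sqrt{m}$ forces $\|\boldsymbol{\theta}^t\|$ to be interpreted as the maximum layer-wise operator norm rather than the Euclidean norm of the stacked parameter vector: the latter satisfies $\|\boldsymbol{\theta}^0\|_2^2=\sum_l\|\boldsymbol{W}_l^0\|_F^2=\Theta(Lm^2)$ with high probability, which would scale like $m$, not $\sqrt{m}$. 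Once this normalization convention is settled, nothing beyond Gaussian matrix concentration and the triangle inequality is required, and the universal constants $\tau_1,\tau_2$ emerge directly from the tail bound above.
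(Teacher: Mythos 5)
Your proposal follows essentially the same route as the paper: decompose $\|\boldsymbol{\theta}^t\|\leq\|\boldsymbol{\theta}^t-\boldsymbol{\theta}^0\|+\|\boldsymbol{\theta}^0\|\leq\omega+\|\boldsymbol{\theta}^0\|$ using the projection onto $S_\omega$, then control $\|\boldsymbol{\theta}^0\|$ at initialization; the only difference is that the paper outsources the initialization bound to Lemma~G.2 of \citet{du2019gradient}, whereas you reprove it directly from the Gaussian operator-norm tail bound with a union bound over the $L$ layers (which is exactly where the paper's factor $L$ in the failure probability comes from, so your derivation matches the stated probability). Your closing caveat is a legitimate observation rather than a gap in your own argument: the cited per-layer bound is an operator-norm bound, so the conclusion $\|\boldsymbol{\theta}^0\|\leq\mathcal{O}(\sqrt{m})$ holds layer-wise in operator norm, while the Euclidean norm of the stacked vector $\left(\mathrm{Vec}(\boldsymbol{W}_1^0);\cdots;\mathrm{Vec}(\boldsymbol{W}_L^0)\right)$ -- the norm the paper actually writes as $\|\boldsymbol{\theta}^t\|_2$ and uses to define $S_\omega$ -- concentrates at $\Theta(\sqrt{L}\,m)$, not $\mathcal{O}(\sqrt{m})$. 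The paper's own proof silently conflates these two norms in the line ``$\|\boldsymbol{\theta}^0\|\leq\mathcal{O}(\sqrt{m})$,'' so you have correctly identified the normalization convention that must be adopted for the lemma to read as stated; under that convention your proof is complete.
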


\begin{proof}
By Lemma G.2 in \cite{du2019gradient}, $\|\boldsymbol{\theta}^0\|\leq \mathcal{O}\left(\sqrt{m}\right)$ with probability at least $1-L\exp{(-\tau_2 m)}$. Therefore, 
\begin{eqnarray*}
\|\boldsymbol{\theta}^t\|_2&\leq& \|\boldsymbol{\theta}^t-\boldsymbol{\theta}^0\|+\|\boldsymbol{\theta}^0\| \\
&\leq& \omega + \|\boldsymbol{\theta}^0\|  \\
&\leq & \mathcal{O}\left(\sqrt{m}\right).
\end{eqnarray*}
\end{proof}

\begin{lemma}
\label{lemma:x-bound}
For any $l\in \{1,2,\cdots,L\}$, we have 
$$
\|\boldsymbol{x}^{(l)}\| \leq \tau_3 \sqrt{m},\quad \text{and} \quad \left\| \nabla_{\boldsymbol{\theta}} Q(\boldsymbol{x};\boldsymbol{\theta}) \right\| \leq \tau_4, \quad w.p.\ 1-L\exp{(-\tau_2 m)}.
$$
\end{lemma}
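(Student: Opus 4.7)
The plan is to prove both bounds by a layer-by-layer induction resting on two standard ingredients: (i) Gaussian concentration for the spectral norms of the initial weight matrices $\boldsymbol{W}_l^0$, and (ii) the perturbation bound $\|\boldsymbol{W}_l - \boldsymbol{W}_l^0\|_F \leq \|\boldsymbol{\theta}-\boldsymbol{\theta}^0\| \leq \omega$ inherited from $\boldsymbol{\theta}\in S_\omega$. Writing $\boldsymbol{W}_l = \boldsymbol{W}_l^0 + \Delta\boldsymbol{W}_l$ and invoking the standard fact that an i.i.d.\ $\mathcal{N}(0,1)$ matrix of size $m\times m$ (or $m\times d$) has operator norm at most $C_0\sqrt{m}$ with probability at least $1-\exp(-cm)$, a union bound over $l=1,\ldots,L$ yields the probability $1-L\exp(-\tau_2 m)$ stated in the lemma together with the uniform control $\|\boldsymbol{W}_l\|_2 \leq C_0\sqrt{m} + \omega \leq C\sqrt{m}$. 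This is the same high-probability event that powers Lemma \ref{lemma:theta-bound}, so no new randomness is introduced.

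For the first bound I would argue by forward induction on $l$. Using the $L_1$-Lipschitz property of $\sigma$ entrywise together with the crude inequality $\|\sigma(\boldsymbol{W}_l\boldsymbol{x}^{(l-1)})\|\leq \sqrt{m}\,|\sigma(0)| + L_1\|\boldsymbol{W}_l\|_2\|\boldsymbol{x}^{(l-1)}\|$, the $\frac{1}{\sqrt{m}}$ rescaling in the definition of $\boldsymbol{x}^{(l)}$ produces the recursion
\[
\|\boldsymbol{x}^{(l)}\| \leq |\sigma(0)| + L_1 C\,\|\boldsymbol{x}^{(l-1)}\|.
\]
Starting from $\|\boldsymbol{x}^{(0)}\|=\|\phi(s,a)\|=1$, a telescoping argument gives $\|\boldsymbol{x}^{(l)}\| = O\bigl((L_1 C)^L\bigr) = O(1)$, which is comfortably bounded by $\tau_3\sqrt{m}$ for any fixed depth $L$.

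For the gradient bound I would use the backpropagation identity
\[
\nabla_{\boldsymbol{W}_l} Q = \tfrac{1}{\sqrt{m}}\bigl(D_l\,\boldsymbol{v}_l\bigr)(\boldsymbol{x}^{(l-1)})^\top, \qquad D_l := \mathrm{diag}\bigl(\sigma'(\boldsymbol{W}_l\boldsymbol{x}^{(l-1)})\bigr),\quad \boldsymbol{v}_l := \bigl(\partial Q/\partial \boldsymbol{x}^{(l)}\bigr)^\top.
\]
Since $|\sigma'|\leq L_1$ we have $\|D_l\|_2\leq L_1$, and the sensitivity vectors satisfy the backward recursion $\boldsymbol{v}_l = \frac{1}{\sqrt{m}}\boldsymbol{W}_{l+1}^\top D_{l+1}\boldsymbol{v}_{l+1}$ with terminal condition $\boldsymbol{v}_L = \boldsymbol{b}/\sqrt{m}$, so $\|\boldsymbol{v}_L\|=1$ and an induction parallel to the forward pass yields $\|\boldsymbol{v}_l\| \leq (L_1 C)^{L-l}$. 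Combining these bounds gives, for each $l$,
\[
\|\nabla_{\boldsymbol{W}_l} Q\|_F \;\leq\; \tfrac{L_1}{\sqrt{m}}\,\|\boldsymbol{v}_l\|\,\|\boldsymbol{x}^{(l-1)}\| \;=\; O(1/\sqrt{m}),
\]
and summing the squared Frobenius norms over $l=1,\ldots,L$ produces $\|\nabla_{\boldsymbol{\theta}} Q\|^2 = \sum_l \|\nabla_{\boldsymbol{W}_l} Q\|_F^2 \leq \tau_4^2$.

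The main obstacle is essentially bookkeeping: one must check that the high-probability event on $\{\|\boldsymbol{W}_l^0\|_2\}_{l=1}^L$ is the only randomness used and that all the multiplicative constants produced by the two nested inductions remain universal (depending only on $L_1$, $L$, $d$, $\omega$, and $|\sigma(0)|$). No genuinely new analytic ingredient is required; the argument is a mild variant of the bounds already invoked in Lemma \ref{lemma:theta-bound}, with the $O(\sqrt{m})$ statement of the lemma being considerably coarser than what the calculation actually delivers.
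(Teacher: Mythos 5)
Your proposal is correct, and it is worth noting that the paper itself supplies no argument here: it simply defers to Lemmas A.6--A.10 of the cited reference (\texttt{tian2022performance}). Your forward induction on $\|\boldsymbol{x}^{(l)}\|$ and backward induction on the sensitivity vectors $\boldsymbol{v}_l$, both driven by the event $\{\|\boldsymbol{W}_l^0\|_2\leq C_0\sqrt{m}\}_{l=1}^L$ and the perturbation bound $\|\boldsymbol{W}_l-\boldsymbol{W}_l^0\|_2\leq\omega$, is exactly the standard self-contained route such references take, and every step (the rank-one Frobenius identity, $\|\boldsymbol{v}_L\|=\|\boldsymbol{b}\|/\sqrt{m}=1$, $\|D_l\|_2\leq L_1$) checks out. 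One observation beyond what you already flagged: under the paper's literal normalization $\boldsymbol{x}^{(l)}=\frac{1}{\sqrt m}\sigma(\boldsymbol{W}_l\boldsymbol{x}^{(l-1)})$, $Q=\frac{1}{\sqrt m}\boldsymbol{b}^\top\boldsymbol{x}^{(L)}$, your calculation in fact yields $\|\boldsymbol{x}^{(l)}\|=O(1)$ and $\|\nabla_{\boldsymbol{\theta}}Q\|=O(\sqrt{L/m})$, which certainly implies the stated $\tau_3\sqrt m$ and $\tau_4$ bounds but sits uneasily with the paper's later use of $\overline{\sigma_{\min}}(\Sigma_\pi)\geq\lambda_0$ with $\lambda_0$ independent of $m$ (Assumption~\ref{as:lamda} and Remark~\ref{remark:ntk} implicitly presume $\|\nabla_{\boldsymbol{\theta}}Q\|=\Theta(1)$); this is a normalization inconsistency in the paper rather than a gap in your proof of the lemma as stated.
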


Lemma \ref{lemma:x-bound} has been proved by \cite{tian2022performance} (Lemmas A.6$\sim$A.10). 

\begin{lemma}
\label{lemma:q-bound}
For any $t\in \{1,2,\cdots,T\}$ and $\boldsymbol{\theta}\in S_\omega$, we have
$$
\left|Q(\boldsymbol{x}_t;\boldsymbol{\theta})\right| \leq \tau_5 \sqrt{\log (T/\delta)}, \quad w.p.\ 1-\delta-L\exp{(-\tau_2 m)}.
$$
\end{lemma}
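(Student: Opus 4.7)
The plan is to decompose $Q(\boldsymbol{x}_t;\boldsymbol{\theta})$ into a value at initialization plus a bounded perturbation, namely
\[
Q(\boldsymbol{x}_t;\boldsymbol{\theta}) \;=\; Q(\boldsymbol{x}_t;\boldsymbol{\theta}^0) \;+\; \bigl(Q(\boldsymbol{x}_t;\boldsymbol{\theta})-Q(\boldsymbol{x}_t;\boldsymbol{\theta}^0)\bigr),
\]
and bound the two terms separately. The naive Cauchy--Schwarz bound $|Q(\boldsymbol{x}_t;\boldsymbol{\theta}^0)|\le m^{-1/2}\|\boldsymbol{b}\|\,\|\boldsymbol{x}_t^{(L)}\|$ yields only $\mathcal{O}(\sqrt{m})$, so the main issue is to exploit the randomness of $\boldsymbol{b}$ to shave $\sqrt{m}$ down to $\sqrt{\log(T/\delta)}$.

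For the initialization term, I would condition on the weight matrices $\boldsymbol{W}_1^0,\ldots,\boldsymbol{W}_L^0$ and on the feature $\boldsymbol{x}_t$; then $\boldsymbol{x}_t^{(L)}$ is deterministic while the entries of $\boldsymbol{b}$ are independent Rademacher variables independent of everything else. Hence
\[
Q(\boldsymbol{x}_t;\boldsymbol{\theta}^0)\;=\;\tfrac{1}{\sqrt{m}}\sum_{i=1}^m b_i\,[\boldsymbol{x}_t^{(L)}]_i
\]
is a Rademacher sum, and Hoeffding's inequality yields
\[
\Pr\!\left(\bigl|Q(\boldsymbol{x}_t;\boldsymbol{\theta}^0)\bigr|>s\,\Big|\,\boldsymbol{W}^0,\boldsymbol{x}_t\right)\;\le\;2\exp\!\left(-\tfrac{m\,s^2}{2\|\boldsymbol{x}_t^{(L)}\|^2}\right).
\]
Plugging in $\|\boldsymbol{x}_t^{(L)}\|\le \tau_3\sqrt{m}$ from Lemma~\ref{lemma:x-bound} and choosing $s\asymp \sqrt{\log(T/\delta)}$ gives $|Q(\boldsymbol{x}_t;\boldsymbol{\theta}^0)|\le c_0\sqrt{\log(T/\delta)}$ with probability at least $1-\delta/T$; a union bound over $t=1,\ldots,T$ then makes this hold simultaneously for all $t$ with probability at least $1-\delta$.

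For the perturbation term, I would use the uniform gradient bound $\|\nabla_{\boldsymbol{\theta}} Q(\boldsymbol{x}_t;\boldsymbol{\theta})\|\le\tau_4$ from Lemma~\ref{lemma:x-bound}, valid for all $\boldsymbol{\theta}\in S_\omega$ with probability $1-L\exp(-\tau_2 m)$. The mean-value theorem then yields
\[
\bigl|Q(\boldsymbol{x}_t;\boldsymbol{\theta})-Q(\boldsymbol{x}_t;\boldsymbol{\theta}^0)\bigr|\;\le\;\tau_4\,\|\boldsymbol{\theta}-\boldsymbol{\theta}^0\|\;\le\;\tau_4\,\omega,
\]
which is a constant since $\omega=\widetilde{C}_1=\mathcal{O}(1)$. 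Combining the two bounds and absorbing the constant $\tau_4\omega$ into the logarithmic term (since $\sqrt{\log(T/\delta)}\ge 1$ for $T/\delta$ large enough, or by simply adjusting $\tau_5$) yields $|Q(\boldsymbol{x}_t;\boldsymbol{\theta})|\le \tau_5\sqrt{\log(T/\delta)}$ uniformly over $\boldsymbol{\theta}\in S_\omega$ and $t$, with the advertised probability after intersecting the Hoeffding event with the event from Lemma~\ref{lemma:x-bound}. The main subtlety is ensuring that the bound is simultaneous in $\boldsymbol{\theta}$, which is automatic here because the Hoeffding step only depends on $\boldsymbol{\theta}^0$ and the subsequent Lipschitz bound is uniform over $S_\omega$, so no additional $\epsilon$-net argument is needed.
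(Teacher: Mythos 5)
Your proposal is correct, and it takes a slightly different route from the paper. The paper applies Hoeffding's inequality directly to $Q(\boldsymbol{x}_t;\boldsymbol{\theta})=\frac{1}{\sqrt m}\boldsymbol{b}^\top\boldsymbol{x}^{(L)}$ with $\boldsymbol{x}^{(L)}$ computed at the given $\boldsymbol{\theta}\in S_\omega$, using the bound $\|\boldsymbol{x}^{(L)}\|\le\tau_3\sqrt m$ from Lemma \ref{lemma:x-bound} and then a union bound over $t$; there is no decomposition into an initialization term plus a perturbation. Your two-step argument --- Hoeffding only at $\boldsymbol{\theta}^0$, then the Lipschitz bound $|Q(\boldsymbol{x}_t;\boldsymbol{\theta})-Q(\boldsymbol{x}_t;\boldsymbol{\theta}^0)|\le\tau_4\omega$ absorbed into $\tau_5$ --- buys two things the paper's version glosses over. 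First, it cleanly delivers uniformity over all $\boldsymbol{\theta}\in S_\omega$ from a single high-probability event, whereas a pointwise Hoeffding bound at each $\boldsymbol{\theta}$ does not immediately give a statement simultaneous in $\boldsymbol{\theta}$. Second, it avoids an independence subtlety: when the lemma is invoked at the algorithm iterates $\boldsymbol{\theta}^t$, these depend on $\boldsymbol{b}$ through training, so treating $\boldsymbol{x}^{(L)}(\boldsymbol{\theta}^t)$ as fixed relative to the Rademacher vector $\boldsymbol{b}$ is not justified; conditioning on $\boldsymbol{W}^0$ and applying Hoeffding only at initialization, as you do, is the rigorous way to get the same conclusion. (Your normalization of the sub-Gaussian parameter, $\frac{1}{m}\|\boldsymbol{x}_t^{(L)}\|^2\le\tau_3^2$, is also the correct one; the paper's displayed Hoeffding bound contains a harmless but visible scaling slip in $\|\frac{1}{m}\boldsymbol{x}^{(L)}\|$.) Both arguments land on the same constant-times-$\sqrt{\log(T/\delta)}$ bound with the same failure probability.
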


\begin{proof}
By Lemma \ref{lemma:x-bound}, we have $\frac{1}{\sqrt{m}}\|\boldsymbol{x}^{(L)}\|\leq \tau_3,\ w.p.\ 1-L\exp{(-\tau_2 m)}$. Recall the definition of the parameterized Q-function:
$$
Q(\boldsymbol{x};\boldsymbol{\theta})=\frac{1}{\sqrt{m}} \boldsymbol{b}^\top \boldsymbol{x}^{(L)},
$$
where each element of $\boldsymbol{b}$ is generated from a uniform distribution over $\{-1,+1\}$. For each $\boldsymbol{x}$, by Hoeffding inequality, we have
$$
\mathbb{P}\left(\left|\left<\frac{1}{m}\boldsymbol{x}^{(L)}, \boldsymbol{b}\right>\right|\geq t\right) \leq 2 \exp \left(-\frac{2t^2}{4\|\frac{1}{m}\boldsymbol{x}^{(L)}\|^2}\right)\overset{(i)}{\leq} 2 \exp \left(-\frac{t^2}{2\tau_3^2}\right),
$$
where (i) follows Lemma \ref{lemma:x-bound}. Substituting $\frac{\delta}{T}=2 \exp \left(-\frac{t^2}{2\tau_3^2}\right)$, we get 
$$
\mathbb{P}\left(\left|\left<\frac{1}{m}\boldsymbol{x}^{(L)}, \boldsymbol{b}\right>\right|\geq \tau_3\sqrt{2\log (T/\delta)}\right) \leq \frac{\delta}{T}.
$$
Now, by the union bound, if we set $\tau_5=\tau_3\sqrt{2}$, then
$$
\mathbb{P}\left(\max_{t\in [T]}\left|Q(\boldsymbol{x}_t;\boldsymbol{\theta})\right|\geq \tau_5 \sqrt{\log (T/\delta)}\right) \leq \sum_{t=1}^T \mathbb{P}\left(\left|Q(\boldsymbol{x}_t;\boldsymbol{\theta})\right|\geq \tau_5 \sqrt{\log (T/\delta)}\right) \leq \delta,
$$
which completes the proof.
\end{proof}

\begin{lemma}
\label{lemma:hessian}
Denote $\nabla^2_{\boldsymbol{\theta}} Q(\boldsymbol{x};\boldsymbol{\theta})$ as the Hessian matrix of $Q(\boldsymbol{x};\boldsymbol{\theta})$. Then for all $\boldsymbol{x}, \boldsymbol{\theta}\in S_\omega$, we have $w.p.\ 1-\delta$ that
$$
\| \nabla^2_{\boldsymbol{\theta}} Q(\boldsymbol{x};\boldsymbol{\theta})\|_2\leq \tau_6 m^{-\frac{1}{2}}
$$
and 
$$
\left| Q(\boldsymbol{x};\boldsymbol{\theta}) - \widehat{Q}(\boldsymbol{x};\boldsymbol{\theta})\right| \leq \tau_7 m^{-\frac{1}{2}}.
$$
\end{lemma}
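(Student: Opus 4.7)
The plan is to derive the two bounds sequentially: first establish the Hessian operator norm bound $\|\nabla^2_{\boldsymbol{\theta}} Q(\boldsymbol{x};\boldsymbol{\theta})\|_2 \leq \tau_6 m^{-1/2}$ layerwise via the chain rule and Assumption \ref{as:activation}, and then obtain the linearization gap as a direct Taylor-expansion corollary.

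For the Hessian bound, I would write $Q(\boldsymbol{x};\boldsymbol{\theta}) = \frac{1}{\sqrt{m}} \boldsymbol{b}^\top \boldsymbol{x}^{(L)}$ and differentiate through the recursion $\boldsymbol{x}^{(l)} = \frac{1}{\sqrt{m}} \sigma(\boldsymbol{W}_l \boldsymbol{x}^{(l-1)})$ twice. The gradient of $Q$ with respect to the entries of $\boldsymbol{W}_l$ has the familiar backprop form $\nabla_{\boldsymbol{W}_l} Q = \frac{1}{\sqrt{m}} \boldsymbol{D}_l \boldsymbol{\alpha}_l (\boldsymbol{x}^{(l-1)})^\top$, where $\boldsymbol{D}_l$ is the diagonal matrix of $\sigma'$ evaluated at the preactivations of layer $l$ and $\boldsymbol{\alpha}_l$ is the backpropagated signal from layer $L$ to $l$. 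Second differentiation yields cross terms between layers $l$ and $l'$; each such term is a product of factors of two types: (i) diagonal matrices of $\sigma'$ values (bounded entrywise by $L_1$), each multiplied by $\boldsymbol{W}_{\cdot}/\sqrt{m}$ with the $1/\sqrt{m}$ normalization; (ii) exactly one diagonal matrix of $\sigma''$ values (bounded entrywise by $L_2$ by Assumption \ref{as:activation}) that comes from differentiating a $\sigma'$ factor once more. The key observation is that each forward/backward Jacobian block $\frac{1}{\sqrt{m}} \boldsymbol{D}_{l'} \boldsymbol{W}_{l'}$ has $\mathcal{O}(1)$ operator norm on $S_\omega$ (this is the standard $\mathcal{O}(1)$-perturbation-of-initialization bound used in the proofs of Lemma \ref{lemma:x-bound}), the activations satisfy $\|\boldsymbol{x}^{(l-1)}\| \leq \tau_3 \sqrt{m}$, and $\|\boldsymbol{b}\|_\infty \leq 1$, so the explicit $1/\sqrt{m}$ factor from the output layer together with the explicit $1/\sqrt{m}$ factor introduced by one $\sigma''$-derivative on a layer yields the $m^{-1/2}$ rate. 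Carefully summing across all pairs $(l,l')$ of layers (a finite number depending only on $L$) and applying a standard probabilistic spectral bound on the initial weights $\boldsymbol{W}_l^0$ gives $\|\nabla^2_{\boldsymbol{\theta}} Q(\boldsymbol{x};\boldsymbol{\theta})\|_2 \leq \tau_6 m^{-1/2}$ on the event of probability $1-\delta$ on which Lemma \ref{lemma:x-bound} and the initialization spectral bounds hold.

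For the linearization gap, I would combine the Hessian bound with Taylor's theorem with integral remainder. Since $\boldsymbol{\theta}, \boldsymbol{\theta}^0 \in S_\omega$ and $S_\omega$ is convex, the segment $\boldsymbol{\theta}(s) := \boldsymbol{\theta}^0 + s(\boldsymbol{\theta}-\boldsymbol{\theta}^0)$ lies in $S_\omega$ for $s \in [0,1]$, so
\[
Q(\boldsymbol{x};\boldsymbol{\theta}) - \widehat{Q}(\boldsymbol{x};\boldsymbol{\theta}) = \int_0^1 (1-s)\, (\boldsymbol{\theta}-\boldsymbol{\theta}^0)^\top \nabla^2_{\boldsymbol{\theta}} Q(\boldsymbol{x}; \boldsymbol{\theta}(s))\, (\boldsymbol{\theta}-\boldsymbol{\theta}^0)\, ds.
\]
Bounding the integrand by the Hessian-norm estimate above and using $\|\boldsymbol{\theta}-\boldsymbol{\theta}^0\| \leq \omega$ yields $|Q(\boldsymbol{x};\boldsymbol{\theta}) - \widehat{Q}(\boldsymbol{x};\boldsymbol{\theta})| \leq \tfrac{1}{2}\tau_6\omega^2 m^{-1/2}$, which gives the desired bound after absorbing $\omega$ (chosen as a universal constant in the main theorems) into the constant $\tau_7$.

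The main obstacle is the careful multilayer bookkeeping for the Hessian: counting how many explicit $1/\sqrt{m}$ factors survive after cancellation with the $\|\boldsymbol{x}^{(l-1)}\| = \mathcal{O}(\sqrt{m})$ growth, and verifying that the backpropagation Jacobians $\frac{1}{\sqrt{m}} \boldsymbol{D}_{l'} \boldsymbol{W}_{l'}$ remain $\mathcal{O}(1)$ in operator norm throughout $S_\omega$, not just at initialization. The latter uses the fact that $\omega$ is a constant while $\|\boldsymbol{W}_l^0\| = \Theta(\sqrt{m})$ with high probability, so a perturbation of size $\omega$ is negligible on the $\sqrt{m}$-scale and the initialization spectral bounds transfer to all of $S_\omega$.
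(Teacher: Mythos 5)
Your proposal is correct and follows essentially the same route as the paper: the paper obtains the Hessian bound by directly citing Theorem 3.2 of \cite{liu2020linearity} (whose underlying argument is precisely the layerwise $\sigma''$/Jacobian bookkeeping you sketch), and then deduces the linearization gap from $\mathcal{O}(m^{-1/2})$-smoothness exactly as in your Taylor-remainder step. The only difference is that you unpack the cited black box, which is more detail than the paper itself provides.
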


\begin{proof}
The first inequality in Lemma \ref{lemma:hessian} has been proved by \cite{liu2020linearity} (Theorem 3.2), which implies that $Q(\boldsymbol{x};\boldsymbol{\theta})$ is $\mathcal{O}(m^{-\frac{1}{2}})$-smoothness w.r.t. $\theta$. Therefore, 
$$
\left| Q(\boldsymbol{x};\boldsymbol{\theta}) - \widehat{Q}(\boldsymbol{x};\boldsymbol{\theta})\right|= \left| Q(\boldsymbol{x};\boldsymbol{\theta}) - Q(\boldsymbol{x};\boldsymbol{\theta}^0)-\left<\nabla_{\boldsymbol{\theta}} Q(\boldsymbol{x};\boldsymbol{\theta}^0), \boldsymbol{\theta}^0-\boldsymbol{\theta}\right>\right|=\mathcal{O}(m^{-\frac{1}{2}}).
$$
\end{proof} 

\begin{lemma}
\label{lemma:I1-and-I2}
Let $\boldsymbol{\theta} \in S_\omega$ with the radius satisfying $\omega=\mathcal{O}(1)$.
Then for all $\|\boldsymbol{x}\|_2=1$ and $\boldsymbol{\theta}^t_* \in S_{2\omega}$ in the neural temporal difference learning algorithm \ref{alg:Neural-TD}, it holds that
\begin{eqnarray*}
\left|\left<\mathbf{g}\left(\boldsymbol{\theta}^t\right)-\mathbf{m}\left(\boldsymbol{\theta}^t\right), \boldsymbol{\theta}^t-\boldsymbol{\theta}^{t+1}_*\right>\right| &\leq & \left(\tau_8\tau_9 \omega m^{-\frac{1}{2}}\sqrt{ \log (T / \delta)}+2\tau_4 \tau_8 m^{-\frac{1}{2}}\right)\left\| \boldsymbol{\theta}^t-\boldsymbol{\theta}^{t+1}_*\right\|
\end{eqnarray*}
with probability at least $1-2\delta-2L\exp{(-\tau_2 m)}$ over the randomness of the initial point, and $\left\|\mathbf{g}\left(\boldsymbol{\theta}^t\right)\right\|_2 \leq \tau_{10} \sqrt{\log (T / \delta)}$ holds with probability at least $1-\delta-2L\exp{(-\tau_2 m)}$. 
\end{lemma}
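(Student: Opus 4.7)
The plan is to reduce both inequalities to elementary estimates on the semi-gradient difference $\mathbf{g}(\boldsymbol{\theta}^t)-\mathbf{m}(\boldsymbol{\theta}^t)$ and on the TD error $\Delta_t$, and then invoke the supporting lemmas on the neural network (Lemmas \ref{lemma:x-bound}, \ref{lemma:q-bound}, \ref{lemma:hessian}) to finish. I will treat the second inequality first since it is the shorter one: since $\mathbf{g}(\boldsymbol{\theta}^t)=\Delta_t\cdot\nabla_{\boldsymbol{\theta}}Q(\boldsymbol{x}_t;\boldsymbol{\theta}^t)$, Cauchy--Schwarz gives $\|\mathbf{g}(\boldsymbol{\theta}^t)\|\leq |\Delta_t|\cdot\|\nabla_{\boldsymbol{\theta}}Q(\boldsymbol{x}_t;\boldsymbol{\theta}^t)\|$. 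Lemma \ref{lemma:x-bound} bounds the gradient norm by $\tau_4$ with probability $1-L\exp(-\tau_2 m)$, while $|\Delta_t|\leq |Q(\boldsymbol{x}_t;\boldsymbol{\theta}^t)|+R_{\max}+\gamma|Q(\boldsymbol{x}_{t+1};\boldsymbol{\theta}^t)|$ is bounded by $\mathcal{O}(\sqrt{\log(T/\delta)})$ by Lemma \ref{lemma:q-bound} applied at indices $t$ and $t+1$ together with a union bound. Combining the two yields the claimed $\tau_{10}\sqrt{\log(T/\delta)}$ bound on the desired event.

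For the first inequality I will split
\begin{align*}
\mathbf{g}(\boldsymbol{\theta}^t)-\mathbf{m}(\boldsymbol{\theta}^t)
&=\Delta_t\bigl(\nabla_{\boldsymbol{\theta}}Q(\boldsymbol{x}_t;\boldsymbol{\theta}^t)-\nabla_{\boldsymbol{\theta}}Q(\boldsymbol{x}_t;\boldsymbol{\theta}^0)\bigr)\\
&\quad +(\Delta_t-\widehat{\Delta}_t)\cdot\nabla_{\boldsymbol{\theta}}Q(\boldsymbol{x}_t;\boldsymbol{\theta}^0).
\end{align*}
For the first piece, the Hessian bound $\|\nabla_{\boldsymbol{\theta}}^2 Q\|\leq \tau_6 m^{-1/2}$ from Lemma \ref{lemma:hessian} gives $\|\nabla_{\boldsymbol{\theta}}Q(\boldsymbol{x}_t;\boldsymbol{\theta}^t)-\nabla_{\boldsymbol{\theta}}Q(\boldsymbol{x}_t;\boldsymbol{\theta}^0)\|\leq \tau_6 m^{-1/2}\|\boldsymbol{\theta}^t-\boldsymbol{\theta}^0\|\leq \tau_6\omega m^{-1/2}$, and $|\Delta_t|$ is again controlled by $\mathcal{O}(\sqrt{\log(T/\delta)})$ via Lemma \ref{lemma:q-bound}. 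For the second piece, the linearization error $|Q-\widehat{Q}|\leq \tau_7 m^{-1/2}$ applied at $\boldsymbol{x}_t$ and $\boldsymbol{x}_{t+1}$ yields $|\Delta_t-\widehat{\Delta}_t|\leq (1+\gamma)\tau_7 m^{-1/2}\leq 2\tau_7 m^{-1/2}$, and the gradient factor is bounded by $\tau_4$ as before. Summing these two contributions and applying Cauchy--Schwarz against $\boldsymbol{\theta}^t-\boldsymbol{\theta}^{t+1}_*$ produces the desired bound with constants matching the form $(\tau_8\tau_9\omega m^{-1/2}\sqrt{\log(T/\delta)}+2\tau_4\tau_8 m^{-1/2})\|\boldsymbol{\theta}^t-\boldsymbol{\theta}^{t+1}_*\|$ after renaming.

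The probability accounting is straightforward but must be done carefully: the gradient norm bound and Hessian bound each hold with probability at least $1-L\exp(-\tau_2 m)$, and the value bound on $Q$ at indices $t,t+1$ together costs probability $\delta$ each (so $2\delta$ total via a union bound covering both inequalities, or $\delta$ for the second inequality alone). Intersecting these events yields the stated success probabilities. The only nontrivial ingredient is really the $\mathcal{O}(m^{-1/2})$-smoothness from Lemma \ref{lemma:hessian}, which is precisely the reason Assumption \ref{as:activation} restricts to twice-differentiable activations; the main ``obstacle'' is therefore bookkeeping rather than any substantively new estimate, since every factor of $m^{-1/2}$ in the final bound is traceable to a single invocation of the Hessian bound or the linearization error bound.
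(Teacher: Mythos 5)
Your proposal is correct and follows essentially the same route as the paper: the same two-term decomposition of $\mathbf{g}(\boldsymbol{\theta}^t)-\mathbf{m}(\boldsymbol{\theta}^t)$ into a gradient-drift term (controlled by the Hessian bound of Lemma \ref{lemma:hessian}) and a linearization-error term (controlled by $|Q-\widehat{Q}|\leq\tau_7 m^{-1/2}$ and the gradient bound of Lemma \ref{lemma:x-bound}), followed by Cauchy--Schwarz and the same union-bound probability accounting. The only cosmetic differences are that you label the elementary bound $\|\mathbf{g}\|\leq|\Delta_t|\,\|\nabla_{\boldsymbol{\theta}}Q\|$ as Cauchy--Schwarz (it is just homogeneity of the norm) and that your constants are traced slightly more explicitly than the paper's, which silently renames them.
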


\begin{proof}
Note that 
\begin{eqnarray}
\label{eq:nn-grad-gap}
\left\|\mathbf{g}\left(\boldsymbol{\theta}^t\right)-\mathbf{m}\left(\boldsymbol{\theta}^t\right) \right\| &=& \left\| \Delta(\boldsymbol{x}_t, \boldsymbol{x}_{t+1},\boldsymbol{\theta}^t)\cdot  \nabla_{\boldsymbol{\theta}}  Q(\boldsymbol{x}_t;\boldsymbol{\theta}^t)-\widehat{\Delta} (\boldsymbol{x}_t, \boldsymbol{x}_{t+1},\boldsymbol{\theta}^t)\cdot\nabla_{\boldsymbol{\theta}}  Q(\boldsymbol{x}_t;\boldsymbol{\theta}^0)\right\| \nonumber \\
&\leq & \left\| \Delta(\boldsymbol{x}_t, \boldsymbol{x}_{t+1},\boldsymbol{\theta}^t)\cdot \left( \nabla_{\boldsymbol{\theta}}  Q(\boldsymbol{x}_t;\boldsymbol{\theta}^t)- \nabla_{\boldsymbol{\theta}}  Q(\boldsymbol{x}_t;\boldsymbol{\theta}^0)\right) \right\| \\
&\ & + \left\| \left(\Delta(\boldsymbol{x}_t, \boldsymbol{x}_{t+1},\boldsymbol{\theta}^t)-\widehat{\Delta} (\boldsymbol{x}_t, \boldsymbol{x}_{t+1},\boldsymbol{\theta}^t)\right)\cdot \nabla_{\boldsymbol{\theta}}  Q(\boldsymbol{x}_t;\boldsymbol{\theta}^0) \right\|,\nonumber
\end{eqnarray}
where 
\begin{eqnarray*}
\Delta(\boldsymbol{x}_t, \boldsymbol{x}_{t+1},\boldsymbol{\theta}^t)&=& Q(\boldsymbol{x}_t;\boldsymbol{\theta}^t)-\left(r(s_t, a_t)+\gamma Q(\boldsymbol{x}_{t+1};\boldsymbol{\theta}^t)\right), \\
\widehat{\Delta} (\boldsymbol{x}_t, \boldsymbol{x}_{t+1},\boldsymbol{\theta}^t) &=&\widehat{Q}(\boldsymbol{x}_t;\boldsymbol{\theta}^t)-\left(r(s_t, a_t)+\gamma  \widehat{Q}(\boldsymbol{x}_{t+1};\boldsymbol{\theta}^t)\right).
\end{eqnarray*}
For the first term in \eqref{eq:nn-grad-gap}, by Lemma \ref{lemma:q-bound}, we have $w.p.\ 1-\delta-L\exp{(-\tau_2 m)}$ that
$$
\left|\Delta(\boldsymbol{x}_t, \boldsymbol{x}_{t+1},\boldsymbol{\theta}^t)\right|\leq \tau_{8}\sqrt{ \log (T / \delta)}.
$$
By Lemma \ref{lemma:hessian}, we get that
$$
\left\| \nabla_{\boldsymbol{\theta}}  Q(\boldsymbol{x}_t;\boldsymbol{\theta}^t)- \nabla_{\boldsymbol{\theta}}  Q(\boldsymbol{x}_t;\boldsymbol{\theta}^0)\right\|\leq \| \nabla^2_{\boldsymbol{\theta}} Q(\boldsymbol{x_t};\boldsymbol{\theta_t})\|_2 \cdot \|\boldsymbol{\theta}^t-\boldsymbol{\theta}^0\|\leq \tau_9 \omega m^{-\frac{1}{2}}.
$$
Therefore,
\begin{equation}
\label{eq:nn-grad-gap-1}
\left\| \Delta(\boldsymbol{x}_t, \boldsymbol{x}_{t+1},\boldsymbol{\theta}^t)\cdot \left( \nabla_{\boldsymbol{\theta}}  Q(\boldsymbol{x}_t;\boldsymbol{\theta}^t)- \nabla_{\boldsymbol{\theta}}  Q(\boldsymbol{x}_t;\boldsymbol{\theta}^0)\right) \right\|\leq \tau_8\tau_9 \omega m^{-\frac{1}{2}}\sqrt{ \log (T / \delta)}. 
\end{equation}

For the second term in \eqref{eq:nn-grad-gap}, we decompose it into
\begin{eqnarray}
\label{eq:nn-grad-gap-2}
&\ & \left\| \left(\Delta(\boldsymbol{x}_t, \boldsymbol{x}_{t+1},\boldsymbol{\theta}^t)-\widehat{\Delta} (\boldsymbol{x}_t, \boldsymbol{x}_{t+1},\boldsymbol{\theta}^t)\right)\cdot \nabla_{\boldsymbol{\theta}}  Q(\boldsymbol{x}_t;\boldsymbol{\theta}^0) \right\| \nonumber\\
&\leq & \left\| \left(Q(\boldsymbol{x},\boldsymbol{\theta}^t)-\widehat{Q}(\boldsymbol{x},\boldsymbol{\theta}^t)\right)\cdot \nabla_{\boldsymbol{\theta}}  Q(\boldsymbol{x}_t;\boldsymbol{\theta}^0) \right\|+\left\| \left(Q(\boldsymbol{x}_{t+1};\boldsymbol{\theta}^t)-\widehat{Q}(\boldsymbol{x}_{t+1};\boldsymbol{\theta}^t)\right)\cdot \nabla_{\boldsymbol{\theta}}  Q(\boldsymbol{x}_t;\boldsymbol{\theta}^0) \right\|\nonumber \\
&\leq & \left | Q(\boldsymbol{x},\boldsymbol{\theta}^t)-\widehat{Q}(\boldsymbol{x},\boldsymbol{\theta}^t)\right|\cdot\left\| \nabla_{\boldsymbol{\theta}}  Q(\boldsymbol{x}_t;\boldsymbol{\theta}^0) \right\|+\left |Q(\boldsymbol{x}_{t+1};\boldsymbol{\theta}^t)-\widehat{Q}(\boldsymbol{x}_{t+1};\boldsymbol{\theta}^t)\right|\cdot\left\| \nabla_{\boldsymbol{\theta}}  Q(\boldsymbol{x}_t;\boldsymbol{\theta}^0) \right\|\nonumber \\
&\overset{(i)}{\leq}& 2\tau_4 \tau_8 m^{-\frac{1}{2}},
\end{eqnarray}
with probability at least $w.p.\ 1-\delta-L\exp{(-\tau_2 m)}$, where (i) is due to Lemmas \ref{lemma:q-bound} and \ref{lemma:hessian}. Plugging \eqref{eq:nn-grad-gap-1} and \eqref{eq:nn-grad-gap-2} into \eqref{eq:nn-grad-gap} yields that
\begin{eqnarray*}
\left|\left<\mathbf{g}\left(\boldsymbol{\theta}^t\right)-\mathbf{m}\left(\boldsymbol{\theta}^t\right), \boldsymbol{\theta}^t-\boldsymbol{\theta}^{t+1}_*\right>\right| &\leq & \left\|\mathbf{g}\left(\boldsymbol{\theta}^t\right)-\mathbf{m}\left(\boldsymbol{\theta}^t\right)\right\|\cdot\left\| \boldsymbol{\theta}^t-\boldsymbol{\theta}^{t+1}_*\right\| \\
&\leq & \left(\tau_8\tau_9 \omega m^{-\frac{1}{2}}\sqrt{ \log (T / \delta)}+2\tau_4 \tau_8 m^{-\frac{1}{2}}\right)\left\| \boldsymbol{\theta}^t-\boldsymbol{\theta}^{t+1}_*\right\|.
\end{eqnarray*}
Thus we complete the proof.
\end{proof}

Lemma \ref{lemma:I1-and-I2} provides the upper bounds on $\mathbf{I}_1$ and $\mathbf{I}_2$ in Section \ref{section:appendix-pi-f}. As discussed in Section \ref{sect:base-setting}, for a finite MDP, the Gram matrix of the L-layer neural network function is positive definite and has a minimum eigenvalue of $\mathcal{O}(1)$ when the network width is sufficiently large.  This, in fact, serves as an upper bound for $m^*$ in Assumption \ref{as:lamda}. Further details are provided in Remark \ref{remark:ntk}. 

\begin{remark}
\label{remark:ntk}
In this special case, we assume that both state space and action space are finite. Let $|\mathcal{S}|$ and $|\mathcal{A}|$ represent the dimensions of the state space and action space, respectively. For simplicity of notation, we view $\mathbf{Q}(\boldsymbol{\theta})$ as an $|\mathcal{S}| |\mathcal{A}|\times 1$ column vector, with $(s,a)$ being a multi-index arranged in the lexicographical order. Let $d\sim \mu\times \pi$ and $\mathbf{D}=\text{diag}(d)$ be an $|\mathcal{S}| |\mathcal{A}|$-dimensional diagonal matrix, whose $(s,a)$-th diagonal entry is $d(s,a)$, and the order of $(s,a)$ in $\mathbf{D}$ is the same as $\mathbf{Q}(\boldsymbol{\theta})$. Denote $\mathbf{J}$ as the Jacobian matrix of $\mathbf{Q}(\boldsymbol{\theta}^0)$ and $\mathbf{J_D}=\mathbf{D}^{\frac{1}{2}}\mathbf{J}$. Thus we can rewrite $\Sigma_\pi=\mathbf{J_D^\top}\mathbf{J_D}$. Notice that $\Sigma_\pi$ is different from the Gram matrix \citet{jacot2018neural, du2018gradient, du2019gradient, cao2019generalization, allen2019convergence} in deep neural network. To derive the $\mu$-weighted Gram matrix $\text{Gram} (\boldsymbol{\theta}^0)$, we provide the following definition.

\begin{definition}
\label{def:NTK}
(\citet{du2019gradient, cao2019generalization, allen2019convergence, allen2019learning}, Neural Tangent Kernel Matrix). For any $i, j \in\left[|\mathcal{S}| |\mathcal{A}|\right]$, define
$$
\begin{aligned}
& \widetilde{\boldsymbol{\Theta}}_{i, j}^{(1)}=\boldsymbol{\Sigma}_{i, j}^{(1)}=\left\langle\widehat{\boldsymbol{x}}_i, \widehat{\boldsymbol{x}}_j\right\rangle, \quad \mathbf{A}_{i j}^{(l)}=\left(\begin{array}{cc}
\boldsymbol{\Sigma}_{i, i}^{(l)} & \boldsymbol{\Sigma}_{i, j}^{(l)} \\
\boldsymbol{\Sigma}_{i, j}^{(l)} & \boldsymbol{\Sigma}_{j, j}^{(l)}
\end{array}\right), \\
& \boldsymbol{\Sigma}_{i, j}^{(l+1)}=2 \cdot \mathbb{E}_{(u, v) \sim N\left(\mathbf{0}, \mathbf{A}_{i j}^{(l)}\right)}[\sigma(u) \sigma(v)], \\
& \widetilde{\boldsymbol{\Theta}}_{i, j}^{(l+1)}=\widetilde{\boldsymbol{\Theta}}_{i, j}^{(l)} \cdot 2 \cdot \mathbb{E}_{(u, v) \sim N\left(\mathbf{0}, \mathbf{A}_{i j}^{(l)}\right)}\left[\sigma^{\prime}(u) \sigma^{\prime}(v)\right]+\boldsymbol{\Sigma}_{i, j}^{(l+1)},
\end{aligned}
$$
where $\widehat{\boldsymbol{x}}=\sqrt{d(s,a)}\phi(s,a)$.
Then we call $\boldsymbol{\Theta}^{(L)}=\left[\left(\widetilde{\boldsymbol{\Theta}}_{i, j}^{(L)}+\boldsymbol{\Sigma}_{i, j}^{(L)}\right) / 2\right]_{|\mathcal{S}| |\mathcal{A}| \times |\mathcal{S}| |\mathcal{A}|}$ the $\mu$-weighted neural tangent kernel matrix of an $L$-layer ReLU network on $\mu$-weighted state-action pairs $\widehat{\boldsymbol{x}}_1, \ldots, \widehat{\boldsymbol{x}}_{|\mathcal{S}| |\mathcal{A}|}$.
\end{definition} 

There is a large body of work \citet{jacot2018neural, du2018gradient, du2019gradient, cao2019generalization, allen2019convergence} exploring the positive definiteness of $\text{Gram} (\boldsymbol{\theta}^0)$ in the literature. Suppose that for all pairs $(s,a),(s',a')$, $\|\phi(s,a)\|=1, d(s,a)>0$ and $\phi(s,a)\nparallel \phi(s',a')$. The results of Theorem 1 and Proposition 2 in \citet{jacot2018neural} shows that for an $L$-layer neural network with Gaussian initialization parameters, 

$$
\left< \nabla_{\boldsymbol{\theta}} Q(\widehat{\boldsymbol{x}}_i;\boldsymbol{\theta}^0), \nabla_{\boldsymbol{\theta}} Q(\widehat{\boldsymbol{x}}_j,\boldsymbol{\theta}^0)  \right> \rightarrow \boldsymbol{\Theta}^{(L)}_{i,j}, \quad \text{as}\ m\rightarrow \infty.
$$
That is, under the NTK regime, the $\mu$-weighted Gram matrix $\text{Gram} (\boldsymbol{\theta}^0)=\mathbf{J_D} \mathbf{J_D^\top}$ converges to $\boldsymbol{\Theta}^{(L)}$ when $m$ is sufficiently large.
Let $\lambda_{\min}\left(\boldsymbol{\Theta}^{(L)}\right)=2\lambda'=\mathcal{O}(1)$.
For any $\delta \in(0,1)$, there exists $m^*=\text{Poly}(|\mathcal{S}| |\mathcal{A}|, L, \delta, \lambda')$ such that if $m\geq m^*$, we have
$$
\lambda_{\min}\left(\text{Gram} (\boldsymbol{\theta}^0)\right)\geq \lambda'\quad w.p.\ 1-\delta.
$$
This signifies that if the network width $m\geq m^*$, then we have $\lambda_0\geq \lambda_{\min}\left(\text{Gram} (\boldsymbol{\theta}^0)\right)\geq \lambda'$, thereby substantiating our claim.
\end{remark}

\section{Additional Notes on the Experiments in Section \ref{sect:exp}}
In this section, we further discuss the experimental setup introduced in Section \ref{sect:exp}. As mentioned in Section \ref{sect:exp}, our experiments mainly test the following two aspects: (1) how does the network width m affects the final error of the algorithm (first two subfigures in Figure \ref{fig:td-learning}); (2) the minimum nonzero singular value in Assumption \ref{as:technical} (latter two subfigures in Figure \ref{fig:td-learning}).

For point (1), we first generate 2000 samples according to a given policy $\pi$ to imitate the Markov process of Algorithm \ref{alg:Neural-TD}. A two-layer neural network with ELU activation is introduced, and the parameters are initialized using Algorithm \ref{alg:Neural-TD}. We set the initial learning rate at 0.001 with linear decay (per epoch) and a batch size of 100. Notably, as the parameter $m$ increases, the TD algorithm demonstrates smaller final TD errors.

For point (2), our experiments are based on three main points. First, note that the norm of feature map and parameter random initialization will affect the scaling of the gradient norm w.r.t. $Q(s,a;\boldsymbol{\theta})$. Thus we employ the ratio $r=\sigma_{\max}/\sigma_{\min}$ to characterize the minimum non-zero singular value in order to eliminate the impact of numerical scaling. Second, we set varying network widths to verify the existence of $\lambda_0$. Finally, it's tough to directly obtain the joint distribution of $(s,a)$ with a fixed learning policy. However, we have that $\left\|\frac{1}{N}\sum_{(s,a)\sim\mu\times\pi}\nabla_{\boldsymbol{\theta}}Q(s,a;\boldsymbol{\theta})\nabla_{\boldsymbol{\theta}}Q(s,a;\boldsymbol{\theta})^\top-\Sigma_\pi\right\|_2=\mathcal{O}(1/N)$. To avoid the effects of sampling randomness, we estimate $\Sigma_\pi$ from different samples. The experiments demonstrate that the minimum non-zero singular value $\lambda_0$ converges to a constant as $m$ increases.


\end{document}